\def\BibTeX{{\rm B\kern-.05em{\sc i\kern-.025em b}\kern-.08em
    T\kern-.1667em\lower.7ex\hbox{E}\kern-.125emX}}
\newcommand{\lb}{\left(}
\newcommand{\rb}{\right)}
\newtheorem{theorem}{{Theorem}}
\newtheorem{lemma}{{Lemma}}
\newcommand{\norm}[1]{\Vert#1\Vert}
\newcommand{\diag}{\mathrm{diag}}
\newcommand{\M}{\mathcal{M}}
\newcommand{\N}{\mathcal{N}}
\newcommand{\B}{\mathcal{B}}
\newcommand{\J}{\mathcal{J}}
\newcommand{\Real}{\mathbb{R}}
\newcommand{\ls}{\left[}
\newcommand{\rs}{\right]}
\newcommand{\lc}{\left\{}
\newcommand{\rc}{\right\}}
\newcommand{\veca}{\mathbf{a}}
\newcommand{\vecc}{\mathbf{c}}
\newcommand{\vecx}{\mathbf{x}}
\newcommand{\vecz}{\mathbf{z}}
\newcommand{\vecy}{\mathbf{y}}
\newcommand{\vecu}{\mathbf{u}}
\newcommand{\vecb}{\mathbf{b}}
\newcommand{\vecw}{\mathbf{w}}
\newcommand{\matA}{\mathbf{A}}
\newcommand{\matB}{\mathbf{B}}
\newcommand{\matI}{\mathbf{I}}
\newcommand{\matC}{\mathbf{C}}
\newcommand{\matE}{\mathbf{E}}
\newcommand{\matG}{\mathbf{G}}
\newcommand{\matX}{\mathbf{X}}
\newcommand{\matY}{\mathbf{Y}}
\newcommand{\matPhi}{\mathbf{\Phi}}
\newcommand{\matSigma}{\mathbf{\Sigma}}
\newcommand{\matGamma}{\mathbf{\Gamma}}
\newcommand{\vgamma}{\boldsymbol{\gamma}}
\newcommand{\vmu}{\boldsymbol\mu}
\newcommand{\vlambda}{\boldsymbol{\lambda}}
\newcommand{\vlambdajb}{\vlambda_{j}^{b}}
\newcommand{\vgammab}{\vgamma_{b}}
\newcommand{\vgammaj}{\vgamma_{j}}
\newcommand{\vecxall}{\vecx_{1}, \vecx_{2}, \dots, \vecx_{L}}
\begin{document}

\title{Decentralized Joint-Sparse Signal Recovery: A Sparse Bayesian Learning Approach \thanks{This work has appeared in part in \cite{Khanna14CBDSBL}.}}
\author{\authorblockN{Saurabh Khanna, \emph{Student Member, IEEE} and Chandra R. Murthy, \emph{Senior Member, IEEE}}\\
\authorblockA{\begin{tabular}{cc}
Dept. of ECE, Indian Institute of Science \\
Bangalore, India \\
\{sakhanna,~cmurthy\}@ece.iisc.ernet.in \\
\end{tabular}}
\vspace{-8mm}
}


\maketitle

\begin{abstract}
This work proposes a decentralized, iterative, Bayesian algorithm called CB-DSBL for in-network estimation of multiple jointly sparse vectors by a network of nodes,
using noisy and underdetermined linear measurements. The proposed algorithm exploits the network wide joint sparsity of the unknown sparse vectors to recover them
from significantly fewer number of local measurements compared to standalone sparse signal recovery schemes. To reduce the amount of inter-node communication and the associated overheads, the nodes exchange messages with only a small subset of their single hop neighbors.  Under this communication scheme,
we separately analyze the convergence of the underlying Alternating Directions Method of Multipliers (ADMM) iterations used in our proposed algorithm and establish 
its linear convergence rate. The findings from the convergence analysis of decentralized ADMM are used to accelerate the convergence 
of the proposed CB-DSBL algorithm. Using Monte Carlo simulations, we demonstrate the superior signal reconstruction as well as support recovery performance of 
our proposed algorithm compared to existing decentralized algorithms: DRL-1, DCOMP and DCSP.

\end{abstract}
\begin{keywords}
Decentralized Estimation, Distributed Compressive Sensing, Joint Sparsity, Sparse Bayesian Learning, Sensor Networks.
\end{keywords}

\section{Introduction}
We consider the problem of in-network estimation of multiple joint-sparse vectors by a network of
 connected agents or processing nodes, using noisy and underdetermined linear measurements. Two or more vectors 
 in $\Real^{n}$ are called \emph{joint-sparse} if, in addition to each vector being individually sparse,\footnote{
 A vector in $\Real^{n}$ is said to be $k$-sparse if only $k (\ll n)$ out of its $n$ coefficients are nonzero.} 
 their nonzero coefficients belong to a common index set.
Joint sparsity occurs naturally in scenarios involving multiple agents trying to learn a sparse representation 
of a common physical phenomenon. Since the underlying physical phenomenon is the same for all the agents 
(with similar acquisition modalities), their individual sparse representations/model parameters tend to exhibit 
joint sparsity. In this work, we consider joint-sparse vectors which belong to Type-2 Joint Sparse Model \cite{Duarte05DCS} or JSM-2, 
one of the three generative models for joint-sparse signals. JSM-2 signal vectors satisfy the property that their 
nonzero coefficients are uncorrelated within and across the vectors. 
JSM-2 has been successfully used in several applications such as cooperative spectrum sensing 
\cite{Makhzani13DCSAMP, Tian11DistSpectrumSensing}, decentralized event detection \cite{Ling_13_jsm_lqnorm, Nasrabadi11MTMV},
multi-task compressive sensing \cite{Shihao09MultiTaskCS} and MIMO channel estimation\cite{Ranjitha15SBLChEst, Masood15SparseMIMOChEst, Barbotin12MIMOJSM}.

To further motivate the signal structure of joint sparsity in a distributed setup, consider the problem of detection/classification of randomly occurring events
in a field by multiple sensor nodes. Each sensor node $j$, $1 \le j \le L$, employs a dictionary $\Psi_{j} = [\psi_{j}^{1}; \psi_{j}^{2} \dots \psi_{j}^{c}]$,
whose each column $\psi_{j}^{i}$ is the signature corresponding to the $i^{\text{th}}$ event, one out of the $c$ events which can potentially occur. 
In many cases, due to the inability to accurately model the sensing process, the signature vectors $\psi_{j}^{i}$ are simply chosen to be the past recordings
of $j^{\text{th}}$ sensor corresponding to standalone occurrence of the $i^{\text{th}}$ event, averaged across multiple experiments \cite{Nasrabadi11MTMV}. This 
procedure can result in a dictionary whose columns are highly correlated. Thus, for any $k$ $(\ll c)$ events occurring 
simultaneously, a noisy sensor recording might belong to multiple subspaces, each spanned by different subsets of 
columns of the local dictionary. In such a scenario, enforcing joint sparsity across the sensor nodes can resolve 
the ambiguity in selecting the correct subset of columns at each sensor node.

In this work, we consider a distributed setup where each individual joint-sparse vector is estimated
by a distinct node in a network comprising multiple nodes, with each node having access to noisy and underdetermined
linear measurements of its local sparse vector. By collaborating with each other, these nodes can exploit the underlying joint sparsity of their local sparse vectors to reduce the measurements required per node or improve the quality of their local signal estimates.
In \cite{Duarte05DCS}, it has been shown that the number of local measurements required for common support recovery can be dramatically
reduced by exploiting the joint sparsity structure prevalent across the network. In fact, as the nodes increase in number,
exact signal reconstruction is possible from as few as $k$ measurements per node, where $k$ denotes the size of the
support set. Such a substantial reduction in the number of measurements is highly desirable, especially in applications where
the cost or time required to acquire new measurements is high. 

Distributed algorithms for JSM-2 signal recovery come in two flavors - centralized and decentralized. In the centralized approach, each
node transmits its local measurements to a fusion center (FC) which runs a joint-sparse signal recovery algorithm. The FC then transmits the reconstructed
sparse signal estimates back to their respective nodes. In contrast, in a decentralized approach, the goal is to obtain the same solution as with the centralized scheme
at all nodes by allowing each node to exchange information with its single hop neighbors in addition to processing its local measurements. Besides being inherently
robust to node failures, decentralized schemes also tend to be more energy efficient as the inter-node communication is restricted to relatively short ranges
covering only one hop communication links. In this work, we focus on the decentralized approach for solving the sparse signal recovery problem under
the JSM-2 signal model. 

\subsection{Related Work} \label{sec:related_work}
In this subsection, we briefly summarize the existing centralized and decentralized algorithms for JSM-2 signal recovery. The earliest work on joint-sparse signal 
recovery considered extensions of recovery algorithms meant for single measurement vector setup to the centralized multiple measurement vector (MMV) model \cite{Cotter_05_mmv}, and demonstrated the significant performance gains that are achievable by exploiting the joint sparsity structure. MMV Basic Matching Pursuit (M-BMP), MMV Orthogonal Matching Pursuit (M-OMP) and MMV FOcal Underdetermined System Solver (M-FOCUSS), introduced in \cite{Cotter_05_mmv}, belong to this category.
In \cite{Duarte09DCS_main}, joint sparsity was exploited
for distributed encoding of multiple sparse signals. This work generalized the joint-sparse signals as being generated according to one of the three joint-sparse
signal models (JSM-1,2,3). This work also proposed a centralized greedy algorithm called Simultaneous Orthogonal Matching Pursuit (SOMP) \cite{Duarte05DCS} for
JSM-2 recovery. In \cite{Lu11AdmmMmv}, Alternating Directions Method for MMV setup (ADM-MMV) was proposed which used an $\ell_{2} / \ell_{1}$ mixed norm penalty 
to promote a joint-sparse solution. 
In \cite{Wipf_07_msbl}, the multiple response sparse Bayesian learning (M-SBL) algorithm was proposed as an MMV extension of the SBL algorithm \cite{Wipf_04_sbl}.
Unlike the algorithms discussed earlier, M-SBL adopts a probabilistic approach by seeking the maximum a posterior probability (MAP) estimate of the JSM-2 signals.
In M-SBL, a joint-sparse solution is encouraged by assuming a joint sparsity inducing parameterized prior on the unknown sparse vectors, with the prior parameters learnt directly from the measurements. 
M-SBL has been shown to outperform deterministic methods based on $\ell_{0}$ norm relaxation such as M-BMP and M-FOCUSS \cite{Cotter_05_mmv} as well as greedy algorithms such as SOMP. 
AMP-MMV \cite{Ziniel13AMPMMV} is another Bayesian algorithm which uses approximate message passing (AMP) to obtain marginalized conditional posterior
distributions of joint-sparse signals. Owing to their low computational complexity, AMP based algorithms are suitable for recovering signals with large dimensions.
However, they have been shown to converge only for large dimensional and randomly constructed measurement matrices. Interested readers are referred to
\cite{Rakotomamonjy_survey} for an excellent study comparing some of the aforementioned centralized JSM-2 signal recovery algorithms.

Among decentralized algorithms, collaborative orthogonal matching pursuit (DCOMP) \cite{Wimalajeewa13DCOMP} and 
collaborative subspace pursuit (DCSP) \cite{Varshney14DCSP} are greedy algorithms for JSM2 signal recovery, and both are computationally very fast. 
However, as demonstrated later in this paper, they do not perform as well as regularization based methods which induce joint sparsity in
their solution by employing a suitable penalty or indirectly via a joint signal prior. Moreover, both DCOMP and DCSP assume a priori knowledge of the size of the nonzero support set, which could be unknown
or hard to estimate. Decentralized row-based LASSO (DR-LASSO) \cite{LingT11DecentralizedSuppportDetect} is an iterative alternating minimization algorithm which
optimizes a non-convex objective with $\ell_{1}$-$\ell_{2}$ mixed norm based regularization to obtain a joint-sparse solution. 
Decentralized re-weighted $\ell_{1}(\ell_{2})$ minimization algorithms DRL-1,2 \cite{Ling_13_jsm_lqnorm} employ a non-convex sum-log-sum penalty to promote
a joint-sparse solution. Although non-convex regularizers induce sparsity much more strongly
as compared to convex $\ell_{1}$ norm based regularizers \cite{Bach12SparsityInducingPenalties}, the resulting non-convex optimization 
can be difficult to solve efficiently. In DRL-1/2, the non-convex objective is replaced by a surrogate convex function constructed from iteration dependent weighted
$\ell_{1}$/$\ell_{2}$ norm terms. Using a non-convex sum-log-sum regularization results in a more sparse solution compared to convex regularization used
in DR-LASSO. However, both DR-LASSO and DRL-1,2 necessitate cross validation to tune the amount of regularization needed for optimal support recovery performance.
DRL-1,2 also requires proper tuning of a so-called smoothing parameter and an ADMM parameter for its optimal performance. By employing a Bayesian approach,we can completely eliminate any need for cross validation, by learning the parameters of a family of signal priors, such that selected signal prior has maximum Bayesian
evidence. DCS-AMP \cite{Makhzani13DCSAMP} is one such decentralized algorithm which employs approximate message passing to learn a parameterized joint sparsity inducing
Bernoulli-Gaussian signal prior. Turbo Bayesian Compressive Sensing (Turbo-BCS) \cite{Yang10TurboBCS}, another decentralized algorithm, adopts a more relaxed zero mean
Gaussian signal prior, with the variance hyperparameters themselves distributed according to an exponential distribution. 
This relaxation of signal prior results in improved MSE without compromising on sparsity of the solution. Turbo-BCS, however, involves direct
exchange of signal estimates between the nodes, which renders it unsuitable for applications where it is necessary to preserve the privacy of the local signals.

\begin{table*}[bt]
\scriptsize
\begin{center}
\caption{Comparison of Decentralized Joint-Sparse Signal Recovery Algorithms}
\label{tab:table_jss_algo_compare}
\begin{tabular*}{0.85\textwidth}{@{}l | c | c | c | c | c@{}}
\toprule
  \multicolumn{1}{p{1.5cm}|}{\centering \textbf{Decentralized algorithm}} 
& \multicolumn{1}{p{3.5cm}|}{\centering \textbf{Per node, per iteration computational complexity}} 
& \multicolumn{1}{p{2cm}|} {\centering \textbf{Per node, per iteration communication complexity}}
& \multicolumn{1}{p{1.5cm}|} {\centering \textbf{Privacy of local signal estimates}}
& \multicolumn{1}{p{1.4cm}|} {\centering \textbf{Tunable parameters \\ (if any)}}
& \multicolumn{1}{p{2cm}} {\centering \textbf{Assumes \\ a priori knowledge of sparsity level}}
\\ \hline
\midrule
$\hspace{0.1cm}$ DCSP \cite{Varshney14DCSP}                & $ \mathcal{O}(m n + \zeta n + k \log{n} + m^2)$                 & $\mathcal{O}(\zeta n + k \log{n})$     & Yes & None & \hspace{-0.8cm} Yes\\
$\hspace{0.1cm}$ DCOMP \cite{Wimalajeewa13DCOMP}           & $ \mathcal{O}(n \zeta + L)$                                     & $ \mathcal{O}(\zeta n + L)$ & Yes & None & \hspace{-0.8cm} Yes \\
$\hspace{0.1cm}$ DRL-1 \cite{Ling_13_jsm_lqnorm}           & $ \mathcal{O}((n^2 + m^3 + nm^2)r_{\text{max}} + \zeta n)$   & $ \mathcal{O}(\zeta n)$     &  Yes &  Yes & \hspace{-0.8cm} No \\
$\hspace{0.1cm}$ DR-LASSO \cite{LingT11DecentralizedSuppportDetect} & $ \mathcal{O}(n^{2} m T_{1} + \zeta n T_{2})$ 
                                                                                                                             & $\mathcal{O}(\zeta n  T_{2})$     & Yes & Yes & \hspace{-0.8cm} No\\
$\hspace{0.1cm}$ Turbo-BCS \cite{Yang10TurboBCS}           & $\mathcal{O}(n^3 + nL + nk^{2} + k^{3} + mk)$                                                             & $ \mathcal{O}(kL)$     & No & None & \hspace{-0.8cm} No\\   
$\hspace{0.1cm}$ DCS-AMP \cite{Makhzani13DCSAMP}           & $ \mathcal{O}(mn + \zeta n + c_{1}n)$                           & $ \mathcal{O}(\zeta n)$    & Yes & Yes & \hspace{-0.8cm} No\\ 
$\hspace{0.1cm}$ CB-DSBL (proposed)                        & $ \mathcal{O}(n^2 + m^3 + nm^2 + \zeta n r_{\text{max}})$       & $ \mathcal{O}(\zeta n r_{\text{max}})$     & Yes & None & \hspace{-0.8cm} No \\
\hline 
\bottomrule
\multicolumn{6}{l}{\scriptsize{1.  $n, m, k$ and $L$  stand for the dimension of unknown sparse vector, number of local measurements per node, number of nonzero coefficients}}\\
\multicolumn{6}{l}{\scriptsize{  in the true support and network size, respectively.}}\\
\multicolumn{6}{l}{\scriptsize{2. $\zeta$ is the maximum number of communication links activated per node, per communication round.}}\\
\multicolumn{6}{l}{\scriptsize{3. $r_{\text{max}}$ is the number of inner loop ADMM iterations executed per CB-DSBL iteration.}}\\
\multicolumn{6}{l}{\scriptsize{4. $r_{\text{max}}$ is also the number of ADMM iterations used to obtain an inexact solution to the weighted $\ell_{1}$ norm based subproblem}} \\
\multicolumn{6}{l}{\scriptsize{   in the inner loop of DRL-1. }}\\
\multicolumn{6}{l}{\scriptsize{5. $T_{1}$ and $T_{2}$ denote the number of iterations of the two different inner loop iterations executed per DR-LASSO iteration.}}
\end{tabular*}
\end{center}
\end{table*}

\subsection{Contributions}
Our main contributions in this work are as follows:
\begin{enumerate}
\item We propose a novel decentralized, iterative, Bayesian joint-sparse signal recovery algorithm called \emph{Consensus Based Distributed Sparse Bayesian Learning}
or CB-DSBL. CB-DSBL works by establishing network wide consensus with respect to the estimated parameters of a joint sparsity inducing signal prior. 
The learnt signal prior is subsequently used by the individual nodes to obtain MAP estimates of local sparse signal vectors by the individual nodes. 
The proposed CB-DSBL algorithm does not require direct exchange of either local measurements or signal estimates between the nodes and hence is well
suited for applications where it is important to preserve the privacy of the local signal coefficients. 
\item The proposed algorithm employs the Alternating Directions Method of Multipliers (ADMM) to solve a series of iteration dependent consensus optimization problems
which require the nodes to exchange messages with each other. To reduce the associated communication overheads, we adopt a bandwidth efficient inter-node communication
scheme. This scheme entails the nodes exchanging messages with only a predesignated subset of its single hop neighbors known as \emph{bridge nodes}, as motivated
in \cite{Giannakis_08_NoisyLinks}. By selecting these \emph{bridge nodes}, one can trade off between communication bandwidth requirements and the ADMM's 
robustness to node failures. In this connection, we analytically establish the relationship between the selected set of bridge nodes and the convergence rate of the 
ADMM iterations. For the bridge-node based inter-node communication scheme, we show linear rate of convergence for the ADMM iterations when applied to a generic consensus optimization problem. 
The analysis is useful in obtaining a closed form expression for the tunable parameter of our proposed joint sparse signal recovery algorithm, ensuring its fast convergence.
\item We empirically demonstrate the superior MSE and support recovery performance of CB-DSBL in comparison to existing decentralized algorithms: DRL-1, DCOMP 
and DCSP.  
\end{enumerate}
In Table \ref{tab:table_jss_algo_compare}, we compare the existing decentralized joint-sparse signal recovery schemes with respect to their per iteration computational and communication complexity, privacy of local estimates,
presence/absence of tunable parameters and dependence on prior knowledge of the sparsity level. 
As highlighted in the comparison in Table \ref{tab:table_jss_algo_compare}, CB-DSBL belongs to a handful of decentralized algorithms for joint-sparse signal recovery which do not require a priori knowledge of 
the sparsity level, rely only on single hop communication, and do not involve direct exchange of local signal estimates between network nodes. 
Besides this, unlike loopy Belief Propagation (BP) or Approximate Message Passing (AMP) based Bayesian algorithms, CB-DSBL does not suffer from any convergence
issues even when the local measurement matrix at each node is dense or not randomly constructed. 

The rest of this paper is organized as follows. Section~\ref{sec:system_model} describes the system model and the problem statement of distributed JSM-2 signal recovery.
Section \ref{sec:centralized_algo} discusses centralized M-SBL \cite{Wipf_07_msbl} adapted to our setup, and sets the stage for our proposed decentralized solution.
Section \ref{sec:cb_dsbl} develops the proposed CB-DSBL algorithm along with a detailed discussion on the convergence properties of the underlying ADMM iterations.
Other implementation specific issues are also discussed. Section \ref{sec:sim_results} compares the performance of proposed algorithm with existing ones with respect
to various performance metrics. Finally, section \ref{sec:conclusions} concludes the paper. 

\emph{Notation}: Boldface lowercase and uppercase alphabets are used to denote vectors and matrices, respectively. Script styled alphabet (for example $\mathcal{A}$) 
is used to denote a set. $|\mathcal{A}|$ denotes the cardinality of set $\mathcal{A}$.
The term $\vecx_{j}^{k}(i)$ denotes the $i^{\text{th}}$ element of vector $\vecx$ associated with node $s_{j}$ at $k^{\text{th}}$ iteration/time index. The superscript $(.)^{T}$ denotes the transpose operation. For matrices $\matA$ and $\matB$ of sizes $m \times n$ and $p \times q$ respectively, $\matA \otimes \matB$ denotes their 
Kronecker product, which is of size $mp \times nq$.
$\mathcal{N}(\vmu, \matSigma)$ denotes the Gaussian distribution with mean $\vmu$ and covariance matrix $\matSigma$. 
$\mathbb{E(\vecx | \vecy)}$ denotes taking expectation of random variable $\vecx$ conditioned on another random variable $\vecy$.
 
\section{Distributed JSM-2 System Model}\label{sec:system_model}
We consider a network of $L$ nodes/sensors connected as a network described by a bi-directional graph 
$\mathcal{G} = (\mathcal{J}, \mathcal{A})$. $\J = \{1,2,\dots, L\}$ is the set of vertices in $\mathcal{G}$, each vertex representing a node in the network.
Set $\mathcal{A}$ contains the edges in $\mathcal{G}$, each edge representing a single hop error-free communication link between a distinct pair of nodes. 
Each node is interested in estimating an unknown $k$-sparse vector $\vecx_{j} \in \Real^{n}$ from $m$ locally acquired noisy linear measurements 
$\vecy_{j} \in \Real^{m}$. The generative model of the local measurement vector $\vecy_{j}$ at node $j$ is given by 
\begin{equation} \label{jsm2_meas_model}
 \vecy_{j} = \matPhi_{j}\vecx_{j} + \vecw_{j},  \;\; \;\; 1 \le j \le L
\end{equation}
where, $\matPhi_{j} \in \Real^{m \times n}$ is a full rank sensing matrix and $\vecw_{j} \in \Real^{m}$ is the measurement noise modeled as zero mean 
Gaussian distributed with covariance matrix $\sigma_{j}^{2}\mathbf{I}_{m}$. The sparse vectors $\vecxall$ at different nodes follow the JSM-2 signal model \cite{Duarte09DCS_main}. 
This implies that all $\vecx_{j}$ share a common support, represented by the index set $\mathcal{S}$. From the JSM-2 model, it also follows that the nonzero coefficients
of the sparse vectors are independent within and across the vectors.

The goal is to recover the local sparse vectors $\vecxall$ at their respective nodes using decentralized processing. 
In addition to processing the local data $\lc \vecy_{j}, \matPhi_{j}, \sigma_{j}^{2} \rc$, each node must collaborate with its single hop neighboring nodes to exploit the network wide joint sparsity of the unknown sparse vectors. For sake of privacy, the 
nodes are prohibited from directly exchanging their local measurements or local signal estimates.
Finally, the decentralized algorithm should be able to generate the centralized solution at each node, as if each node has access to the entire global information i.e., $\lc\vecy_{j}, \matPhi_{j}, \sigma_{j}^{2}\rc_{j \in \J}$. 

\section{Centralized Algorithm for JSM-2}\label{sec:centralized_algo}
In this section, we briefly recall the centralized M-SBL algorithm 
\cite{Wipf_07_msbl} for JSM-2 signal recovery and extend it to support distinct measurement matrices $\matPhi_{j}$ and noise variances $\sigma_{j}^{2}$ at each node. 
The centralized algorithm runs at an FC, which assumes complete knowledge of network wide information, $\lc \vecy_{j}, \matPhi_{j}, \sigma_{j}^{2} \rc_{j =1}^{L}$. 
For ease of notation, we introduce two variables 
$\matX \triangleq \lc \vecx_{1}, \vecx_{2}, \dots, \vecx_{L} \rc$ and $\matY \triangleq \lc \vecy_{1}, \vecy_{2}, \dots, \vecy_{L} \rc$ to be used in the sequel.

Similar to M-SBL, each of the sparse vectors $\vecx_{j}, j \in \J$ is assumed to be distributed according to a parameterized signal prior $p(\vecx_{j}; \vgamma)$ shown below.
\begin{eqnarray} \label{signal_prior_for_x}
 p(\vecx_{j}; \vgamma) &=& \displaystyle \prod_{i=1}^{n}p \lb \vecx_{j}(i);\vgamma(i) \rb 
 \nonumber  \\
 &=& \displaystyle \prod_{i=1}^{n} \frac{1}{\sqrt{2\pi\vgamma(i)}} \exp{ \lb -\frac{\vecx_{j}(i)^{2}}{2\vgamma(i)} \rb}.
\end{eqnarray}
Further, the joint signal prior $p(\matX;\vgamma)$ is assumed to be given by 
\begin{equation} \label{joint_signal_prior_for_X}
 p(\matX; \vgamma) = \prod_{j \in \J} p(\vecx_{j}; \vgamma).
\end{equation}
In the above, $\vgamma = \lb \vgamma(0), \vgamma(1), \dots, \vgamma(n) \rb^{T}$
is an $n$ dimensional hyperparameter vector, whose $i^{\text{th}}$ entry, $\vgamma(i)$, models the common variance of $\vecx_{j}(i)$ for $1 \le j \le L$.  
Since the signal priors $p(\vecx_{j}; \vgamma)$ are parameterized by a common $\vgamma$, if $\vgamma$ has a sparse support $\mathcal{S}$, 
then the MAP estimates of $\vecxall$ will also be jointly sparse with the same common support $\mathcal{S}$. The Gaussian prior
in (\ref{signal_prior_for_x}) promotes sparsity as it has an alternate interpretation as a parameterized model for the family of variational approximations
to a sparsity inducing Student's t-distributed prior \cite{Wipf03perspectives}. Under this interpretation, finding the hyperparameter vector $\vgamma$ which
maximizes the likelihood $p(\matY;\vgamma)$ is equivalent to finding the variational approximation which has the largest Bayesian evidence.  

Let $\hat{\vgamma}_{\text{ML}}$ denote the maximum likelihood (ML) estimate of hyperparameters of the joint source prior:
\begin{equation} \label{vgamma_ML}
 \hat{\vgamma}_{\text{ML}} = \underset{\vgamma}{\text{arg max }} p(\matY; \vgamma)
\end{equation}
where $p(\matY; \vgamma)$ is a type-2 likelihood function obtained by marginalizing the joint density $p(\matY, \matX ; \vgamma)$ 
with respect to the unknown vectors in $\matX$, i.e.,
\begin{eqnarray}
p(\matY ; \vgamma) &= \displaystyle \prod_{j =1}^{L} \int p(\vecy_{j} | \vecx_{j}) p(\vecx_{j} ; \vgamma) d\vecx_{j}  \nonumber \\
&= \displaystyle \prod_{j = 1}^{L} \mathcal{N} \lb 0, \matPhi_{j} \matGamma \matPhi_{j}^{T} + \sigma_{j}^{2}\matI_{m} \rb.   
\label{marginalized_likelihood}
\end{eqnarray}
Here $\matGamma = \text{diag}(\vgamma)$. We note that  $\hat{\vgamma}_{\text{ML}}$ cannot be derived in closed form by directly maximizing the likelihood in (\ref{marginalized_likelihood}) with respect to $\vgamma$. Hence, as suggested in the SBL framework \cite{Wipf_04_sbl},
we use the expectation maximization (EM) procedure to maximize $\log{p(\matY; \vgamma)}$ by treating $\matX$ as hidden variables. 

We now discuss the main steps of the EM algorithm to obtain $\hat{\vgamma}_{\text{ML}}$. Let $q_{\theta}(\matX)$
denote the variational approximation of true conditional density $p(\matX | \matY, \vgamma)$ with variational parameter set $\theta = (\tilde{\vmu}_{j}, \tilde{\matSigma}_{j})_{j \in \J}$. 
The variational parameters $\tilde{\vmu}_{j}$ and $\tilde{\matSigma}_{j}$ represent the conditional mean and covariance of $\vecx_{j}$ given $\vecy_{j}$.  
Then, as shown in \cite{Neal98IncrementalEM}, the log likelihood admits the following decomposition. 
\begin{eqnarray} \label{log_likelihood_decompose}
\displaystyle \log{p(\matY ; \vgamma)} 
&=& \int q_{\theta}(\matX) \log{\frac{p(\matY, \matX ; \vgamma)}{q_{\theta}(\matX)}} d\matX  
\; 
\nonumber \\
&+& 
\; D \lb q_{\theta}(\matX) \; || \; p(\matX | \matY;\vgamma) \rb
\end{eqnarray}
where the term $D(q_{\theta}||p) = \int q_{\theta}(\matX) \log{\frac{q_{\theta}(\matX)}{p(\matX | \matY ; \vgamma)}} d\matX$ is the 
\emph{Kullback-Leibler} (KL) divergence between the probability densities $q_{\theta}$ and $p$. From the non-negativity of $D(q_{\theta}||p)$ \cite{CoverThomasBook},
the log likelihood is lower bounded by the first term in the RHS. In the \emph{E-step}, we choose $\theta$ to make this variational lower bound tight by minimizing
the KL divergence term.
\begin{equation} \label{estep_theory}
\theta^{k+1} = \underset{\theta}{\text{arg} \; \text{min}} \;\; D(q_{\theta}(\matX) \;||\; p(\matX | \matY,\vgamma^{k})). 
\end{equation}
Here, $k$ denotes the iteration index of EM algorithm. From LMMSE theory, $p(\vecx_{j} | \vecy_{j}, \vgamma^{k})$ is Gaussian 
with mean $\vmu_{j}^{k+1}$ and covariance $\matSigma_{j}^{k+1}$ given by
\begin{eqnarray} 
 && \matSigma_{j}^{k+1} = \mathbf{\Gamma}^{k} - \mathbf{\Gamma}^{k}\matPhi_{j}^{T} \lb \sigma_{j}^{2}\matI_{m} + \matPhi_{j} \mathbf{\Gamma}^{k}\matPhi_{j}^{T} \rb ^{-1} \matPhi_{j} \mathbf{\Gamma}^{k} 
 \nonumber \\
 && \text{and } \vmu_{j}^{k+1} = \sigma_{j}^{-2}\matSigma_{j}^{k+1}\matPhi_{j}^{T}\vecy_{j}. \label{lmmse}
\end{eqnarray}
By choosing $\theta^{k+1} = \{ \vmu_{j}^{k+1}, \matSigma_{j}^{k+1} \}_{j \in \J}$ and $\displaystyle q_{\theta^{k+1}}(\matX) \sim \prod_{j \in \J}
\mathcal{N}(\vecx_{j}; \vmu_{j}^{k+1}, \matSigma_{j}^{k+1})$, the KL divergence term in (\ref{estep_theory}) can be driven to its minimum value of zero. 

In the \emph{M-step}, we choose $\vgamma$ to maximize the tight variational lower bound obtained in the \emph{E-step}:
\begin{eqnarray} \label{mstep_theory}
\vgamma^{k+1} =&& \hspace{-0.5cm}
\displaystyle \underset{\vgamma}{\text{arg max}}
\int q_{\theta^{k+1}}(\matX) \log{\frac{p(\matY, \matX ; \vgamma)}{q_{\theta^{k+1}}(\matX)}} \mathrm{d}\matX 
\nonumber \\
=&& \hspace{-0.5cm}
\underset{\vgamma}{\text{arg max}} \;
\displaystyle \mathbb{E}_{\matX \sim q_{\theta^{k+1}}} \left [\log{p(\matY, \matX ; \vgamma)} \right]. 
\end{eqnarray}
As shown in Appendix \ref{App:appendix_mstep_cf}, the optimization problem (\ref{mstep_theory}) can be recast as the following minimization problem.
\begin{equation} \label{m_step_cbdsbl}
 \vgamma^{k+1} = 
 \underset{\vgamma \in \Real_{+}^{n}}{\text{arg min}} \displaystyle \sum_{j \in \J} \sum_{i = 1}^{n} 
 \lb
 \log{\vgamma(i)}  
+
\frac{ \matSigma_{j}^{k}(i,i) + \vmu_{j}^{k}(i)^{2}}{\vgamma(i)} 
\rb.
\end{equation}
From the zero gradient optimality condition in (\ref{m_step_cbdsbl}), the M-step reduces to the following update rule: 
  \begin{equation} \label{m_step_update}
  \vgamma^{k+1}(i) = \frac{1}{L} \sum_{j \in \J} \lb \matSigma_{j}^{k+1}(i,i) + \vmu_{j}^{k+1}(i)^{2} \rb  \;\;\;\; \text{for } 1 \leq i \leq n.	 
\end{equation}
By repeatedly iterating between the E-step (\ref{lmmse}) and the M-step (\ref{m_step_update}), the EM algorithm converges to either a local maxima or a
saddle point of $\log{p(\matY|\vgamma)}$ \cite{Dempster77EM}. Once $\hat{\vgamma}_{\text{ML}}$ is obtained, the MAP estimate of $\vecx_{j}$ is evaluated by substituting it in the
expression for $\vmu_{j}$ in (\ref{lmmse}). It is observed that when the EM algorithm converges, the $\vgamma(i)$'s belonging to the inactive support tend to zero,
resulting in sparse MAP estimates. In practice, hard thresholding of $\vgamma$ is required to identify the nonzero support set. 
In this work, we remove all coefficients from the active support set for which $\vgamma(i), 1 \leq i \leq n$ is below the local noise variance.
It must be noted that if the local noise variance at each node is unknown, it can be estimated along with $\vgamma$ within the EM framework, as discussed in \cite{Wipf_07_msbl}. 

\section{Decentralized Algorithm for JSM-2}\label{sec:cb_dsbl}
\subsection{Algorithm Development}\label{sec:decentralized_algo_dev}
In this section, we develop a decentralized version of the centralized algorithm discussed in the previous section. For notational convenience, 
we introduce an $n$ length vector $\linebreak \veca_{j}^{k} = \lb a_{j,1}^{k}, a_{j,2}^{k}, \dots, a_{j,n}^{k} \rb^{T} $ maintained at node $j$, where 
$a_{j,i}^{k} = \matSigma_{j}^{k}(i,i) + \vmu_{j}^{k}(i)^{2}$, $\matSigma_j^k$ and $\vmu_j^k$ are as defined in~\eqref{lmmse}. 

From (\ref{m_step_update}), we observe that the solution of the M-step optimization (\ref{m_step_cbdsbl}) can be interpreted as an average of the $L$ vectors 
$\lc \veca_{j}^{k+1} \rc_{j = 1}^{L}$. The same solution can also be obtained by solving a different minimization problem
\begin{equation} \label{equivalent_avg_prob}
 \vgamma^{k+1} = \underset{\vgamma \in \Real_{+}^{n}}{\text{arg min }} \sum_{j \in \J} \norm{\vgamma - \veca_{j}^{k+1} }_{2}^{2}.
\end{equation}
Unlike the non-convex M-step objective function in (\ref{m_step_cbdsbl}), the surrogate objective function in (\ref{equivalent_avg_prob}) is convex in 
$\vgamma$ and therefore can be minimized in a distributed manner using powerful convex optimization techniques. An alternate form of (\ref{equivalent_avg_prob}) 
amenable to distributed optimization is given by
\begin{eqnarray}
&& \underset{\vgamma_{j} \in \Real_{+}^{n}, \; j \in \J}{\text{min    }} \; \sum_{j \in \J} \norm{\vgamma_{j} - \veca_{j}^{k+1} }_{2}^{2} \nonumber \\
&& \text{subject to } \vgamma_{j} = \vgamma_{j^{'}} \;\;\;\;  \forall \;\; j \in \J, \; j^{'} \in \N_{j}
\label{m_step_consensus_opt}
\end{eqnarray}
where $\N_{j}$ denotes the set of single hop neighbors of node $j$. The equality constraints in (\ref{m_step_consensus_opt}) ensure its equivalence 
to the unconstrained optimization in (\ref{equivalent_avg_prob}). Here, the number of equality constraints is equal to $|\mathcal{A}|$, i.e., the total number of single hop
links in the network. In a conventional decentralized implementation of (\ref{m_step_consensus_opt}), the number of messages exchanged between the nodes grow linearly with the 
number of consensus constraints. By restricting the nodes to exchange information only through a relatively small set of pre-designated nodes called \emph{bridge nodes},
the number of consensus constraints can be drastically reduced without affecting the equivalence of (\ref{equivalent_avg_prob}) and (\ref{m_step_consensus_opt}). Let $\B \subseteq \J$ denote the set of 
all bridge nodes in the network and $\B_{j} \subseteq \B$ denote the set of bridge nodes belonging to the single hop neighborhood of node $j$, then (\ref{m_step_consensus_opt})
can be rewritten as
\begin{eqnarray}
&& \underset{\vgamma_{j} \in \Real_{+}^{n}, j \in \J}{\text{minimize }} \; \sum_{j \in \J} \norm{\vgamma_{j} - \veca_{j}^{k+1} }_{2}^{2} \nonumber \\
&& \text{subject to } \vgamma_{j} = \vgamma_{b} \;\;\;\;  \forall \;\; j \in \J, \; b \in \B_{j}.
\label{m_step_cbdsbl_mod}
\end{eqnarray}
The auxiliary variables $\vgamma_{b}$, called \emph{bridge parameters}, are used to establish consensus among $\vgamma_{j}$.
Each bridge parameter $\vgamma_{b}$ is a non negative $n$ length vector maintained by the bridge node $b$. As motivated in \cite{Giannakis_08_NoisyLinks}, \cite{Giannakis_08_CBDEM}, 
using bridge nodes to impose network wide consensus allows us to trade off between the communication cost and robustness of the distributed optimization algorithm.\footnote
{In an alternate embodiment of the proposed algorithm, the message exchanges could be restricted to occur only through the (trustworthy) bridge nodes, thereby avoiding direct communication between the nodes. In this case, the role of the bridge nodes could be to enforce consensus in $\vgamma$ across the nodes, and these nodes need not directly participate in signal reconstruction.} 

The following Lemma provides sufficient conditions on the choice of the bridge node set $\B$ under which 
(\ref{equivalent_avg_prob}) and (\ref{m_step_cbdsbl_mod}) are equivalent.  
The proof for the Lemma can be found in \cite{Giannakis_08_NoisyLinks}.
\begin{lemma} \label{bridge_node_sel_rule}
For a connected graph $\mathcal{G}$, if the bridge node set $\B \subseteq \J$ satisfies the following conditions
\begin{enumerate}
 \item Each node $s_{j}$ must be connected to at least one bridge node in $\B$, i.e., $\B_{j} \neq \phi$ for any $j \in \J$, and,
 \item If two nodes $s_{j_{1}}$ and $s_{j_{2}}$ are single-hop neighbors, then $\B_{j_{1}} \bigcap \B_{j_{2}} \neq \phi$ for any $j_{1}, j_{2} \in \J$, 
\end{enumerate}
then, in the solution to (\ref{m_step_cbdsbl_mod}), $\vgamma_{j}$'s are equal for all $j \in \J$ .
\end{lemma}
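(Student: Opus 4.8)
The plan is to treat the equality constraints in~(\ref{m_step_cbdsbl_mod}) purely combinatorially and show that, together with the connectivity of $\mathcal{G}$, they propagate a single common value throughout the network. Since the objective in~(\ref{m_step_cbdsbl_mod}) depends only on the node variables $\vgamma_{j}$, I can disregard the objective and argue entirely about the feasible set: I will show that \emph{every} feasible point must have all $\vgamma_{j}$ equal. The key observation is that the constraint $\vgamma_{j} = \vgamma_{b}$ is imposed exactly when $b \in \B_{j}$, i.e.\ when bridge node $b$ lies in the single-hop neighborhood of node $j$.

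First I would establish edge-wise equality from condition~2. Take any pair of single-hop neighbors $s_{j_{1}}$ and $s_{j_{2}}$. By condition~2, $\B_{j_{1}} \cap \B_{j_{2}} \neq \phi$, so I may fix a bridge node $b$ in this intersection. Because $b \in \B_{j_{1}}$, the constraint $\vgamma_{j_{1}} = \vgamma_{b}$ is present, and because $b \in \B_{j_{2}}$, the constraint $\vgamma_{j_{2}} = \vgamma_{b}$ is present; chaining the two yields $\vgamma_{j_{1}} = \vgamma_{b} = \vgamma_{j_{2}}$. Thus every edge of $\mathcal{G}$ forces its two endpoints to carry the same value of $\vgamma$.

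Next I would lift this to a global statement using connectivity. For arbitrary nodes $u, v \in \J$, connectivity of $\mathcal{G}$ provides a path $u = j_{0}, j_{1}, \dots, j_{p} = v$ whose consecutive vertices are single-hop neighbors. Applying the edge-wise equality to each link and composing along the path gives $\vgamma_{u} = \vgamma_{v}$; since $u$ and $v$ are arbitrary, all $\vgamma_{j}$ coincide with a common vector, which is the stated conclusion. Condition~1 then closes the equivalence with~(\ref{equivalent_avg_prob}): because $\B_{j} \neq \phi$ for every $j$, each bridge parameter appearing in a constraint is tied to some node and hence also equals the common value, so substituting the shared $\vgamma$ into~(\ref{m_step_cbdsbl_mod}) collapses its objective to $\sum_{j \in \J} \norm{\vgamma - \veca_{j}^{k+1}}_{2}^{2}$, recovering~(\ref{equivalent_avg_prob}) exactly.

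I do not anticipate a deep obstacle, as the argument reduces to path composition over a connected graph; the one point needing care is the bookkeeping around $\B_{j}$ as the set of bridge neighbors of $j$ and the correct reading of the constraint direction, so that condition~2 genuinely supplies the shared bridge node $b$ linking $\vgamma_{j_{1}}$ and $\vgamma_{j_{2}}$. A minor subtlety worth flagging is that, for a connected graph with more than one node, condition~1 is actually implied by condition~2 (any node has a neighbor, forcing a nonempty intersection and thus $\B_{j} \neq \phi$); I would remark on this but still invoke condition~1 explicitly to keep the reduction to~(\ref{equivalent_avg_prob}) transparent.
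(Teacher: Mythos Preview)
Your argument is correct: condition~2 supplies a shared bridge node on every edge, forcing $\vgamma_{j_{1}} = \vgamma_{b} = \vgamma_{j_{2}}$, and connectivity of $\mathcal{G}$ then propagates equality along paths to all of $\J$. The paper does not give its own proof of this lemma but simply refers the reader to \cite{Giannakis_08_NoisyLinks}; your direct combinatorial argument is the natural one and is essentially what that reference contains. Your side remark that condition~1 is already implied by condition~2 on a connected graph with more than one node is also correct.
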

\begin{figure}
\centering
\includegraphics[scale = 0.65]{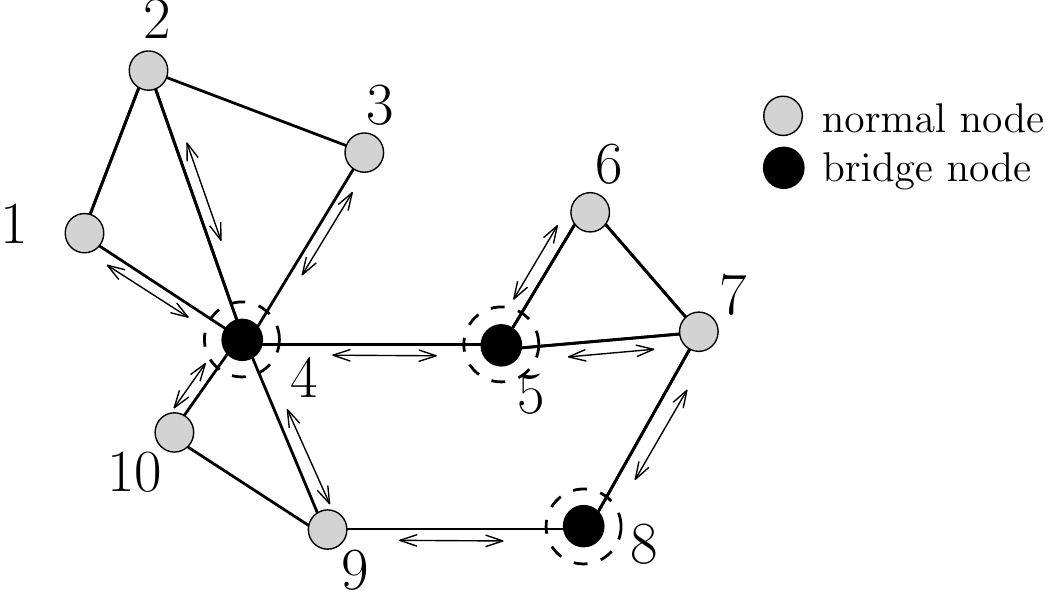}
\caption{Selection of bridge nodes in a sample network consisting of 10 nodes. In the proposed scheme, only those edges that have at least one of the vertices as a bridge node are used for communication. The remaining edges are not used for communication. For example, node $9$ communicates only with bridge nodes $4$ and $8$.}
\label{fig_sample_network}
\end{figure}
Fig.~\ref{fig_sample_network} illustrates the selection of bridge nodes according to Lemma \ref{bridge_node_sel_rule}, in a sample network.  
In this work, we employ the \emph{Alternating Directions Method of Multipliers} (ADMM) algorithm \cite{Parikh_11_ADMM} to solve the convex optimization problem
in (\ref{m_step_cbdsbl_mod}). ADMM is the state of the art dual ascent algorithm for solving constrained convex optimization problems,
offering a linear convergence rate and a natural extension to a decentralized implementation. 

We start by constructing an augmented Lagrangian, $L_{\rho}$, given by 
\begin{eqnarray} \label{augmented_lagrangian}
 L_{\rho}(\vgamma_{\J}, \vgamma_{\B}, \vlambda) 
 &\triangleq& 
\sum_{j \in \J} \norm{\vgamma_{j} - \veca_{j}^{k+1}}_{2}^{2} +
 \nonumber \\ 
&& \hspace{-3cm}
\sum_{j \in \J} \sum_{b \in \B_{j}} (\vlambdajb)^{T}(\vgammaj - \vgammab) + 
\frac{\rho}{2} \sum_{j \in \J} \sum_{b \in \B_{j}} \norm{\vgammaj - \vgammab}_{2}^{2} 
\end{eqnarray}
where $\vlambdajb$ denotes the $n \times 1$ sized Lagrange multiplier vector corresponding to the equality constraint $\vgammaj = \vgammab$ and 
$\rho$ is a positive scalar which biases the quadratic consensus penalty term. For ease of notation, we define concatenated vectors 
$\vgamma_{\J} = \{\vgamma_{1}^{T}, \vgamma_{2}^{T}, \dots, \vgamma_{L}^{T}\}^{T}$ and $\vgamma_{\B} = \{ \vgamma_{b_{1}}^{T}, \dots, \vgamma_{b_{|\B|}}^{T}\}^{T}$
to be used in the sequel.
We also define the $nN_{C} \times 1$ concatenated Lagrange multiplier vector $\vlambda$, where $N_{C}$ is the number of equality constraints in (\ref{m_step_cbdsbl_mod}).
The solution to (\ref{m_step_cbdsbl_mod}) is then obtained by executing the following ADMM iterations until convergence:
\begin{eqnarray}
\label{admm_iteration1}
&& \vgamma_{\J}^{r+1} = \underset{\vgamma_{\J}}{\text{arg min}} \;  L_{\rho}(\vgamma_{\J}, \vgamma_{\B}^{r}, \vlambda^{r})   \\ 
\label{admm_iteration2}
&& \vgamma_{\B}^{r+1} = \underset{\vgamma_{\B}}{\text{arg min}} \;  L_{\rho}(\vgamma_{\J}^{r+1}, \vgamma_{\B}, \vlambda^{r})   \\
\label{admm_iteration3}
&& (\vlambda_{j}^{b})^{r+1} = (\vlambda_{j}^{b})^{r} + \rho(\vgammaj^{r+1} - \vgammab^{r+1}) 
\end{eqnarray}
$\forall j \in \J, b \in \B_{j}$. Here, $r$ denotes the ADMM iteration index. In (\ref{admm_iteration1}-\ref{admm_iteration2}), the primal variables,
$\vgamma_{\J}$ and $\vgamma_{\B}$, are updated in a Gauss-Seidel fashion by minimizing the augmented Lagrangian, $L_{\rho}$, evaluated at the previous
estimate of the dual variable $\vlambda$. By adding an extra quadratic penalty term to the original Lagrangian, the objective in (\ref{admm_iteration2})
is no longer affine in $\vgamma_{\B}$ and hence has a bounded minimizer. The dual variable $\vlambda$ is updated via a gradient-ascent step
(\ref{admm_iteration3}) with a step-size equal to the ADMM parameter $\rho$. This particular choice of step-size ensures the dual feasibility of the
iterates $\{\vgamma_{\J}^{r+1}, \vgamma_{\B}^{r+1}, \vlambda^{r+1} \}$ for all $r$. Since the augmented Lagrangian $L_{\rho}$ is strictly convex with
respect to $\vgamma_{\J}$ and $\vgamma_{\B}$ individually, the zero gradient optimality conditions for (\ref{admm_iteration1}) and (\ref{admm_iteration2})
translate into simple update equations for $\vgamma_{j}$ and $\vgamma_{b}$:
\begin{equation}
\label{vgammaj_update}
\vgamma_{j}^{r+1} =  \frac{2 \veca_{j}^{k+1} + \sum_{b \in \B_{j}} \lb \rho \vgammab^{r} - (\vlambdajb)^{r} \rb}{2 + \rho |\B_{j}|} \;\;\;\; \forall \; j \in \J
\end{equation}
\begin{equation} \label{vgammab_update}
\text{and } \;\;\vgamma_{b}^{r+1} = \frac{\sum_{j \in \N_{b}} (\rho \vgamma_{j}^{r+1} + (\vlambdajb)^{r})}{\rho|\N_{b}|} \;\;\;\;\forall \; b \in \B.
\end{equation}
Here $\N_{b}$ denotes the set of nodes connected to bridge node $b$. As shown in Appendix \ref{App:simplify_admm_iters}, by eliminating the Lagrange multiplier terms from \eqref{admm_iteration3} and \eqref{vgammab_update}, the update rule for $\vgamma_{b}$ can be 
further simplified to 
\begin{eqnarray}
\label{vgammab_update_simple}
&& \vgamma_{b}^{r+1} = \frac{1}{|\N_{b}|} \sum_{j \in \N_{b}} \vgamma_{j}^{r+1} \hspace{0.5 cm} \forall \; b \in \B.
\end{eqnarray}

In section \ref{sec:cbdsbl_variants}, we compare the bridge node based ADMM discussed above with other decentralized optimization techniques available
in the literature. We show empirically that the bridge node based ADMM scheme is able to flexibly trade off between communication complexity, robustness
to node failures, speed of convergence, and signal reconstruction performance. 

\subsection{CB-DSBL Algorithm}\label{sec:cbdsbl_algorithm}
We now propose the CB-DSBL algorithm. Essentially, it is a decentralized EM algorithm for finding the ML estimate of the hyperparameters $\vgamma$.
The algorithm comprises two nested loops. In the outer loop, each node performs the E-step (\ref{lmmse}) in a standalone manner. In the inner loop,
ADMM iterations are performed to solve the M-step optimization in a decentralized manner. Upon convergence of the outer loop, each node $j \in \J$
has the same ML estimate of $\vgamma$, which is then used to obtain a MAP estimate of the local sparse vector $\vecx_{j}$, similar to the centralized
algorithm. The steps of the CB-DSBL algorithm are detailed in Algorithm 1. 
\begin{algorithm}[ht] \label{algo:algo_summary}
\begin{tabular}{p{7.2cm}}
\caption{Consensus Based Distributed Sparse Bayesian Learning (CB-DSBL)}
\end{tabular}
\dontprintsemicolon
\vspace{-0.6cm}
\begin{small}
  {\textbf{Initializations: }$k \gets 0$} \; 
  {$\vgammaj^{k} \gets 10^{-3} \mathbf{1}_{n \times 1} \quad \forall j \in \J$} \;
  {$\vgammab^{k}$, $(\vlambdajb)^{k} \gets  0 \quad \forall j \in \J, \; b \in \B_{j}$} \;
  \vspace{0.4cm}
  \While{$\lb k < k_{\text{max}}\rb \& \lb \Delta \vgamma_{\J} > \epsilon \rb$}
 {
  \emph{E step:} Each node $s_{j}$, $j \in \J$,  updates $\veca_{j}^{k}$ according to (\ref{lmmse}).  
  \\ \vspace{0.1cm}
  \emph{M step:} $r \gets 0$, 
  $\vgamma_{\J}^{r} \gets \vgamma_{\J}^{k}$, 
  $\vgamma_{\B}^{r} \gets \vgamma_{\B}^{k}$,
  $(\vlambda)^{r} \gets (\vlambda)^{k}$ \\
  \While{$r < r_{\text{max}}$}
  {
    \hspace{-0.5cm}
    \begin{tabular}{p{0.25cm} p{6.4cm}}
    1. & All nodes $s_{j \in \J}$ update their local estimate of hyperparameters $\vgamma_{j}^{r}$ according to (\ref{vgammaj_update}).\\
    2. & All nodes $s_{j \in \J}$ transmit the updated $\vgammaj^{r+1}$ estimate to connected bridge nodes $s_{b \in \B_{j}}$.\\
    3. & Each bridge node $s_{b \in \B}$ updates its bridge variable $\vgamma_{b}^{r}$ according to (\ref{vgammab_update_simple}). \\
    4. & All bridge nodes $s_{b\in \B}$ transmit updated bridge hyperparameters $\vgamma_{b}^{r+1}$ to nodes in their neighborhood $\N_{b}$. \\
    5. & All nodes $s_{j \in \J}$ update their Lagrange multipliers $(\vlambdajb)^{r}, b \in \B_{j}$ according to (\ref{admm_iteration3}).\\
    6. & $r \gets r+1$
    \end{tabular}
  } 
  $\vgamma_{\J}^{k} \gets \vgamma_{\J}^{r}$,
  $\vgamma_{\B}^{k} \gets \vgamma_{\B}^{r}$, 
  $(\vlambda)^{k} \gets (\vlambda)^{r}$ \\
  $k \gets k+1$ \\
  $\Delta \vgamma_{\J} \gets ||\vgamma_{\J}^{k} - \vgamma_{\J}^{k-1}||_{2}$
 }
\end{small}
\end{algorithm}

Each ADMM iteration in the M-step of the CB-DSBL algorithm involves two rounds of communication (Steps $2$ and $4$) between the nodes. 
In the first communication round, each node $j \in J$ transmits $\vgamma_{j} \in \Real^{n}$
to its $|\B_{j}|$ single hop neighbors. In the second communication round, each bridge node $b \in \B$ transmits $\vgamma_{b} \in \Real^{n}$
to its $|\N_{b}|$ single hop neighbors. Thus, in each M-step, $2n\sum_{j \in \J}|\B_{j}|$ real numbers are exchanged between 
the nodes and their respective bridge nodes. 
In Fig.~\ref{fig:inner_loop_variations}, we compare different variants of CB-DSBL 
with respect to the average number of inter-node message exchanges required to achieve less than $1\%$ signal reconstruction error.
From the figure, it is evident that the aforementioned bridge node based ADMM technique is effective in reducing the overall inter-node communication and the associated costs, without compromising on signal reconstruction performance. 
One of the ways of selecting the bridge node set $\B$ is to sort the nodes in decreasing order of 
their nodal degrees and retain the least number of top most $|\B|$ nodes satisfying the conditions in Lemma \ref{bridge_node_sel_rule}. Although suboptimal, this scheme is able to significantly reduce the 
overall communication complexity of the algorithm as demonstrated empirically in Fig.~\ref{fig:inner_loop_variations}.
In section \ref{sec:rho_selection}, a rule of thumb policy is discussed to select the bridge nodes $\B$ which will ensure fast convergence 
of the decentralized ADMM iterations in the M-step of CB-DSBL algorithm.
\begin{figure}
\centering
\includegraphics[width=0.42\textwidth]{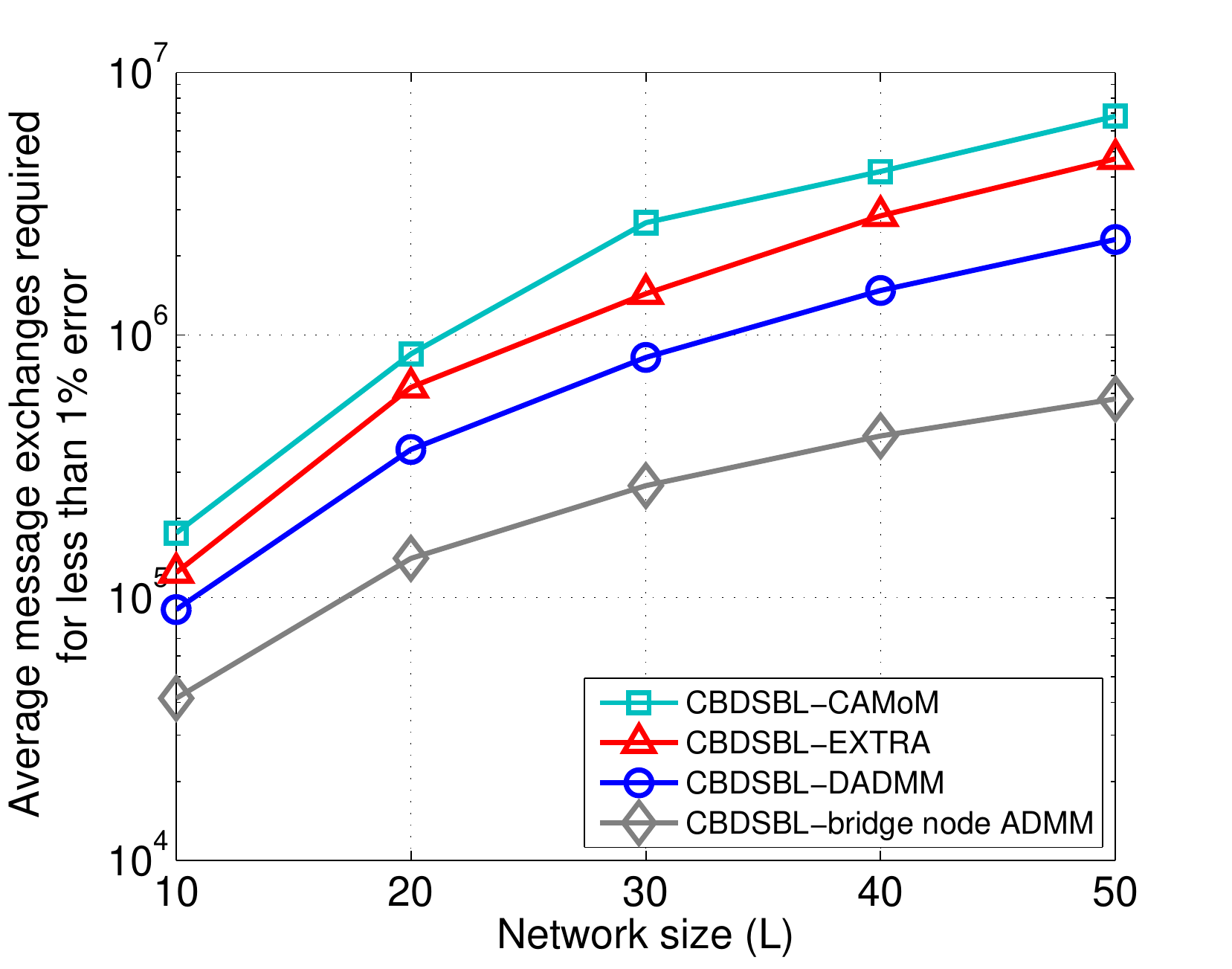}
\caption{Comparison of the communication complexity of CB-DSBL variants based on `bridge node' ADMM \cite{Giannakis_08_NoisyLinks}, CA-MoM \cite{HaoZhu09CAMoM}, D-ADMM \cite{Mota_13_ADMM} and EXTRA \cite{WeiShi15EXTRA} 
algorithms. The plot shows the average number of messages exchanged between nodes in order to achieve less than $1\%$ signal reconstruction 
error ($-20$ dB NMSE), The total number of message exchanges shown here is averaged across 500 trials. Other simulation parameters: $n = 50$,
$m = 10$, $10 \%$ sparsity, SNR = $30$ dB. 
}\label{fig:inner_loop_variations}
\end{figure}

Further reduction in inter-node communication is possible by executing only a finite number of ADMM iterations per M-step. In a practical embodiment of the algorithm, 
running a single ADMM iteration per M-step is sufficient for the CB-DSBL to converge. As shown in Fig.~\ref{fig:fig_subiter_regress}, beyond two or three 
ADMM iterations per M-step, there is only a marginal improvement in the quality of solution as well the convergence speed. Fig.~\ref{fig:fig_convergence_speed}
shows that even with a single ADMM iteration per M-step, CB-DSBL typically converges quite rapidly to the centralized solution.

\begin{figure}[h!t]
\centering
\includegraphics[width=0.38\textwidth]{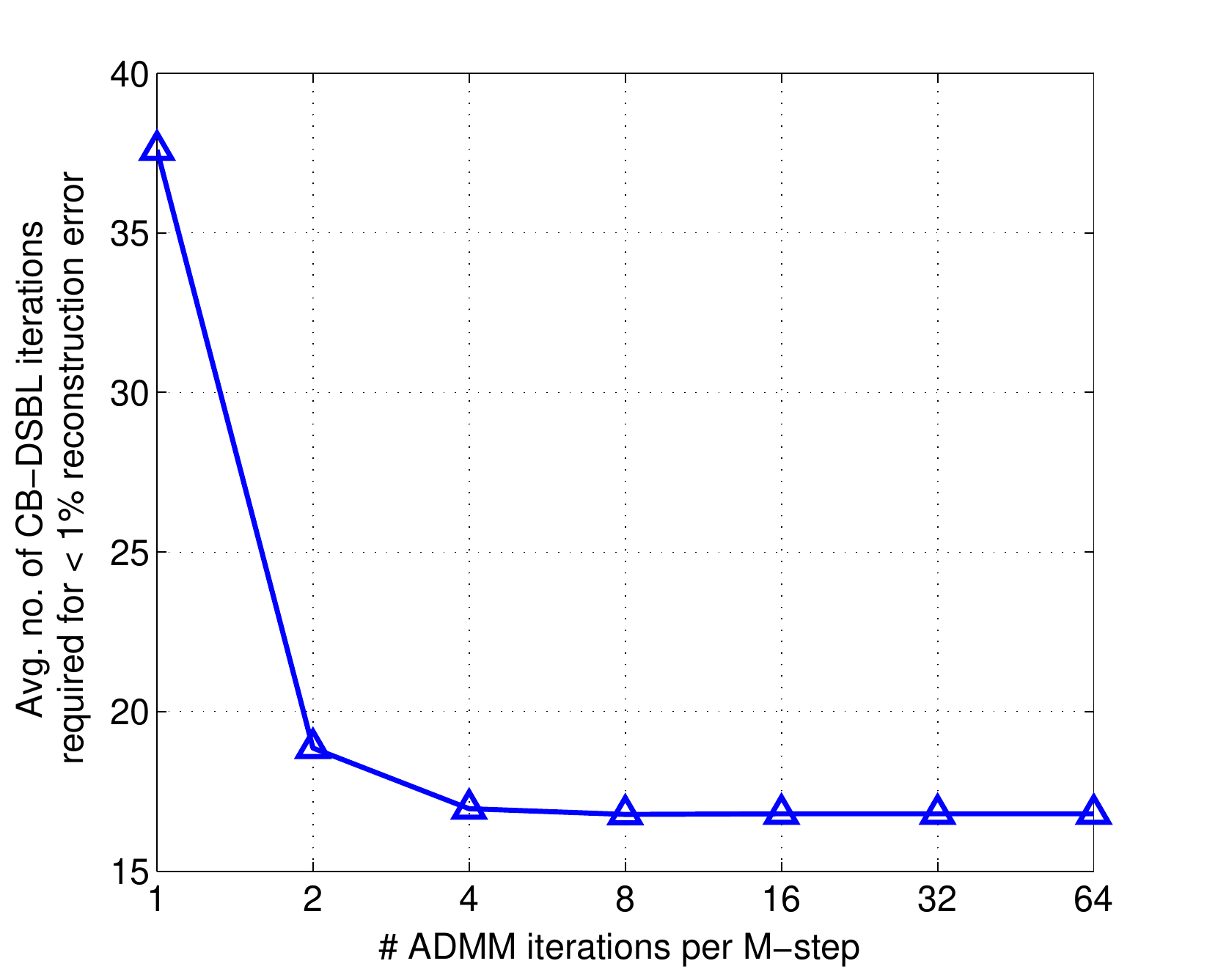}
\caption{This plot illustrates the sensitivity of CB-DSBL's outer loop iterations to the number of ADMM iterations executed per M-step in the inner loop of the algorithm. Each point in the curve represents the average number of overall CB-DSBL iterations needed to achieve less than 1$\%$ signal reconstruction error for a given number of ADMM iterations executed in the inner loop. Simulation parameters used: $n = 100$, $m = 10$, $L = 10$, $5 \%$ sparsity, SNR = $30$ dB and $\#$trials = $100$.
}\label{fig:fig_subiter_regress}
\end{figure}

\begin{figure}[h!t]
\begin{subfigure}[b]{0.24\textwidth}
    \includegraphics[width=\textwidth]{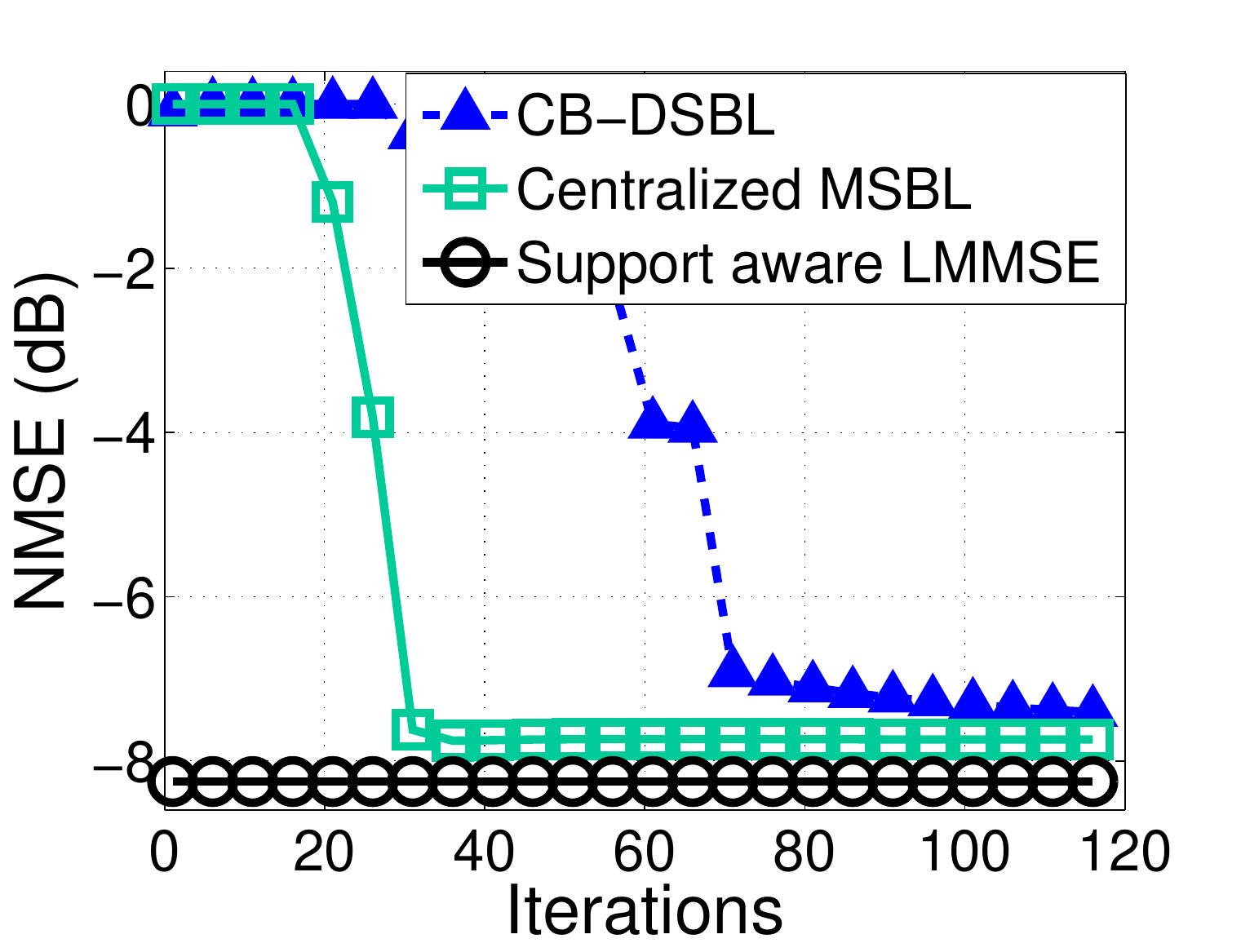}
    \caption{\scriptsize{$L = 10$ nodes, SNR $= 10$ dB}}
\end{subfigure}%
\begin{subfigure}[b]{0.24\textwidth}
    \includegraphics[width=\textwidth]{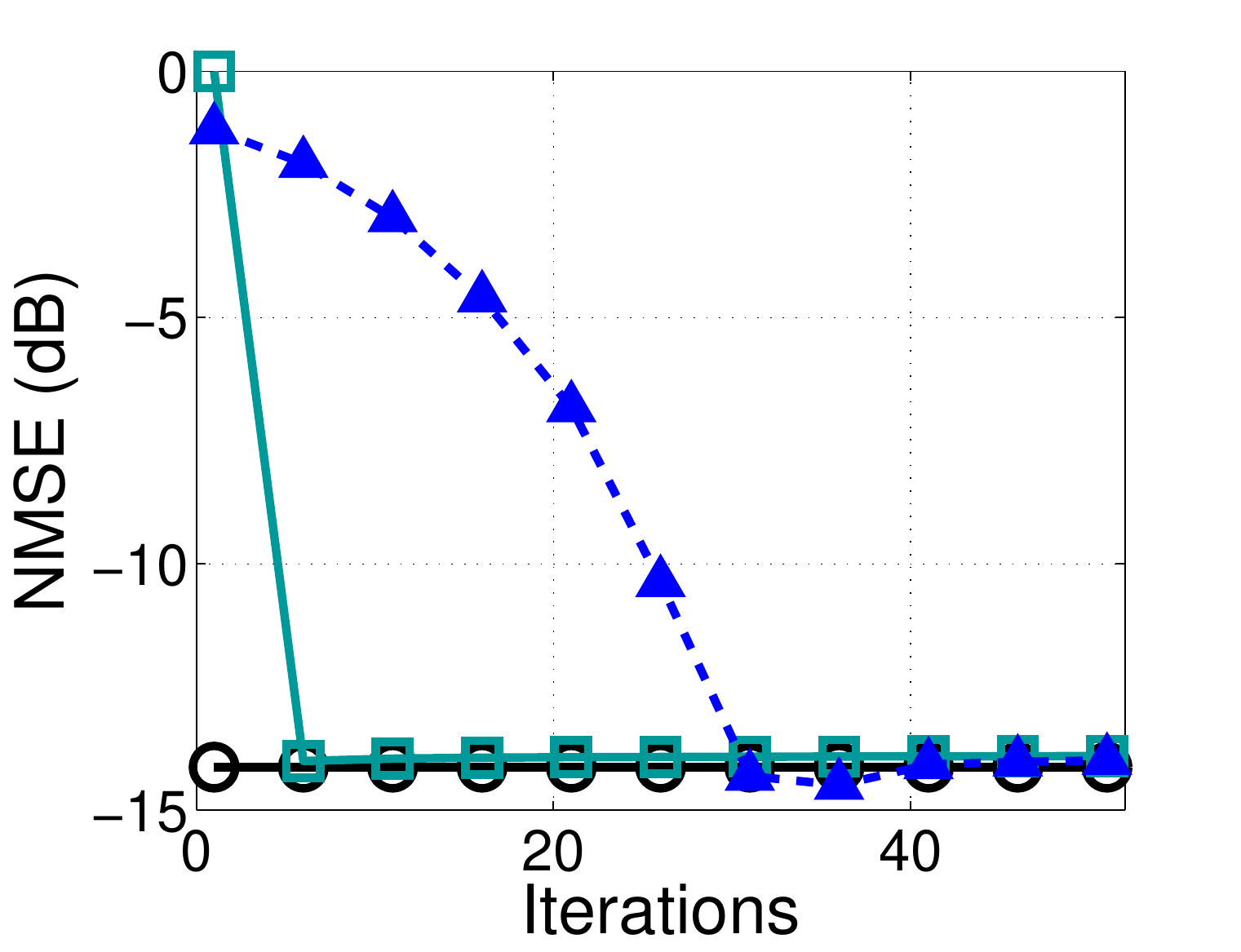}
    \caption{\scriptsize{$L = 10$ nodes, SNR $= 20$ dB}}
\end{subfigure}

\begin{subfigure}[b]{0.24\textwidth}
    \includegraphics[width=\textwidth]{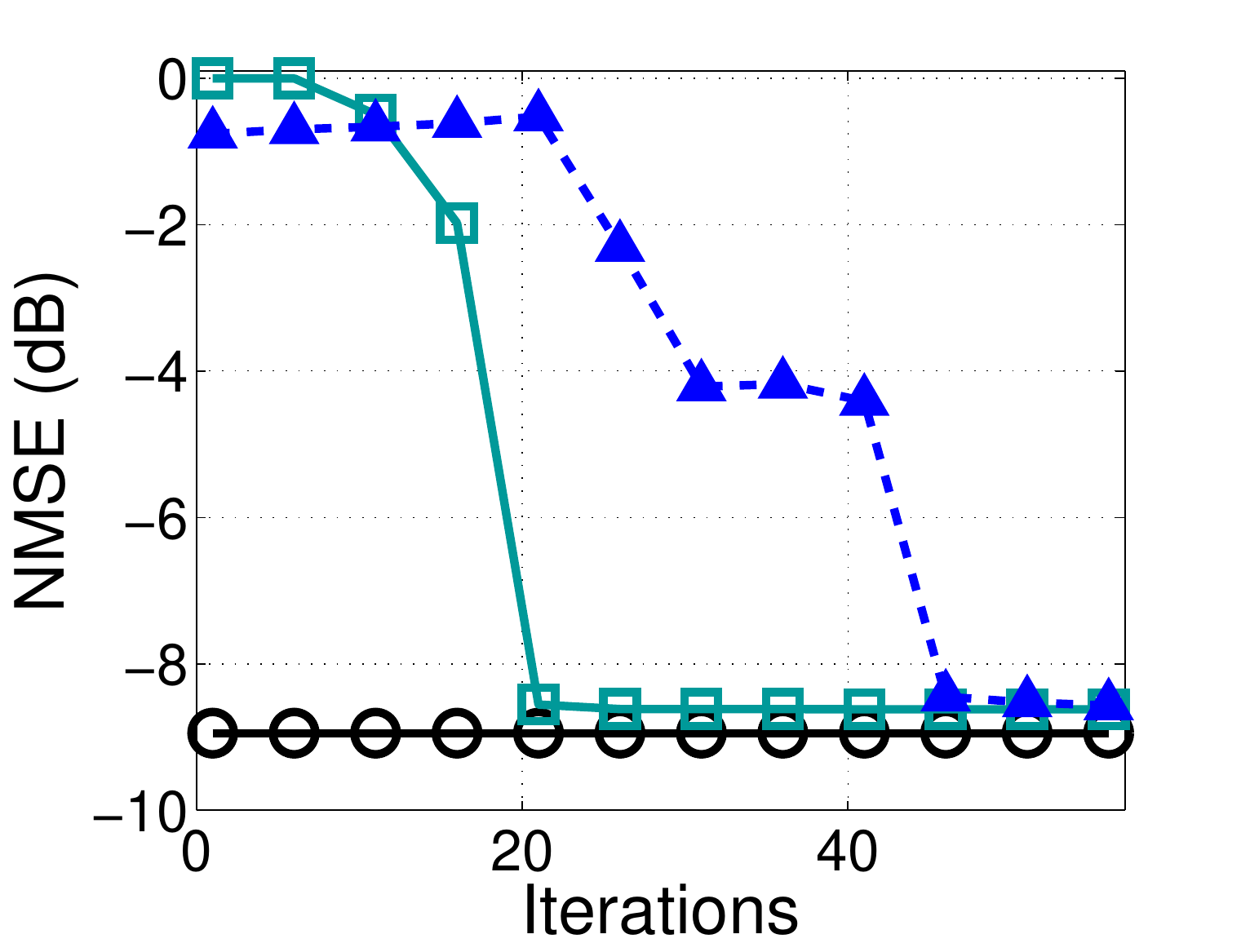}
    \caption{\scriptsize{$L = 20$ nodes, SNR $= 10$ dB}}
\end{subfigure}
\begin{subfigure}[b]{0.24\textwidth}
    \includegraphics[width=\textwidth]{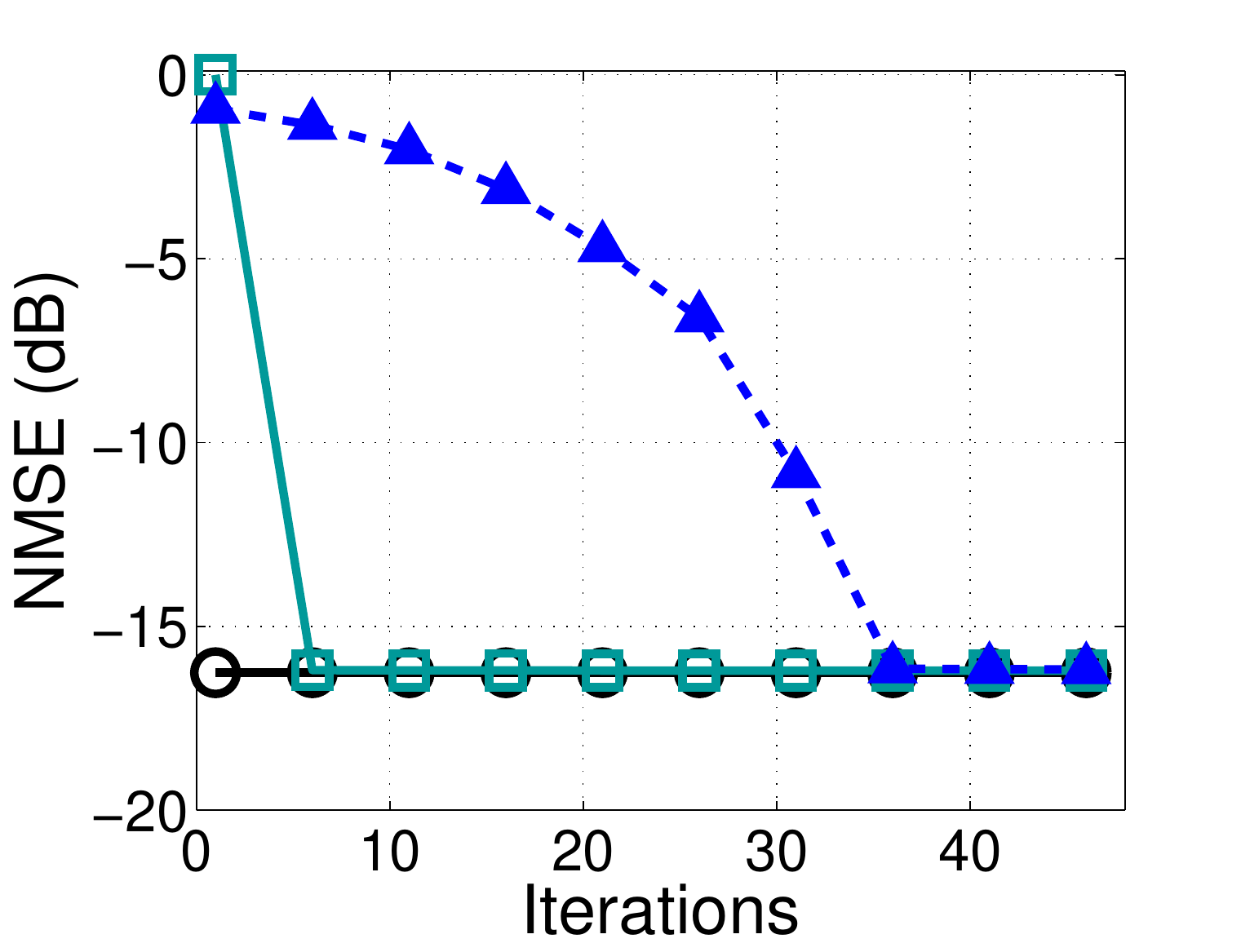}
    \caption{\scriptsize{$L = 20$ nodes, SNR $= 20$ dB}}
\end{subfigure}
\caption{Convergence of decentralized CB-DSBL to centralized M-SBL solution for different network sizes and SNRs. The CB-DSBL variant used here executes a single ADMM iteration per EM iteration. Other simulation parameters: $n = 50$, $m = 10$ and $10 \%$ sparsity.} 
\label{fig:fig_convergence_speed}   
\end{figure}

\subsection{Convergence of ADMM Iterations in the M-step}\label{sec:m_step_convergence}
In this section, we analyze the convergence of the ADMM iterations  (\ref{admm_iteration3}), (\ref{vgammaj_update}) and (\ref{vgammab_update_simple})
derived for the M-step optimization in CB-DSBL. By doing so, we aim to highlight the effects of the bridge node set $\B$ and the augmented Lagrangian parameter $\rho$
on the convergence of the ADMM iterations.

ADMM has been a very popular choice for solving both convex \cite{Matamoros151bitADMMJSM1, Ling_13_jsm_lqnorm, Parikh_11_ADMM, Mota_13_ADMM, Giannakis_08_NoisyLinks} and more recently nonconvex \cite{Erseghe15MLthruADMM} 
optimization problems as well, in a distributed setup. 
In its classical form, ADMM solves the following constrained optimization problem:
\begin{eqnarray}
& \underset{\vecx, \vecz}{\text{min }} f(\vecx) + g(\vecz) \nonumber \\
& \text{subject to } \matA \vecx + \matB \vecz = \vecc,
\label{classical_admm_prob}
\end{eqnarray}
where $\vecx \in \Real^{n}$ and $\vecz \in \Real^{m}$ are the primal variables. The matrices $\matA, \matB$ and the vector
$\vecc$ appearing in the linear equality constraint are of appropriate dimensions. The functions $f: \Real^{n} \to \Real$ and $g: \Real^{m} \to \Real$
are convex with respect to $\vecx$ and $\vecz$, respectively.
In \cite{WeiDengWotaoYinADMM12}, the authors have shown linear convergence rate for the classical ADMM iterations under the assumptions of strict convexity
and Lipschitz gradient on one of $f$ or $g$, along with full row rank assumptions for the matrix $\matA$. However, in the ADMM formulation of a decentralized
consensus optimization problem, the coefficient matrix $\matA$ is seldom of full row rank. In \cite{WataoYin13ADMMConvergence}, the full row rank condition of
$\matA$ was relaxed and linear rate of convergence was established for decentralized ADMM iterations for a generic convex optimization with linear consensus
constraints similar to \eqref{m_step_consensus_opt}. In \cite{Erseghe11AvgConsensusADMM}, the convergence of ADMM for solving an average consensus problem
has been analyzed for both noiseless and noisy communication links. In both \cite{WataoYin13ADMMConvergence} and \cite{Erseghe11AvgConsensusADMM},
the secondary primary variables indicated by the entries of $\vecz$ have a one to one correspondence with the communication links between the network nodes.
However, such a bijection is missing for the bridge variables used in our work for enforcing consensus between the primal variables. Due to this, the convergence
results of \cite{WataoYin13ADMMConvergence, Erseghe11AvgConsensusADMM} are not directly applicable to our case. In the sequel, we present the analysis
of the convergence of decentralized ADMM iterations for the bridge node internode communication scheme. 

In this section, we analyze the convergence of the ADMM iterations  (\ref{admm_iteration3}), (\ref{vgammaj_update}) and (\ref{vgammab_update_simple}) 
derived for the M-step optimization in CB-DSBL. By doing so, we aim to highlight the effects of the bridge node set $\B$ and the 
augmented Lagrangian parameter $\rho$ on the convergence of the ADMM iterations. 
We start by defining block matrices $\matE_{1} = \matC_{1} \otimes \matI_{n}$ and $\matE_{2} = \matC_{2} \otimes \matI_{n}$ 
of sizes $n N_{C} \times n L$ and $n N_{C} \times n |\B|$, respectively. 
The rows of $\matC_{1}$ and $\matC_{2}$ encode the $N_{C}$ equality constraints in (\ref{m_step_cbdsbl_mod}) such that 
if $i^{\text{th}}$ equality constraint is $\vgammaj = \vgamma_{b_{k}}$, $b_{k} \in \B$,  then $\matC_{1}(i,j) = 1$ and $\matC_{2}(i,k) = -1$;
with the rest of the entries in the $i^{\text{th}}$ row being zero. It can easily be shown that the minimum and maximum number of 
bridge nodes connected to any node in the network is the same as the minimum and maximum eigenvalues of $\matE_{1}^{T}\matE_{1}$, denoted by 
$\sigma^{2}_{\text{min}}$ and $\sigma^{2}_{\text{max}}$, respectively. Fig.~\ref{fig:fig_lin_constraint_matrix} illustrates the construction of the block matrices $\matE_{1}$ and $\matE_{2}$ for an example network consisting of $5$ nodes.
\begin{figure}
\centering
\includegraphics[scale = 0.66]{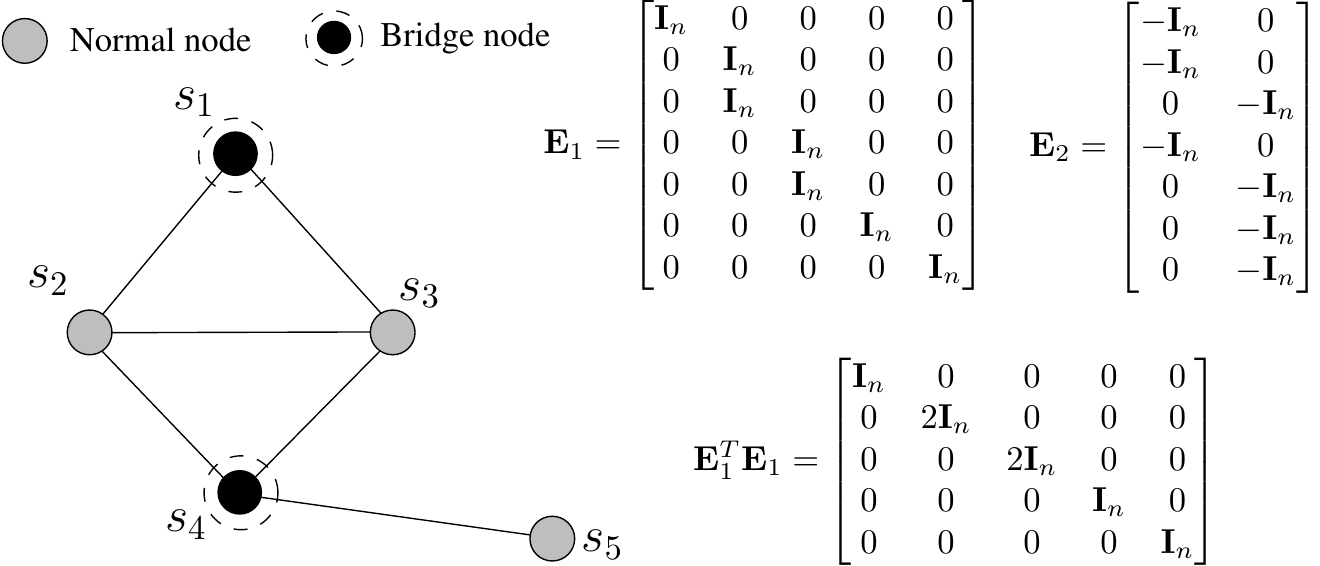}
\caption{Construction of block matrices $\matE_{1}$ and $\matE_{2}$ for a sample $5$ node network. The matrices $\matE_{1}$ and $\matE_{2}$ are together used to enforce the linear consensus constraints in \eqref{m_step_consensus_opt}, as shown in \eqref{admm_primal}.
Notice the correspondence between the diagonal coefficients of $\matE_{1}^{T}\matE_{1}$ and the number of bridge node connections per node.}
\label{fig:fig_lin_constraint_matrix}
\end{figure}
Using the newly defined terms, the optimization problem in (\ref{m_step_cbdsbl_mod}) can be rewritten compactly as
\begin{equation} \label{admm_primal}
\underset{\vgamma_{\J}, \vgamma_{\B}}{\text{min}}  f(\vgamma_{\J}) \hspace{0.5cm} \text{s.t. } \matE_{1}\vgamma_{\J} + \matE_{2}\vgamma_{\B} = 0
\end{equation}
where $f:\Real^{nL} \to \Real$ denotes the objective function in (\ref{m_step_cbdsbl_mod}), which depends only on $\vgamma_{\J}$. 
The augmented Lagrangian $L_{\rho}$ corresponding to \eqref{admm_primal} can also be rewritten compactly as 
\begin{eqnarray} \label{admm_augmented_lagrangian}
L_{\rho}(\vgamma_{\J}, \vgamma_{\B}, \vlambda) 
&=& 
f(\vgamma_{\J}) + \vlambda^{T}(\matE_{1}\vgamma_{\J} + \matE_{2}\vgamma_{\B}) 
\nonumber \\
&& \hspace{-1.8cm} \; + \; 
\displaystyle \frac{\rho}{2} (\matE_{1}\vgamma_{\J} + \matE_{2}\vgamma_{\B})^{T}(\matE_{1}\vgamma_{\J} + \matE_{2}\vgamma_{\B}).
\end{eqnarray}
By construction, the block matrix $\matE_{1}$ has full column rank, as all its columns are mutually disjoint in support. However $\matE_{1}$ can be row rank deficient due to repeated rows caused by a node being connected to multiple bridge nodes, which is often the case.
Since the matrix $\matE_{1}$ is row rank deficient, the ADMM convergence results of \cite{WeiDengWotaoYinADMM12} are not applicable to \eqref{admm_primal}. Theorem \ref{admm_convergence_theorem} below summarizes the convergence of the ADMM iterations  (\ref{admm_iteration3}), (\ref{vgammaj_update}) and (\ref{vgammab_update_simple})
to their fixed point. The result in Theorem~\ref{admm_convergence_theorem} holds for any $f$ that is strongly convex with strong convexity constant $m_{f}$, and with an $M_{f}$ Lipschitz continuous gradient. 
\begin{theorem} \label{admm_convergence_theorem}
Let $\{ \vgamma_{\J}^{*}$, $\vgamma_{\B}^{*} \}$ and $\vlambda^{*}$ denote the unique primal and dual optimal solutions of (\ref{admm_primal}),
and vector $\vecu$ be constructed as $\vecu = [(\matE_{2}\vgamma_{\B})^{T}\;\;\vlambda^{T}]^{T}$ (similarly for $\vecu^{r}, \vecu^{*}$). Then, it holds that
\begin{enumerate}
\item  The sequence $\vecu^{r}$ is Q-linearly\footnote{
A sequence ${x_{k}}: \mathcal{Z}_{+} \to \Real$ is said to be a \emph{Q-linearly} convergent to $L$, if there exists $\mu \in (0,1)$ such that
$ \underset{k \to \infty}{\lim} \frac{|x_{k+1} - L|}{|x_{k} - L|} = \mu $ \cite{WataoYin13ADMMConvergence}.  
}
convergent to $\vecu^{*}$, i.e., 
\begin{equation} \label{thm_vgammab_conv}
 \norm{\vecu^{r+1} - \vecu^{*}}_{\matG} \leq \frac{1}{1 + \delta} \norm{\vecu^{r} - \vecu^{*}}_{\matG}
\end{equation}
where $\delta$ is evaluated as 
\begin{equation} \label{defn_delta} 
\hspace{-0.75cm} \delta = \underset{\mu, \nu \geq 1}{\text{max}} 
\lc
\text{min}
\lb
\displaystyle 
\frac{2 m_{f}}{\frac{\nu M_{f}^{2}}{\rho(\nu - 1)\sigma_{\text{min}}^{2}} +
\mu \rho \sigma_{\text{max}}^{2}}  
, 
\frac{\sigma_{\text{min}}^{2}}{\nu \sigma_{\text{max}}^{2}}
,
\frac{\mu -1}{\mu} 
\rb \rc. 
\end{equation}
\item  The primal sequence $\vgamma_{\J}^{r}$ is R-linearly\footnote
{
A sequence ${x_{k}}:\mathcal{Z}_{+} \to \Real$ is said to be \emph{R-linearly} convergent to $L$, if there exists Q-linearly convergent sequence $y_{k}$ which converges to zero such that
$ \underset{k \to \infty}{\lim} {|x_{k} - L|} \leq y_{k}$.
}
convergent to $\vgamma_{\J}^{*}$, i.e.,
\begin{equation} \label{thm_vgammaj_conv}
 \norm{\vgamma_{\J}^{r+1} - \vgamma_{\J}{*}}_{2} \leq \frac{1}{2 m_{f}} \norm{\vecu^{r} - \vecu^{*}}_{\matG}
\end{equation}
\end{enumerate}
where $\norm{\cdot}_{\matG}$ is the weighted norm with respect to the diagonal matrix $\matG = \linebreak \diag{(\rho I_{n|\mathcal{B}|}, \rho^{-1}I_{N_{C}})}$.
\end{theorem}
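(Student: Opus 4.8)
The plan is to adapt the linear-convergence template of \cite{WeiDengWotaoYinADMM12, WataoYin13ADMMConvergence} to the present setting, in which $\matE_{1}$ has full column rank but is row rank deficient and the second primal block enters the constraint only through its image $\matE_{2}\vgamma_{\B}$. First I would record the fixed-point (KKT) conditions $\nabla f(\vgamma_{\J}^{*}) + \matE_{1}^{T}\vlambda^{*} = 0$, $\matE_{2}^{T}\vlambda^{*} = 0$, and $\matE_{1}\vgamma_{\J}^{*} + \matE_{2}\vgamma_{\B}^{*} = 0$, together with the per-iteration zero-gradient conditions of the two primal minimizations (\ref{admm_iteration1})--(\ref{admm_iteration2}) and the dual update (\ref{admm_iteration3}). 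Combining the $\vgamma_{\B}$-minimization with the dual step gives the structural identity $\matE_{2}^{T}\vlambda^{r+1} = 0$ for all $r \geq 0$, and eliminating $\vlambda^{r}$ from the $\vgamma_{\J}$-condition produces the compact relation $\nabla f(\vgamma_{\J}^{r+1}) + \matE_{1}^{T}\vlambda^{r+1} = \rho\,\matE_{1}^{T}\matE_{2}(\vgamma_{\B}^{r+1} - \vgamma_{\B}^{r})$. Subtracting the fixed-point condition yields the error recursion I would drive the rest of the argument from.

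The heart of the proof is a single fundamental inequality $\norm{\vecu^{r} - \vecu^{*}}_{\matG}^{2} - \norm{\vecu^{r+1} - \vecu^{*}}_{\matG}^{2} \geq \delta\,\norm{\vecu^{r+1} - \vecu^{*}}_{\matG}^{2}$, which is equivalent to the claimed Q-linear bound (\ref{thm_vgammab_conv}) after rearranging, where $\norm{\vecu^{r} - \vecu^{*}}_{\matG}^{2} = \rho\norm{\matE_{2}(\vgamma_{\B}^{r} - \vgamma_{\B}^{*})}_{2}^{2} + \rho^{-1}\norm{\vlambda^{r} - \vlambda^{*}}_{2}^{2}$. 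To derive it I would take the inner product of the error recursion with $\vgamma_{\J}^{r+1} - \vgamma_{\J}^{*}$. Strong convexity lower-bounds the gradient term by $m_{f}\norm{\vgamma_{\J}^{r+1} - \vgamma_{\J}^{*}}_{2}^{2}$; the cross term $(\matE_{1}(\vgamma_{\J}^{r+1} - \vgamma_{\J}^{*}))^{T}(\vlambda^{r+1} - \vlambda^{*})$ collapses because the $\matE_{2}(\vgamma_{\B}^{r+1} - \vgamma_{\B}^{*})$ contribution is annihilated by $\matE_{2}^{T}(\vlambda^{r+1} - \vlambda^{*}) = 0$, leaving the telescoping dual quantity $\tfrac{1}{2\rho}(\norm{\vlambda^{r+1} - \vlambda^{*}}_{2}^{2} - \norm{\vlambda^{r} - \vlambda^{*}}_{2}^{2} + \norm{\vlambda^{r+1} - \vlambda^{r}}_{2}^{2})$. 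This is exactly the step where the row-rank deficiency of $\matE_{1}$ becomes harmless: because the contraction is measured in the $\matG$-norm, which weights the dual error directly by $\rho^{-1}$ rather than through $\matE_{1}\matE_{1}^{T}$, the singularity of $\matE_{1}\matE_{1}^{T}$ is never invoked, and only the full-column-rank property (i.e. $\sigma_{\text{min}}^{2} > 0$) is needed.

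To extract the explicit rate $\delta$, I would introduce free parameters $\mu, \nu \geq 1$ and apply Young's inequality to the residual $\rho\,\matE_{1}^{T}\matE_{2}(\vgamma_{\B}^{r+1} - \vgamma_{\B}^{r})$ cross terms, then invoke the eigenvalue sandwich $\sigma_{\text{min}}^{2}\norm{\cdot}_{2}^{2} \leq \norm{\matE_{1}\cdot}_{2}^{2} \leq \sigma_{\text{max}}^{2}\norm{\cdot}_{2}^{2}$ to exchange $\vgamma_{\J}$-norms for $\matE_{1}\vgamma_{\J}$-norms, and the Lipschitz bound with constant $M_{f}$ to control the gradient mismatch. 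Each of the three arguments of the minimum in (\ref{defn_delta}) then corresponds to one branch of this chain: the first balances the strong-convexity gain $2m_{f}$ against the Lipschitz/dual penalty (the source of the $\tfrac{\nu}{\nu-1}$ and $\mu$ factors together with $\sigma_{\text{min}}^{2}, \sigma_{\text{max}}^{2}$), the second absorbs the primal-image error $\rho\norm{\matE_{2}(\vgamma_{\B}^{r+1} - \vgamma_{\B}^{*})}_{2}^{2}$, and the third is the slack $\tfrac{\mu-1}{\mu}$ left by the Young split; maximizing over $\mu, \nu \geq 1$ gives the sharpest admissible rate. Part (2) then follows almost immediately: applying strong convexity once more to the error recursion bounds $m_{f}\norm{\vgamma_{\J}^{r+1} - \vgamma_{\J}^{*}}_{2}^{2}$ by a bilinear form already dominated by $\norm{\vecu^{r} - \vecu^{*}}_{\matG}$, so the primal error is majorized by the Q-linearly convergent sequence $\norm{\vecu^{r} - \vecu^{*}}_{\matG}$, which is precisely R-linear convergence and gives (\ref{thm_vgammaj_conv}).

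The step I expect to be the main obstacle is the bookkeeping inside the fundamental inequality: invoking $\matE_{2}^{T}\vlambda^{r} = 0$ at exactly the right moments so that the otherwise intractable $\matE_{2}\vgamma_{\B}$ terms vanish or recombine into the $\matG$-norm, and tuning the Young's-inequality weights so the accumulated constraints collapse to the clean $\max$--$\min$ form (\ref{defn_delta}) instead of a weaker bound. Everything else is either a direct consequence of the optimality conditions or a routine application of strong convexity, the Lipschitz property, and the spectral bounds on $\matE_{1}^{T}\matE_{1}$.
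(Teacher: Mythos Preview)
Your proposal is correct and follows essentially the same route as the paper's proof: setting up the KKT and per-iteration optimality conditions, exploiting $\matE_{2}^{T}\vlambda^{r}=0$ to collapse the cross terms, taking the inner product with $\vgamma_{\J}^{r+1}-\vgamma_{\J}^{*}$ and invoking strong convexity to obtain the telescoping $\matG$-norm identity, and then using Young's inequality with free parameters $\mu,\nu\ge 1$ together with the spectral bounds on $\matE_{1}^{T}\matE_{1}$ and the Lipschitz constant $M_{f}$ to produce the three-term $\max$--$\min$ rate. The only place you compress more than the paper is the telescoping step, where the paper explicitly uses the auxiliary relation $\matE_{2}^{T}\matE_{1}(\vgamma_{\J}^{r+1}-\vgamma_{\J}^{*}) = -\matE_{2}^{T}\matE_{2}(\vgamma_{\B}^{r+1}-\vgamma_{\B}^{*})$ (obtained by premultiplying the dual-update error by $\matE_{2}^{T}$) to convert the $\vgamma_{\B}$ cross term into the second half of the $\matG$-norm; you will need this identity in your bookkeeping, but it is exactly the kind of step you already flagged as the main obstacle.
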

\begin{proof}
See Appendix \ref{App:admm_convergence_proof}.
\end{proof}
According to Theorem \ref{admm_convergence_theorem}, the primal optimality gap $||\vgamma_{\J}^{r} - \vgamma_{\J}^{*}||_{2}$ decays R-linearly 
with each ADMM iteration. Moreover, since $\vgamma_{\J}^{*}$ is primal feasible, there is consensus among $\vgamma_{j}, j \in J$ upon convergence,
implying that each node effectively minimizes the centralized M-step cost function in (\ref{m_step_cbdsbl}).

\subsection{Selection of the Augmented Lagrangian Parameter $\rho$}\label{sec:rho_selection}
From \eqref{thm_vgammab_conv} and \eqref{thm_vgammaj_conv} in Theorem \ref{admm_convergence_theorem}, we observe that to optimize the decay of the primal optimality gap
between $\vgamma_{\J}^{r}$ and $\vgamma_{\J}^{*}$ in each ADMM iteration, the augmented Lagrangian parameter $\rho$ has to be chosen such that it maximizes $\delta$ 
in (\ref{defn_delta}). Theorem \ref{theorem_optimal_rho_n_delta} reveals the optimal value of $\rho$ and the corresponding value of $\delta$.

\begin{theorem} \label{theorem_optimal_rho_n_delta}
 The optimal value of augmented Lagrangian parameter $\rho$ which uniquely maximizes the $\delta$ as defined in \eqref{defn_delta} is given by 
\begin{equation} \label{rho_optimal}
 \rho_{\text{opt}} = \frac{M_{f}}{\sigma_{\text{max}}\sigma_{\text{min}}} 
 \ls
 \frac
 {\sqrt{(\kappa -1)^{2} + 4\kappa \kappa_{f}^{2} } + (\kappa - 1)}
 {\sqrt{(\kappa -1)^{2} + 4\kappa \kappa_{f}^{2} } - (\kappa - 1)}
 \rs^{\frac{1}{2}}.
\end{equation}
The corresponding maximal value of $\delta$ is given by 
\begin{equation} \label{delta_optimal}
\delta_{\text{opt}} = \frac{2}{\lb \kappa + 1 + \sqrt{(\kappa -1)^{2} + 4\kappa \kappa_{f}^{2} } \rb} 
\end{equation}
where $\kappa_{f} = \dfrac{M_{f}}{m_{f}}$ represents the condition number of the objective function in (\ref{m_step_cbdsbl_mod}) and 
$\kappa = \dfrac{\sigma_{\text{max}}^{2}}{\sigma_{\text{min}}^{2}}$ is the ratio of the maximum and minimum eigenvalues of $\matE_{1}^{T}\matE_{1}$.
\end{theorem}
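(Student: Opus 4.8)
The plan is to read $\delta$ in \eqref{defn_delta} as the value of a single joint maximization over the three free quantities $\rho$, $\mu$ and $\nu$, since $\rho$ enters only through the first of the three arguments of the $\min(\cdot)$. Writing those three arguments as $T_1(\rho,\mu,\nu)$, $T_2(\nu)=1/(\nu\kappa)$ and $T_3(\mu)=1-1/\mu$, I would first eliminate $\rho$ in closed form. For fixed $\mu,\nu$ the denominator of $T_1$ has the form $a/\rho+b\rho$ with $a=\nu M_f^2/((\nu-1)\sigma_{\text{min}}^2)$ and $b=\mu\sigma_{\text{max}}^2$, so by the AM--GM inequality it is minimized at $\rho^\star=\sqrt{a/b}$, giving the largest attainable value $T_1^\star=\frac{1}{\kappa_f\sqrt{\kappa}}\sqrt{\frac{\nu-1}{\mu\nu}}$. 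Because $T_2,T_3$ do not involve $\rho$ and $T_1(\cdot)$ is unimodal in $\rho$, one has $\max_{\rho}\min(T_1,T_2,T_3)=\min(T_1^\star,T_2,T_3)$, which reduces the task to the two-variable problem $\max_{\mu,\nu\ge 1}\min(T_1^\star,T_2,T_3)$.

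Next I would solve this two-variable max--min by a feasibility argument rather than by direct calculus. For a target level $\delta$, the constraints $T_2\ge\delta$ and $T_3\ge\delta$ are equivalent to $\nu\le 1/(\kappa\delta)$ and $\mu\ge 1/(1-\delta)$. Since $T_1^\star$ is increasing in $\nu$ and decreasing in $\mu$, the slack in the remaining constraint $T_1^\star\ge\delta$ is maximized at the corner $\mu=1/(1-\delta)$, $\nu=1/(\kappa\delta)$, where $\frac{\nu-1}{\mu\nu}=(1-\delta)(1-\kappa\delta)$. Hence $\min(T_1^\star,T_2,T_3)\ge\delta$ is achievable for some admissible $(\mu,\nu)$ if and only if $(1-\delta)(1-\kappa\delta)\ge\kappa_f^2\kappa\,\delta^2$, and $\delta_{\text{opt}}$ is the largest $\delta\in(0,1/\kappa)$ for which this holds, i.e. the relevant root of the quadratic $\kappa(1-\kappa_f^2)\delta^2-(1+\kappa)\delta+1=0$.

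It then remains to identify the correct root and simplify. Selecting the root lying in $(0,1/\kappa)$ and rationalizing the surd (multiplying by the conjugate and using $(1+\kappa)^2-[(\kappa-1)^2+4\kappa\kappa_f^2]=4\kappa(1-\kappa_f^2)$) collapses the expression to $\delta_{\text{opt}}=2/\lb\kappa+1+\sqrt{(\kappa-1)^2+4\kappa\kappa_f^2}\rb$, which is \eqref{delta_optimal}; en route one checks that $\kappa-1<\sqrt{(\kappa-1)^2+4\kappa\kappa_f^2}$, which guarantees $\delta_{\text{opt}}<1/\kappa$ so that the corner $(\mu,\nu)$ is genuinely admissible with $\mu,\nu>1$. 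Substituting this $(\mu,\nu)$ into $\rho^\star=\frac{M_f}{\sigma_{\text{min}}\sigma_{\text{max}}}\sqrt{\frac{\nu}{\mu(\nu-1)}}$ and using $1/\mu=1-\delta_{\text{opt}}$, $1/\nu=\kappa\delta_{\text{opt}}$ gives $\rho_{\text{opt}}^2\propto (1-\delta_{\text{opt}})/(1-\kappa\delta_{\text{opt}})$; evaluating $1-\delta_{\text{opt}}$ and $1-\kappa\delta_{\text{opt}}$ from the closed form of $\delta_{\text{opt}}$ turns this ratio into $\frac{S+(\kappa-1)}{S-(\kappa-1)}$ with $S=\sqrt{(\kappa-1)^2+4\kappa\kappa_f^2}$, yielding \eqref{rho_optimal}.

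For uniqueness of $\rho_{\text{opt}}$, I would argue that attaining $\delta(\rho)=\delta_{\text{opt}}$ forces $T_1$ to equal its over-$\rho$ maximum at the admissible corner, and that this maximum is reached only at $\rho^\star=\sqrt{a/b}$ by the equality case of AM--GM, so any $\rho\neq\rho_{\text{opt}}$ gives $\delta(\rho)<\delta_{\text{opt}}$. I expect the main obstacle to be the bookkeeping in the reduction: justifying rigorously that the envelope $\max_\rho\min(T_1,T_2,T_3)$ equals $\min(T_1^\star,T_2,T_3)$ and that the two-variable maximizer sits exactly at the corner where all three terms coincide (rather than on a face), since only there does the scalar quadratic characterization---and hence the clean closed forms---become exact. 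Once that balancing is established, the surd manipulations leading to \eqref{delta_optimal} and \eqref{rho_optimal} are routine.
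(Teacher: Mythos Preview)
Your proposal is correct and follows the same two-stage skeleton as the paper: first eliminate $\rho$ by maximizing $T_1$ via AM--GM (the paper's Lemma~\ref{lem_alternate_delta_opt_algo}), then solve the remaining $(\mu,\nu)$ max--min by showing the optimum sits at the triple-equality point $\bar f_1=f_2=f_3$ (the paper's Lemma~\ref{lemma_equal_is_optimal}), and finally read off $\rho_{\text{opt}},\delta_{\text{opt}}$ from the resulting algebraic system. The genuine difference lies in the second stage: the paper proves Lemma~\ref{lemma_equal_is_optimal} by a case analysis (any $(\mu_0,\nu_0)$ at which not all three of $\bar f_1,f_2,f_3$ coincide admits a nearby point with larger $g$, so by Weierstrass the unique triple-equality point is the global maximizer), whereas you use a level-set feasibility argument: fix a target $\delta$, push $T_2\ge\delta$ and $T_3\ge\delta$ to their extremal corner, and determine the largest $\delta$ for which $T_1^\star\ge\delta$ still holds there, which leads directly to the scalar quadratic $\kappa(1-\kappa_f^2)\delta^2-(1+\kappa)\delta+1=0$. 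Your route is more constructive and yields the quadratic characterization of $\delta_{\text{opt}}$ in one stroke, while the paper's route establishes the same triple-equality optimality more qualitatively and then solves the system $\bar f_1=f_2=f_3$; both arrive at the identical closed forms after the same surd simplifications. One minor remark: the justification ``$T_1$ is unimodal in $\rho$'' is not what actually drives $\max_\rho\min(T_1,T_2,T_3)=\min(T_1^\star,T_2,T_3)$; the clean reason (and the one the paper uses in Appendix~\ref{App:lem_alternate_delta_opt_algo_proof}) is simply that $T_1\le T_1^\star$ for every $\rho$, with equality at $\rho^\star$.
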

\begin{proof}
See Appendix \ref{App:theorem_optimal_rho_n_delta_proof}.
\end{proof}

From (\ref{delta_optimal}), we observe that the convergence rate of the ADMM iteration in the M-step of CB-DSBL algorithm depends upon two factors: 
$\kappa$ and $\kappa_{f}$. $\kappa$ close to its minimum value of unity results in faster convergence of the ADMM iterations.
Since the ratio $\kappa = \dfrac{\sigma_{\text{max}}^{2}}{\sigma_{\text{min}}^{2}}$ is also equal to the ratio of maximum and minimum number of 
bridge nodes per node in the network, a rule of thumb for bridge node selection would be to ensure that each node is connected to more or less the same number of bridge nodes.
The convergence rate also depends upon $\kappa_{f}$, the parameter that is dependent on how well conditioned the function $f$ is. For the case where $f$ is the objective function in (\ref{m_step_cbdsbl_mod}), it is easy to show that $m_{f} = M_{f} = 2$ and $\kappa_{f} = 1$. Thus, specific to CB-DSBL, the optimal ADMM parameter $\rho$ is 
given by $\rho_{\text{opt}} = \frac{2}{\sigma_{\text{min}}^{2}}$ and the corresponding $\delta_{\text{opt}} = \frac{1}{\kappa + 1}$.
For a given network connectivity graph $\mathcal{G}$, this $\rho_{\text{opt}}$ can be computed off-line and programmed in each node. 
As shown in Fig.~\ref{fig_mse_niter_vs_rho_scalefactor}, the average MSE and mean number of iterations vary widely with $\rho$, 
an inappropriate choice of $\rho$ resulting in slow convergence and poor reconstruction performance. Also, the $\rho_{\text{opt}}$ computed in \eqref{rho_optimal}
is very close to the $\rho$ that results in both the fastest convergence as well as the lowest 
average MSE.
\begin{figure}
\centering
\begin{subfigure}[b]{0.24\textwidth}
    \includegraphics[width=\textwidth]{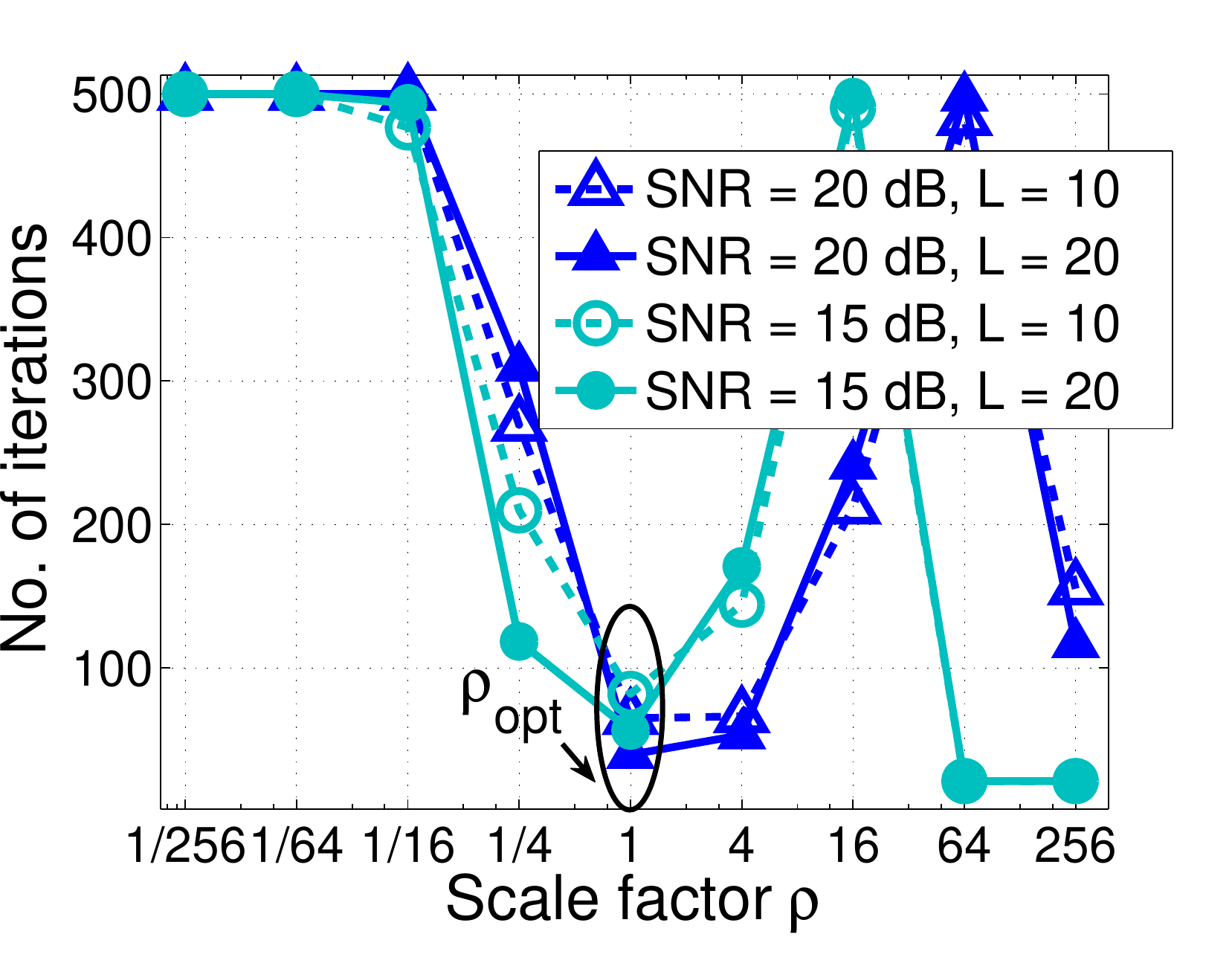}
\end{subfigure}%
\begin{subfigure}[b]{0.24\textwidth}
    \includegraphics[width=\textwidth]{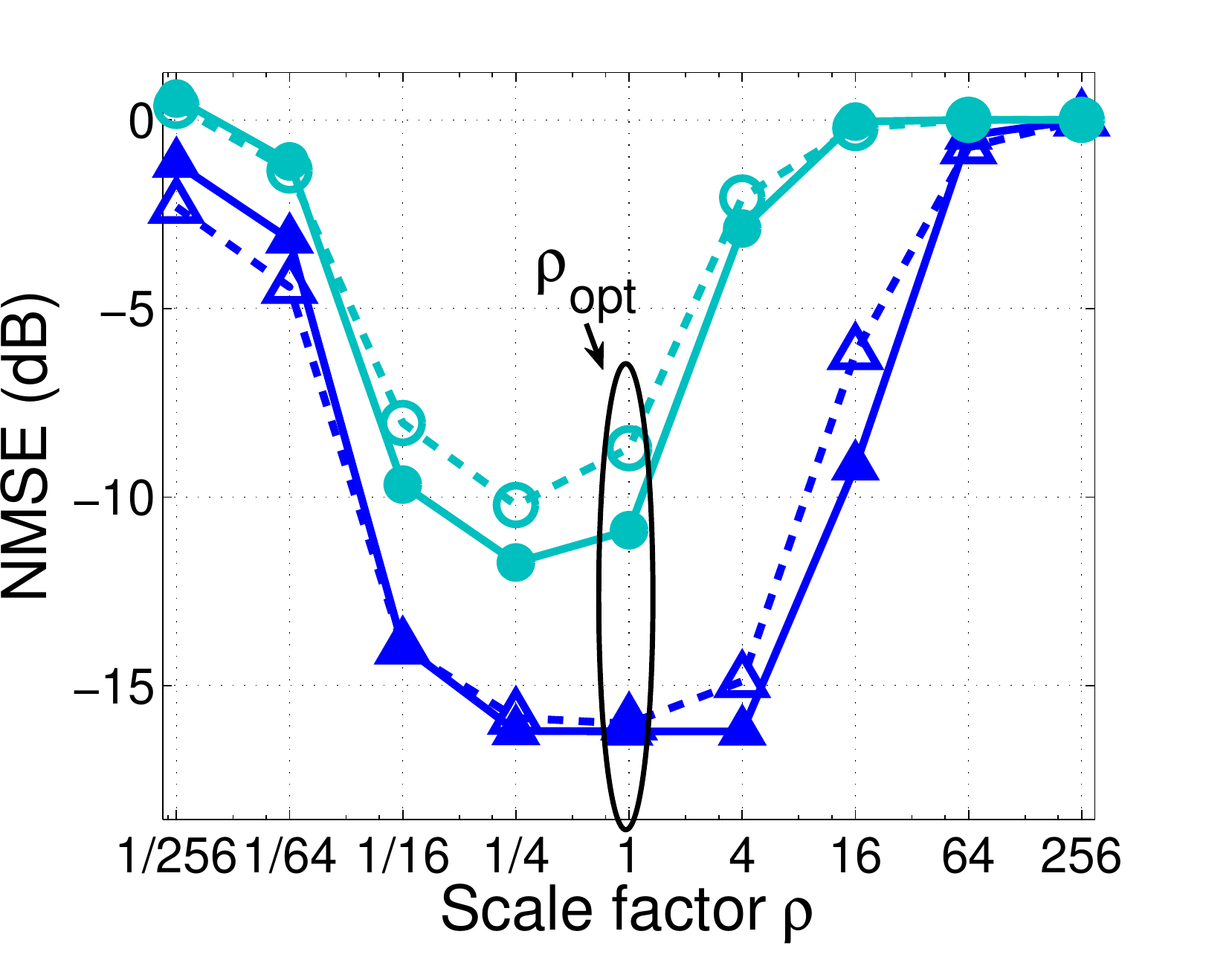}
\end{subfigure}
\caption{Left and right plots show the sensitivity of the number of iterations required for convergence and NMSE respectively with respect to the ADMM parameter $\rho$. 
The scale factor $\rho = 1$ corresponds to $\rho_{opt}$ in (\ref{rho_optimal}). }
\label{fig_mse_niter_vs_rho_scalefactor}
\end{figure}

\subsection{Computational Complexity of CB-DSBL}\label{sec:computational_complexity}
In this section, we discuss the computational complexity of the steps involved in a single iteration of the CB-DSBL algorithm. The local E-step requires $\mathcal{O}(n^2 + nm^2 + m^3)$ elementary operations at each node. The M-step is executed as multiple (say, $r_{\text{max}}$) ADMM iterations. A single ADMM iteration involves updating of the local hyperparameter estimate $\vgamma_{j}$ and Lagrange multipliers, which takes $\mathcal{O}(\zeta n)$ computations per node, $\zeta$ being the highest number of
bridge nodes assigned per node in the network. Further, each bridge node $b \in \B$ has to perform an additional $\mathcal{O}(\zeta n)$ computations to update the local bridge parameters $\vgamma_{b}$ in every ADMM iteration. Thus, the overall computational complexity of a single CB-DSBL algorithm at each node is $\mathcal{O}(n^2 + nm^2 + m^3 + \zeta n r_{\text{max}})$, and, as desired, it does not 
scale with $L$, i.e., the total number of nodes in the network.

\subsection{Other CB-DSBL Variants}\label{sec:cbdsbl_variants}
There are several alternatives to the aforementioned bridge node based ADMM technique that could potentially be used to solve the M-step optimization in \eqref{m_step_consensus_opt}. In this section, we present empirical results comparing the performance and communication complexity of four different variations of the proposed CB-DSBL algorithm
 based on (i) bridge node based ADMM \cite{Giannakis_08_NoisyLinks} (ii) Distributed ADMM (D-ADMM) \cite{Mota_13_ADMM} (iii) Consensus averaging Method of Multipliers (CA-MoM) \cite{HaoZhu09CAMoM},
 and (iv) EXact firsT ordeR Algorithm (EXTRA) \cite{WeiShi15EXTRA}. Each of these decentralized algorithms is endowed with at least $\mathcal{O}(\frac{1}{k})$ convergence rate, where $k$ stands for the
 iteration count. Besides these four, there are proximal gradient based methods \cite{Jakovetic14FastDistributedGradientMethods, ChenAndOzdaglar12FastDistProxGradMethod} relying on Nesterov-type
 acceleration techniques which also offer linear convergence rates. However, these algorithms require the objective function to be bounded and involve multiple communication rounds per iteration, 
 which is of major concern in our work. As shown in Fig. \ref{fig:inner_loop_variations}, the proposed CB-DSBL variant relying on the bridge node based ADMM scheme is the most communication efficient one. 
 
\subsection{Implementation Issues} \label{sec:cbdsbl_implementation}
CB-DSBL algorithm can be seen as a decentralized EM algorithm to find the ML estimate of the hyperparameters $\vgamma$ of a sparsity
inducing prior. CB-DSBL, not surprisingly, also inherits the tendency of the EM algorithm to converge to one of the multiple local maxima
of the ML cost function $\log{p(\matY | \vgamma)}$. However, getting trapped in a local maximum is not a problem, as it has been shown in \cite{Wipf_04_sbl}
that all local maxima of the $\log{p(\matY | \vgamma)}$ are at most $m$-sparse and hence qualify as reasonably good solutions to our original sparse model 
estimation problem. Despite this, it is recommended to seed the EM algorithm with $\vgamma$ whose all entries are close to zero. 

Another common issue is that of the wide variation in the energy of the nonzero entries of $\vecx_{j}$ across the network. 
Specifically, in distributed event classification by a multitude of different types of sensors \cite{Nasrabadi11MTMV}, each sensor node may employ its own 
distinct sensing modality and hence may perceive a different SNR. In such cases, a preconditioning step which normalizes the local response vector to unit energy
is recommended for fast convergence of the CB-DSBL algorithm. The local sparse signal estimates can be re-adjusted in the end to undo the pre-conditioning.  

\section{Simulation Results} \label{sec:sim_results}
In this section, we present simulation results to examine the performance and complexity aspects of the proposed CB-DSBL algorithm when compared with existing 
decentralized algorithms: DRL-1 \cite{Ling_13_jsm_lqnorm}, DCOMP \cite{Wimalajeewa13DCOMP} and DCSP \cite{Varshney14DCSP}. 
The centralized M-SBL \cite{Wipf_07_msbl} is also included in the study as a performance benchmark for the proposed decentralized algorithm. The CB-DSBL variant
considered here executes two ADMM iterations in the inner loop for every EM iteration in the outer loop. The value of the augmented Lagrangian parameter, $\rho$,
is chosen according to (\ref{rho_optimal}). For each experiment, the set $\B$ of bridge nodes is selected as described in section \ref{sec:cbdsbl_algorithm}. 
The local measurement matrices $\matPhi_{j}$ are chosen to be Gaussian random matrices with normalized columns. The nonzero signal coefficients are sampled 
independently from the Rademacher distribution, unless mentioned otherwise. For each trial, the connections between the nodes are assumed according to a randomly 
generated Erd{\"o}s-Renyi graph with a node connection probability of $0.8$. In the final step of M-SBL and CB-DSBL algorithms, the active support is identified
by element-wise thresholding the local hyperparameter vector $\vgamma_{j}$ at node $j$ using the threshold $4 \sigma_{j}^{2}$, where $\sigma_{j}^{2}$ denotes the
local measurement noise variance.

\subsection{Performance versus SNR} \label{sec:sim_results_perf_vs_SNR}
In the first set of experiments, we compare the normalized mean squared error (NMSE) and the normalized support error rate (NSER) of different algorithms 
for a range of SNRs. The support-aware LMMSE estimator sets the MSE performance benchmark for all the support agnostic algorithms considered here. 
The NMSE and NSER error metrics are defined as
\begin{equation}
  \text{NMSE} = \frac{1}{L}\sum_{j=1}^{L}\frac{||\vecx_{j} - \hat{\vecx}_{j}||_{2}^{2}}{||\vecx_{j}||_{2}^{2}}  \nonumber
\end{equation}
\begin{equation}
 \text{NSER} = \frac{1}{L}\sum_{j=1}^{L}\frac{|\mathcal{S} \backslash \hat{\mathcal{S}_{j}}| + |\hat{\mathcal{S}}_{j} \backslash \mathcal{S}|}{|\mathcal{S}|}  \nonumber
\end{equation}
where $\mathcal{S}$ is the true common support and $\hat{\mathcal{S}}_{j}$ is the support estimated at node $j$. 
The network size is fixed to $L = 10$ nodes. As seen in Fig.~\ref{fig:fig_compare_mse}, CB-DSBL matches the performance of centralized M-SBL in all cases.
For higher SNR ($\ge 15 $ dB), it can be seen that both M-SBL and proposed CB-DSBL are MSE optimal. CB-DSBL also outperforms 
DRL-1 and DCOMP in terms of both MSE and support recovery. This is attributed to the fact that the Gaussian prior used in CB-DSBL with its alternate interpretation 
as a variational approximation to the Student's t-distribution is more capable of inducing sparsity in comparison to the sum-log-sum penalty used in DRL-1. 
The poor performance of DCOMP is primarily due to its sequential approach towards  support recovery which prevents any corrections to be applied to the 
support estimate at each step of the algorithm. Contrary to \cite{Varshney14DCSP}, DCSP fails to perform better than DCOMP. This is because DCSP works only when the number of measurements exceeds $2k$, where $k$ is the size of the nonzero support.  
\begin{figure}[ht]
\begin{minipage}{.24\textwidth}
\centering
\includegraphics[width=\textwidth]{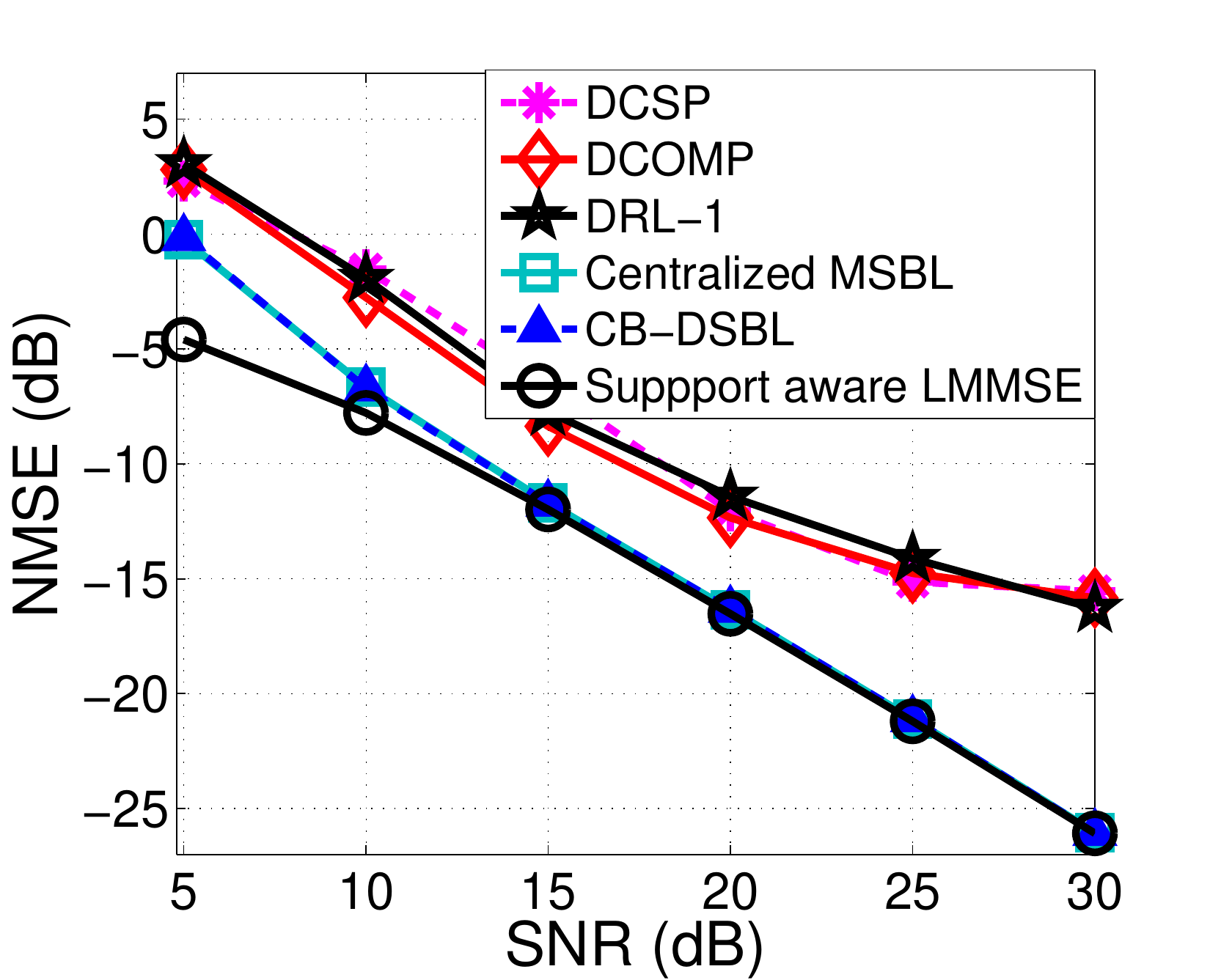}
\end{minipage}
\begin{minipage}{.24\textwidth}
\centering
\includegraphics[width=\textwidth]{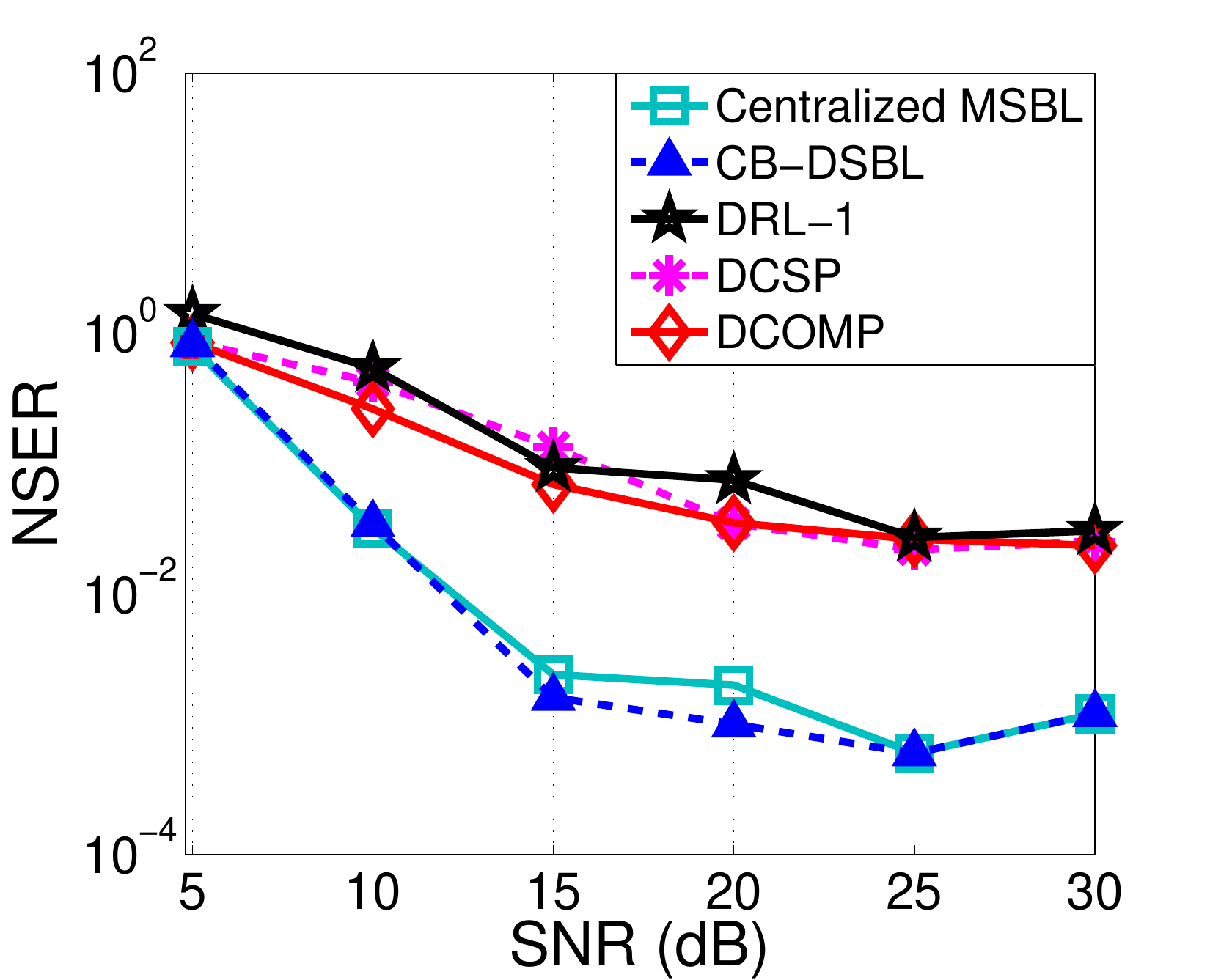}
\end{minipage}
\begin{subfigure}[b]{0.48\textwidth}
\vspace{0.1cm}
\caption{\scriptsize{Nonzero coefficients drawn from Rademacher distribution}}
\end{subfigure}

\begin{minipage}{.24\textwidth}
\centering
\includegraphics[width=\textwidth]{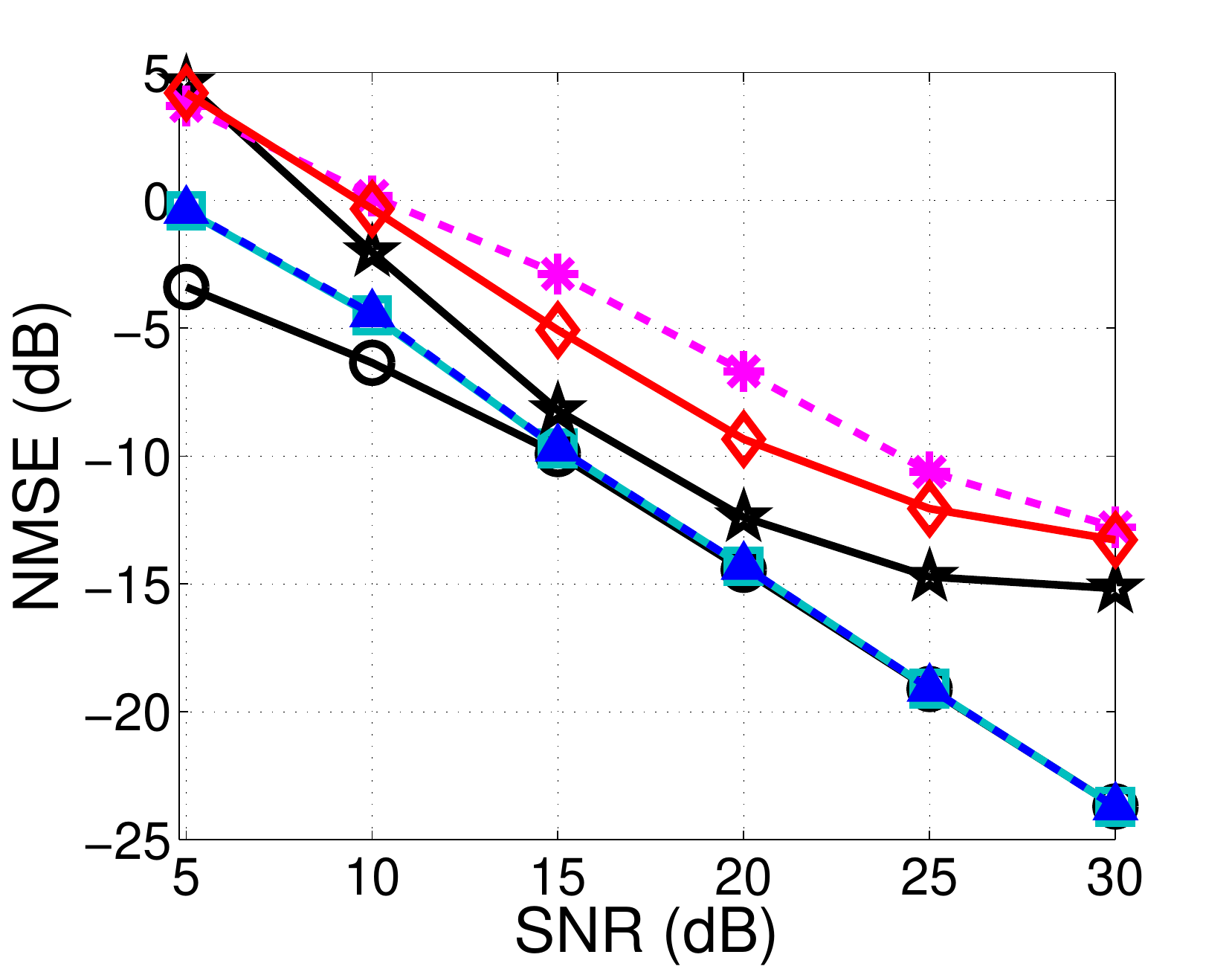}
\end{minipage}
\begin{minipage}{.24\textwidth}
\centering
\includegraphics[width=\textwidth]{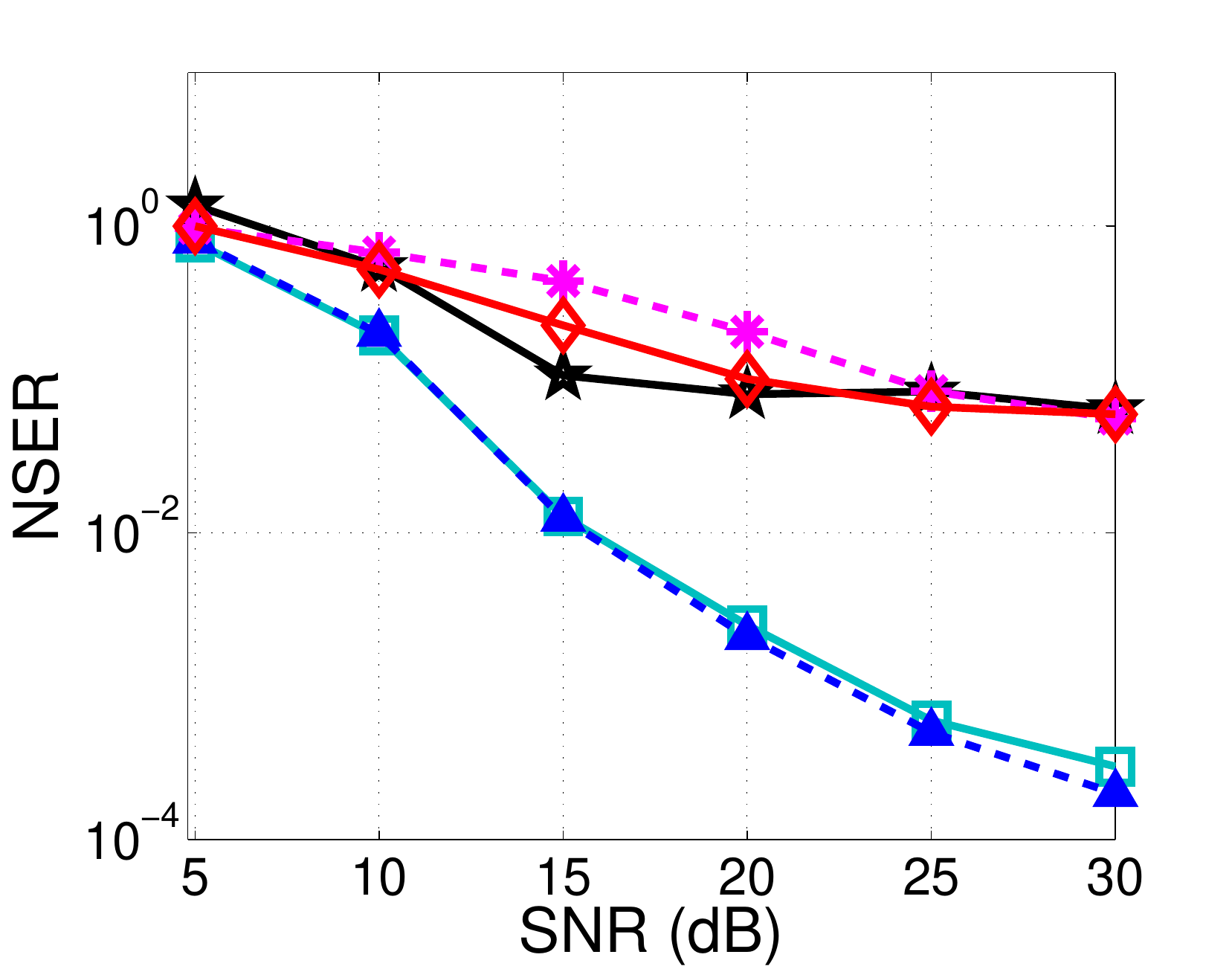}
\end{minipage}
\begin{subfigure}[b]{0.48\textwidth}
\vspace{0.1cm}
\caption{\scriptsize{Nonzero coefficients drawn from Gaussian distribution}}
\end{subfigure}
\caption{Left and right figures in the above plot the NMSE and NSER respectively for different SNRs. 
Other simulation parameters: $L = 10$ nodes, $n = 50$, $m = 10$ and $10 \%$ sparsity.}
\label{fig:fig_compare_mse}
\end{figure}

\subsection{Tradeoff between Measurement Rate and Network Size} \label{sec:sim_results_perf_vs_L}
In the second set of experiments, we characterize the NMSE phase transition of the different algorithms in $(m/n)-L$
 plane to identify the minimum measurement rate ($m/n$) needed to ensure less than $1\%$ signal reconstruction error 
 (or, NMSE $ \le -20$ dB), for different network sizes ($L$), and a fixed sparsity rate ($k/n = 0.1$). 
 As shown in Fig.~\ref{fig:fig_L_vs_M_tradeoff}, for the same network size, CB-DSBL is able to successfully 
 recover the unknown signals at a much lower measurement rate compared to DRL-1, DCOMP and DCSP. This plot 
 brings out the significant benefit of using collaboration between nodes and taking advantage of the JSM-2 model 
 in reducing the number of measurements required per node for successful signal recovery. Additionally, as the 
 network grows in size, the complexity of the local computations at each node also reduces with the number 
 of local measurements (see section~\ref{sec:computational_complexity}).
\begin{figure}[h!t]
\centering
\includegraphics[width=0.42\textwidth]{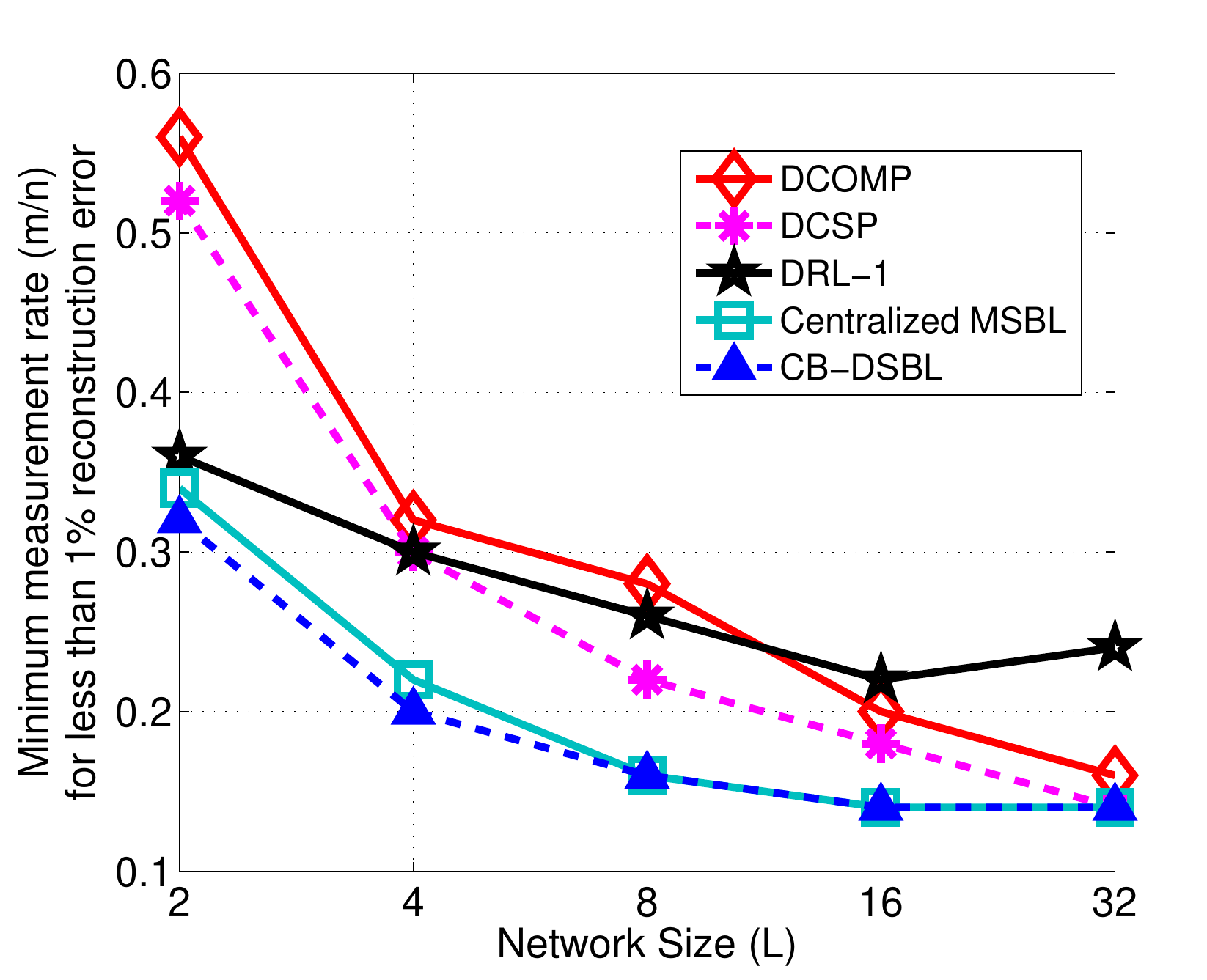}
\caption{NMSE phase transition plots of different algorithms illustrating the dependence of minimum measurement rate required to guarantee less than $1\%$ signal reconstruction error on the network size, for signal sparsity rate fixed at $10 \%$. Other simulation parameters: $n = 50$ and SNR = $30$ dB.}
\label{fig:fig_L_vs_M_tradeoff}
\end{figure}

\subsection{Performance versus Measurement Rate ($\frac{m}{n}$)} \label{sec:sim_results_perf_vs_undersamprate}
In the third set of experiments, we compare the algorithms with respect to their ability to recover the exact support for different undersampling ratios. 
As seen in Fig.~\ref{fig_prob_supp_recovery}, for a similar network size, CB-DSBL is able to exploit the joint sparsity structure better than DCOMP, DCSP and DRL-1,
and can correctly recover the support from significantly fewer number of measurements per node. Once again, CB-DSBL has identical support recovery performance as the centralized M-SBL, which was one of our design goals. 
\begin{figure}[h!]
\centering
\includegraphics[width=0.45\textwidth]{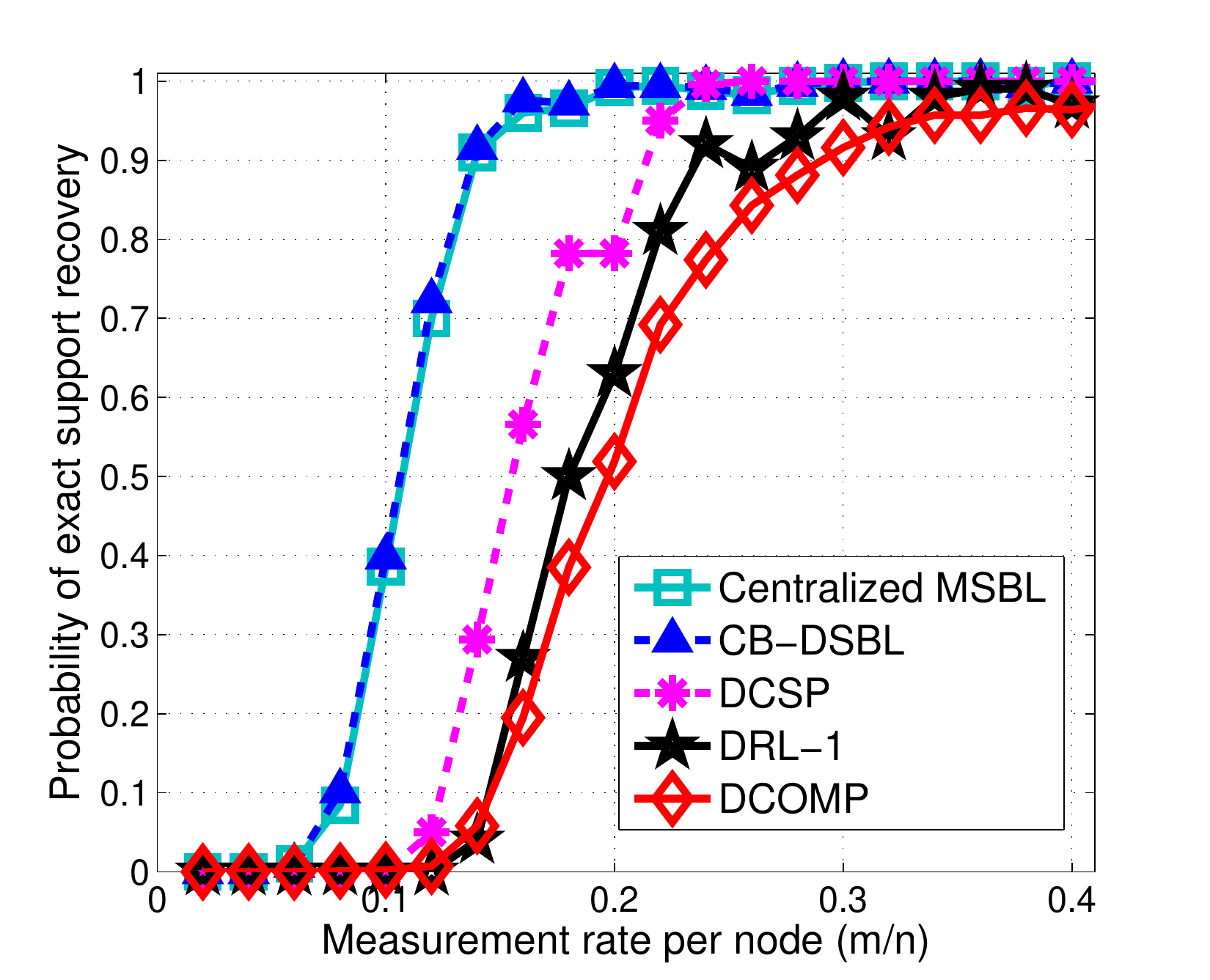}
\caption{Probability of exact support recovery versus number of measurements. Simulation parameters: $n = 50$, $10 \%$ sparsity, SNR = $15$ dB and $L = 10$ nodes.}
\label{fig_prob_supp_recovery}
\end{figure}

\subsection{Phase Transition Characteristics} \label{sec:sim_phase_transition}
In these set of experiments, we compare the phase transition behavior of different algorithms under NMSE and support recovery based pass/fail criteria.
Fig.~\ref{fig:fig_mse_phtr} plots the MSE phase transition of different algorithms where any point below the phase transition curve 
represents a sparsity rate $(k/n)$ and measurement rate $(m/n)$ tuple which results in an NMSE smaller than $-20$~dB corresponding to smaller than $1$~percent signal reconstruction error. Likewise, in Fig.~\ref{fig:fig_supp_phtr}, points below the support recovery phase transition curve represent $(k/n, m/n)$ tuples 
which result in more than $90$ percent accurate nonzero support reconstruction across all the nodes. Again, we see that the CB-DSBL and centralized M-SBL have identical performance and both are capable of signal reconstruction from considerably fewer measurements compared to DRL-1, DCOMP and DCSP. 
\begin{figure}[h!]
\centering
\begin{subfigure}[b]{0.42\textwidth}
\includegraphics[width=\textwidth]{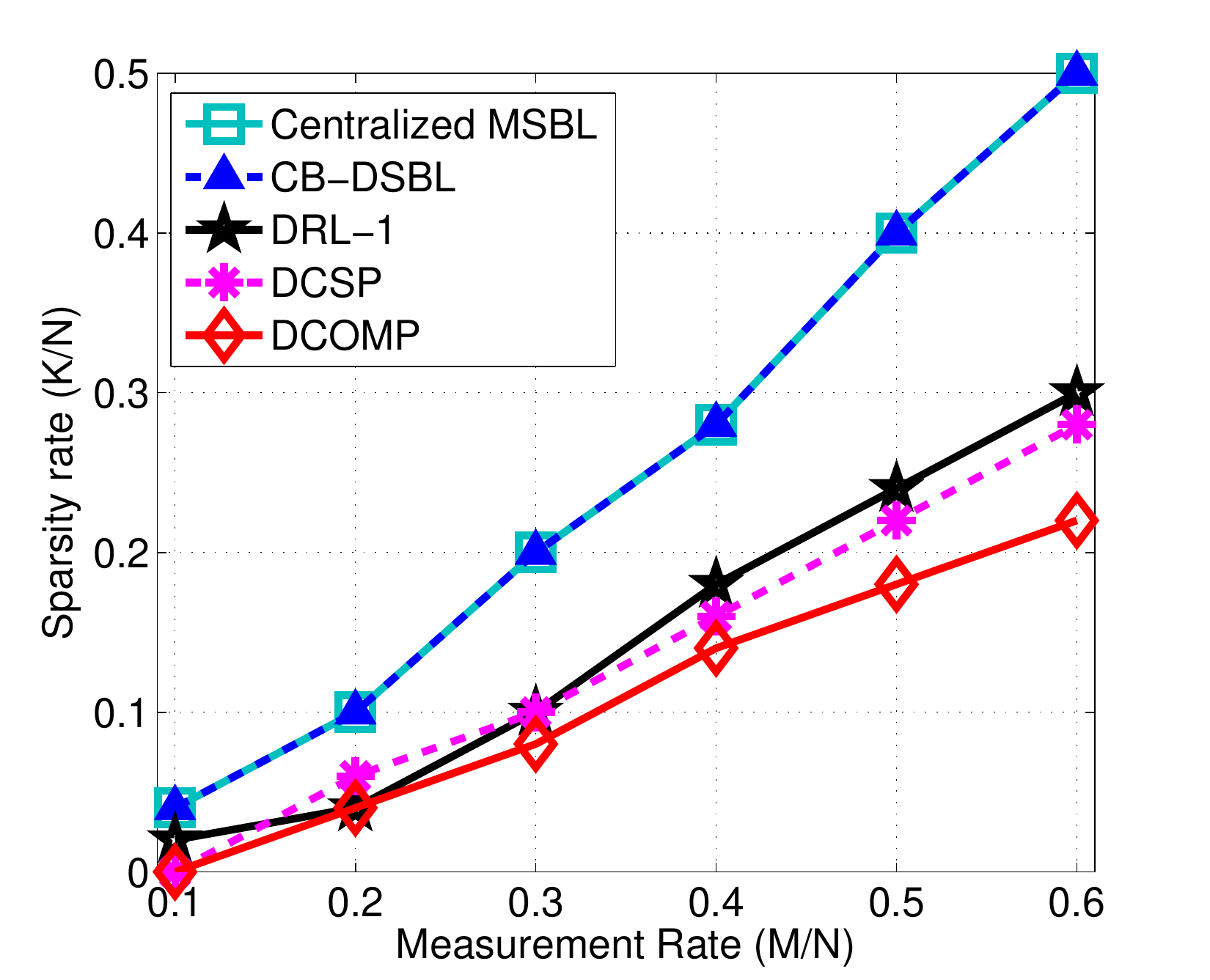}
\caption{NMSE phase transition}
\label{fig:fig_mse_phtr}
\end{subfigure}%

\begin{subfigure}[b]{0.42\textwidth}
\includegraphics[width=\textwidth]{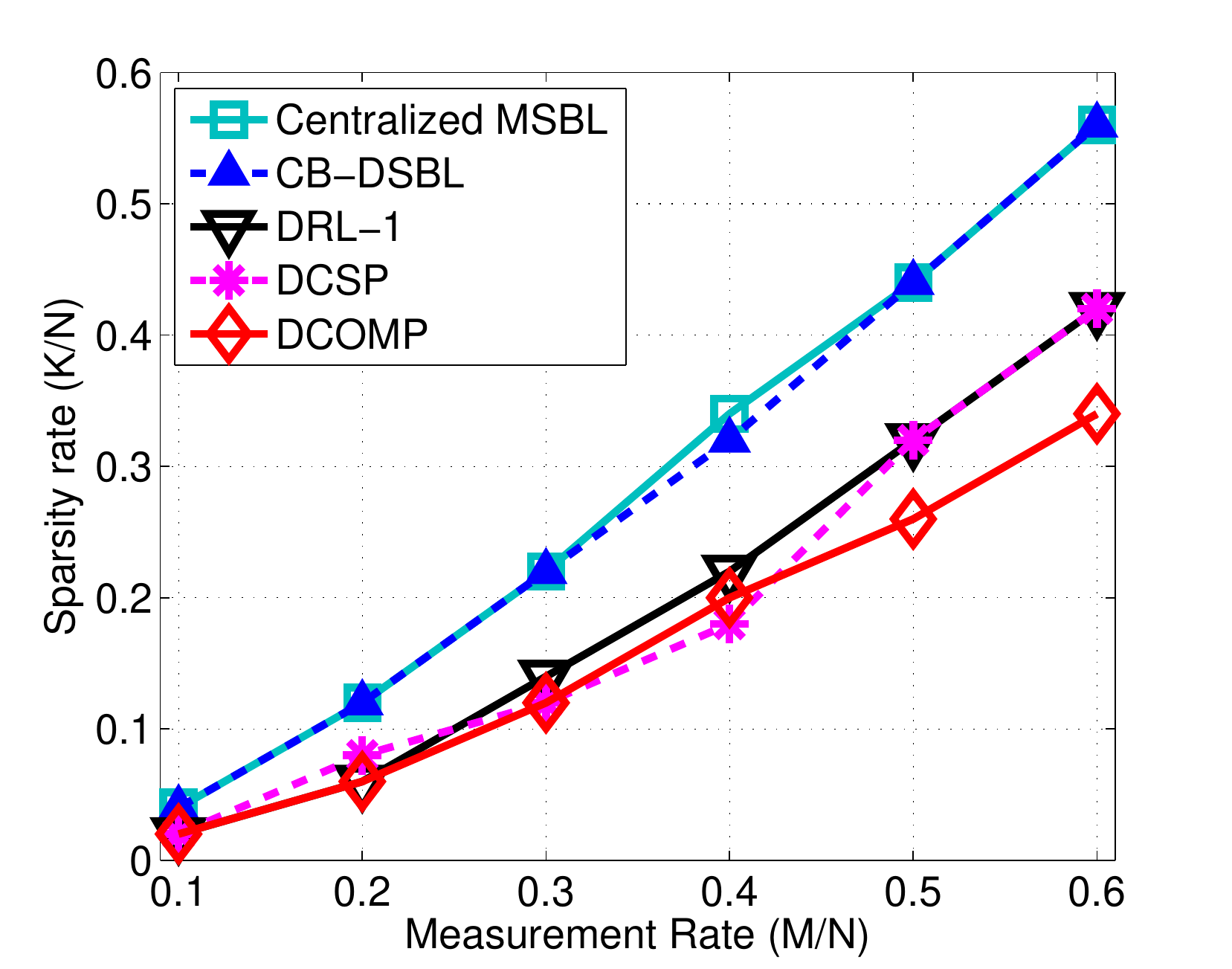}\caption{Support recovery phase transition}
\label{fig:fig_supp_phtr}
\end{subfigure}
\caption{Phase transition plots for the different joint-sparse signal recovery algorithms. For all points on or below the NMSE phase transition curve, at most
$1\%$ average signal reconstruction error is incurred by the respective algorithm. Likewise, for all points on or below the support recovery phase transition curve,
at least $90\%$ of the nonzero support is successfully identified at all the network nodes. Other simulation parameters: $n = 50$, $L=5$ nodes, SNR = $30$ dB and
number of trials = $200$.  
}\label{fig:fig_phtr}
\end{figure}

\subsection{Tradeoff between Number of Bridge Nodes and Robustness to Node Failures} \label{sec:bridgeNode_robustness_tradeoff}
In the final set of experiments, we demonstrate empirically that increasing the number of bridge nodes 
in the CB-DSBL algorithm makes it more robust to random node failures. As shown in Fig. \ref{fig:fig_bridge_node_regress}, 
by gradually increasing the density of bridge nodes in the network,  the CB-DSBL algorithm is able to tolerate higher 
rates of node failures without compromising on signal reconstruction performance. More interestingly, 
only a relatively small fraction of nodes need to be 
bridge nodes ($< 10 \%$ of the total network size) to ensure that CB-DSBL operates robustly in the face of random node failures.
\begin{figure}[ht]
\centering
\includegraphics[width = 0.44\textwidth, height = 0.35\textwidth]{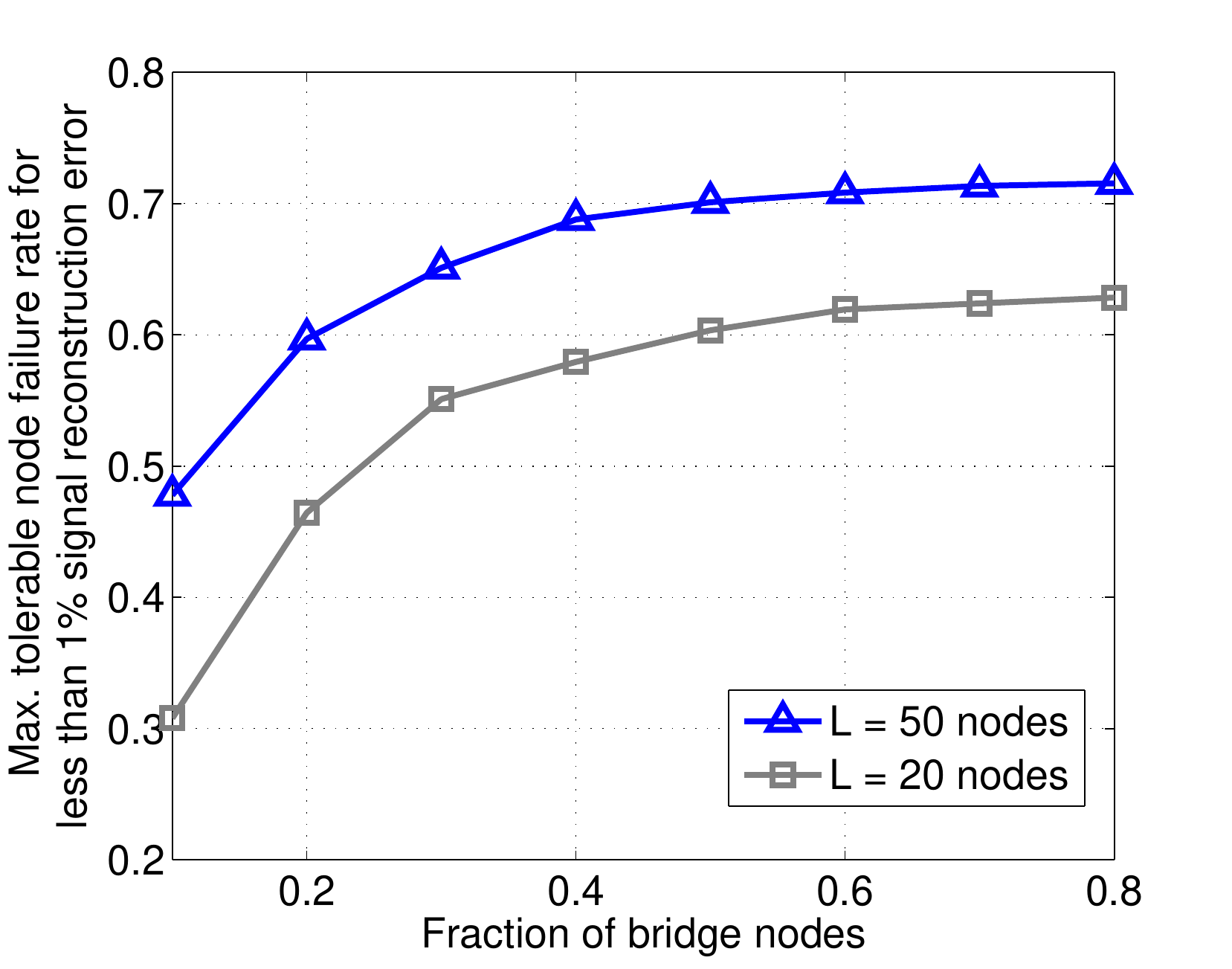}
\caption{Plot illustrating the trade off between the density of bridge nodes and the robustness of the proposed CB-DSBL algorithm to random node failures. For a given fraction of bridge nodes (no. of bridge nodes / $L$), each point on the curve represents the average node failure rate that can be tolerated by CB-DSBL while still achieving less than $1\%$ signal reconstruction error ($<-20$ dB NMSE). The higher the number of bridge nodes, the more tolerant the network is to random node failures.}
\label{fig:fig_bridge_node_regress}
\end{figure}

\section{Conclusions} \label{sec:conclusions}
In this paper, we proposed a novel iterative Bayesian algorithm called CB-DSBL for decentralized estimation of joint-sparse signals by multiple nodes in a network. 
The CB-DSBL algorithm employs ADMM based decentralized EM procedure to efficiently learn the parameters of a joint sparsity inducing signal prior which is shared 
by all the nodes, and is subsequently used in the MAP estimation of the local signals. The CB-DSBL algorithm is well suited for applications where the privacy of
the signal coefficients is important, as there is no direct exchange of either measurements or signal coefficients between the nodes. Experimental results showed 
that CB-DSBL outperforms existing decentralized algorithms: DRL-1, DCOMP and DCSP, in terms of both NMSE as well as support recovery performance. We also established
R-linear convergence of the underlying decentralized ADMM iterations. The amount of inter-node communication during the ADMM iterations is controlled by restricting each node to
exchange information with only a small subset of its single hop neighbors. For this inter-node communication scheme the ADMM convergence results presented here are
applicable to any consensus driven optimization of a convex objective function. Future extensions of this work could encompass exploiting any inter vector correlation
between the jointly sparse signals. Also, it would be interesting to analyze the convergence of CB-DSBL algorithm in the presence of noisy communication links between
nodes and under asynchronous network operation.

\section*{Appendix}
\renewcommand{\thesubsection}{\Alph{subsection}}

\subsection{Derivation of the M-step Cost Function}\label{App:appendix_mstep_cf}
The conditional expectation in (\ref{mstep_theory}) can be simplified as shown below.
\begingroup
\allowdisplaybreaks
\begin{align}
& \mathbb{E}_{\matX} \ls \log{p(\matY, \matX ; \vgamma)} | \matY; \vgamma^{k} \rs  
\nonumber \\
& =
\mathbb{E}_{[\matX | \matY; \vgamma^{k}]} [ \log{p(\matY | \matX)} +  \log{p(\matX ; \vgamma)}]  \nonumber \\
\label{estep_simplify_0}
& =
\mathbb{E}_{[\matX | \matY; \vgamma^{k}]} \log{p(\matY | \matX)}  +  \sum_{j \in \J} \mathbb{E}_{[\vecx_{j} | \vecy_{j}; \vgamma^{k}]} \log{p(\vecx_{j} ; \vgamma)}.
\end{align}
\endgroup
Using (\ref{signal_prior_for_x}), and discarding the terms independent of $\vgamma$ in (\ref{estep_simplify_0}), the M-step objective function 
$Q(\vgamma | \vgamma^{k})$ is given by
\begingroup
\allowdisplaybreaks
\begin{eqnarray} 
Q(\vgamma | \vgamma^{k}) 
&=& 
\sum_{j \in \J}\mathbb{E}_{[\vecx_{j} | \vecy_{j}, \vgamma^{k}]} \; \left( -\frac{1}{2}\log{|\mathbf{\Gamma}|} - \frac{1}{2} \vecx_{j}^{T}\mathbf{\Gamma}^{-1} \vecx_{j} \right)  \nonumber \\
&& \hspace{-2cm} =
-\frac{1}{2} \sum_{j \in \J} 
\lb
\log{|\mathbf{\Gamma}|} 
+ 
\sum_{i = 1}^{n} 
 \frac{ \mathbb{E}_{[\vecx_{j} \sim \mathcal{N}(\vmu_{j}^{k+1}, \matSigma_{j}^{k+1})]}\vecx_{j}(i)^{2} }{\vgamma(i)} 
\rb
\nonumber \\
&& \hspace{-2cm} =
-\frac{1}{2}
\sum_{j \in \J}\sum_{i = 1}^{n} 
\lb
\log{\vgamma(i)} + \frac{\matSigma_{j}^{k+1}(i,i) + \vmu_{j}^{k+1}(i)^{2}}{\vgamma(i)}  
\rb.
\end{eqnarray}
\endgroup

\subsection{Derivation of the Simplified Update for $\vgammab$}\label{App:simplify_admm_iters}
By summing the dual variable update rule (\ref{admm_iteration3}) across all nodes, the following holds for all $b \in \B$
\begin{eqnarray} \label{admm_iter3_summed}
\sum_{j \in \N_{b}}(\vlambdajb)^{r+1} = \sum_{j \in \N_{b}}(\vlambdajb)^{r} + \rho \sum_{j \in \N_{b}}\vgammaj^{r+1} - \rho |\N_{b}| \vgammab^{r+1}.
\end{eqnarray}
Plugging (\ref{vgammab_update}) in (\ref{admm_iter3_summed}), we obtain
\begin{equation} \label{lagrangianmult_locavg_zero}
 \sum_{j \in \N_{b}} (\vlambdajb)^{r+1} = 0 
 \hspace{1cm} \forall \; b \in \B.
\end{equation}
Using (\ref{lagrangianmult_locavg_zero}) in (\ref{vgammab_update}), we obtain the simplified update for~$\vgamma_{b}$.

\subsection{Proof of Theorem 1} \label{App:admm_convergence_proof}
The proof of the convergence of ADMM discussed in the sequel is a based on the proof given in \cite{WataoYin13ADMMConvergence}. However, our proof differs from the one in \cite{WataoYin13ADMMConvergence} due to the different scheme adopted here, which uses the auxiliary/bridge nodes to enforce consensus between the nodes. We make the following assumptions about the objective function $f$ in (\ref{admm_primal}).
\begin{enumerate}
 \item $f$ is twice differentiable and strongly convex in $\vgamma_{\J}$. This implies that there exists $m_{f} \in \Real_{+} \backslash \{0\}$ such that,
 for all $\vgamma_{\J}, \vgamma_{\J}^{'}$, the following holds
\begin{equation} \label{strongly_convex_f}
\langle \nabla f(\vgamma_{\J})^{T} - \nabla f(\vgamma_{\J}^{'})^{T},
\vgamma_{\J} - \vgamma_{\J}^{'} \rangle 
\;\; \geq \;\;
m_{f} ||\vgamma_{\J} - \vgamma_{\J}^{'}||_{2}^{2}.  
\end{equation}
\item $\nabla f$ is Lipschitz continuous, i.e., there exists a positive scalar $M_{f}$ such that, for all $\vgamma_{\J}, \vgamma_{\J}^{'}$, we have 
\begin{equation} \label{lipschitz_continuity_for_gradient}
||\nabla f(\vgamma_{\J}) - \nabla f(\vgamma_{\J}^{'})||_{2} \leq M_{f} ||\vgamma_{\J} - \vgamma_{\J}^{'}||_{2}.
\end{equation}
\end{enumerate}
Let $r$ denote the ADMM iteration count. From the zero subgradient optimality conditions corresponding to (\ref{admm_iteration1}) and (\ref{admm_iteration2}), we have
\begin{equation} \label{admm_eq1}
\nabla f(\vgamma_{\J}^{r+1})^{T} 
+ \matE_{1}^{T}\vlambda^{r} 
+ \rho \matE_{1}^{T}\matE_{1}\vgamma_{\J}^{r+1} 
+ \rho \matE_{1}^{T}\matE_{2}\vgamma_{\B}^{r}
= 0
\end{equation} 
\begin{equation} \label{admm_eq2}
\matE_{2}^{T}\vlambda^{r} 
+ \rho \matE_{2}^{T}\matE_{2}\vgamma_{\B}^{r+1} 
+ \rho \matE_{2}^{T}\matE_{1}\vgamma_{\J}^{r+1}
= 0.
\end{equation} 
From the dual variable update equation, we have,
\begin{equation} \label{admm_eq3}
\vlambda^{r+1} = \vlambda^{r} + \rho(\matE_{1}\vgamma_{\J}^{r+1} + \matE_{2} \vgamma_{\B}^{r+1}).  
\end{equation} 
Premultiplying (\ref{admm_eq3}) with $\matE_{1}^{T}$ and $\matE_{2}^{T}$ followed by its summation to (\ref{admm_eq1}) and (\ref{admm_eq2}) respectively gives 
\begin{equation} \label{admm_diff_eq1}
\nabla f(\vgamma_{\J}^{r+1})^{T} 
+ \matE_{1}^{T}\vlambda^{r+1} 
+ \rho \matE_{1}^{T}\matE_{2}(\vgamma_{\B}^{r} - \vgamma_{\B}^{r+1}) 
= 0.
\end{equation} 
\begin{equation} \label{admm_diff_eq2}
\matE_{2}^{T}\vlambda^{r+1} = 0.
\end{equation} 
By initializing $\vlambda$ equal to zero, $\vlambda^{r}$ always lies in the nullspace $\mathcal{N}(\matE_{2}^{T})$, physically implying that the sum of the Lagrange multipliers of nodes connected to a given bridge node is always equal to zero.
Let us assume $\vgamma_{\J}^{r} \rightarrow \vgamma_{\J}^{*}$, $\vgamma_{\B}^{r} \rightarrow \vgamma_{\B}^{*}$ and 
$\vlambda^{r} \rightarrow \vlambda^{*}$ as $r \rightarrow \infty$, then putting $r \rightarrow \infty$ in (\ref{admm_eq3}), (\ref{admm_diff_eq1}) and (\ref{admm_diff_eq2})
gives
\begin{equation} \label{admm_conv_eq1}
\nabla f(\vgamma_{\J}^{*})^{T} 
+ \matE_{1}^{T}\vlambda^{*} 
= 0
\end{equation} 
\begin{equation} \label{admm_conv_eq2}
\matE_{2}^{T}\vlambda^{*} = 0
\end{equation} 
\begin{equation} \label{admm_conv_eq3}
\matE_{1}\vgamma_{\J}^{*} + \matE_{2}\vgamma_{\B}^{*} = 0.  
\end{equation} 
Note that the condition (\ref{admm_conv_eq3}) implies consensus among $\vgamma_{j}, j \in \J$, upon convergence. By subtracting (\ref{admm_conv_eq1}),
(\ref{admm_conv_eq2}) and (\ref{admm_conv_eq3}) from (\ref{admm_diff_eq1}), (\ref{admm_diff_eq2}) and (\ref{admm_eq3}), respectively, we get the desired
difference terms needed for showing convergence results.
\begin{eqnarray} \label{diff_term1}
 \nabla f(\vgamma_{\J}^{r+1})^{T} - \nabla f(\vgamma_{\J}^{*})^{T}
+ \matE_{1}^{T}(\vlambda^{r+1} - \vlambda^{*}) 	 
\nonumber \\
 + \rho \matE_{1}^{T}\matE_{2}(\vgamma_{\B}^{r} - \vgamma_{\B}^{r+1}) = 0    
\end{eqnarray}
\begin{equation} \label{diff_term2}
\matE_{2}^{T}(\vlambda^{r+1} - \vlambda^{*}) = 0
\end{equation}
\begin{equation} \label{diff_term3}
\vlambda^{r+1} - \vlambda^{r} = 
\rho \matE_{1}(\vgamma_{\J}^{r+1} - \vgamma_{\J}^{*}) 
+ \rho \matE_{2}(\vgamma_{\B}^{r+1} - \vgamma_{\B}^{*}).
\end{equation}
Premultiplying (\ref{diff_term3}) with $\matE_{2}^{T}$ and using (\ref{admm_diff_eq2}), we obtain,
\begin{equation} \label{diff_term4}
\matE_{2}^{T} \matE_{1}(\vgamma_{\J}^{r+1} - \vgamma_{\J}^{*}) 
= - \matE_{2}^{T} \matE_{2}(\vgamma_{\B}^{r+1} - \vgamma_{\B}^{*}).
\end{equation}
Further from strong convexity of $f$ and using (\ref{diff_term1}), we can write,
\begin{flalign}
& m_{f} ||\vgamma_{\J}^{r+1} - \vgamma_{\J}^{*}||_{2}^{2} 
\leq
\langle \matE_{1}^{T} (\vlambda^{*} - \vlambda^{r+1}),
\vgamma_{\J}^{r+1} - \vgamma_{\J}^{*} \rangle 
\nonumber \\
& \hspace{1.5cm}
\;+\; 
\rho \langle \matE_{1}^{T} \matE_{2} (\vgamma_{\B}^{r+1} - \vgamma_{\B}^{r}),
(\vgamma_{\J}^{r+1} - \vgamma_{\J}^{*}) \rangle 
\displaybreak[0] \nonumber \\ 
& = 
\langle (\vlambda^{*} - \vlambda^{r+1}),
\matE_{1} (\vgamma_{\J}^{r+1} - \vgamma_{\J}^{*}) \rangle 
\nonumber \\
& \hspace{1.5cm}
\;+\; 
\rho \langle (\vgamma_{\B}^{r+1} - \vgamma_{\B}^{r}),
\matE_{2}^{T} \matE_{1}(\vgamma_{\J}^{r+1} - \vgamma_{\J}^{*}) \rangle 
\displaybreak[0] \nonumber  \\
& = 
\langle (\vlambda^{*} - \vlambda^{r+1}),
\matE_{1} (\vgamma_{\J}^{r+1} - \vgamma_{\J}^{*}) \rangle 
\nonumber \\
& \hspace{1.5cm}
\;-\; 
\rho \langle (\vgamma_{\B}^{r+1} - \vgamma_{\B}^{r}),
\matE_{2}^{T} \matE_{2}(\vgamma_{\B}^{r+1} - \vgamma_{\B}^{*}) \rangle 
\displaybreak[0] \nonumber  \\
& = 
\langle (\vlambda^{*} - \vlambda^{r+1}),
\frac{1}{\rho}(\vlambda^{r+1} - \vlambda^{r})
\;-\; 
\matE_{2} (\vgamma_{\B}^{r+1} - \vgamma_{\B}^{*}) \rangle  
\nonumber \\
& \hspace{1.5cm}
\;-\; 
\rho \langle (\vgamma_{\B}^{r+1} - \vgamma_{\B}^{r}),
\matE_{2}^{T} \matE_{2}(\vgamma_{\B}^{r+1} - \vgamma_{\B}^{*}) \rangle 
\displaybreak[0] \nonumber  \\
 & = 
\frac{1}{\rho} \langle (\vlambda^{*} - \vlambda^{r+1}),
(\vlambda^{r+1} - \vlambda^{r}) \rangle
\nonumber \\
& \hspace{1.5cm}
\;+\; 
\rho \langle \matE_{2}(\vgamma_{\B}^{r+1} - \vgamma_{\B}^{r}),
\matE_{2}(\vgamma_{\B}^{*} - \vgamma_{\B}^{r+1}) \rangle. 
\label{vgammaj_matnorm_ineq1}
\end{flalign}  
Here, the first identity is obtained by using a property of the inner product. The second, third and fourth identities are 
obtained by using (\ref{diff_term4}), (\ref{diff_term3}) and (\ref{diff_term2}) respectively.  
By defining $\linebreak \vecu = [(\matE_{2}\vgamma_{\B})^{T} \; | \; \vlambda^{T} ]^{T}$, the RHS in (\ref{vgammaj_matnorm_ineq1}) can be expressed as 
a matrix norm $||\vecu^{r} - \vecu^{r+1}||_{\matG} = (\vecu^{r} - \vecu^{r+1})^{T} \matG (\vecu^{r+1} - \vecu^{*})$, where $\matG$ is given by 
\begin{equation*} \label{defn_G_mat}
 \matG =  
 \begin{bmatrix}
  \rho I_{n|B|} & 0 \\
       0 & \frac{1}{\rho}I_{N_{C}}
 \end{bmatrix}
 .
\end{equation*}
Using the identity:
\begin{eqnarray} \label{mat_norm_identity1}
&& \hspace{-1.2cm} 2(\vecu^{r} - \vecu^{r+1})^{T} \matG (\vecu^{r+1} - \vecu^{*}) = 
\nonumber \\
&& \hspace{-0.4cm}
||\vecu^{r} - \vecu^{*}||_{G}^{2}
- ||\vecu^{r+1} - \vecu^{*}||_{G}^{2}
- ||\vecu^{r} - \vecu^{r+1}||_{G}^{2},
\end{eqnarray}
the inequality in (\ref{vgammaj_matnorm_ineq1}) can be rewritten as
\begin{eqnarray} \label{vgammaj_matnorm_ineq2} 
&& \hspace{-1cm}
 m_{f} ||\vgamma_{\J}^{r+1} - \vgamma_{\J}^{*}||_{2}^{2} \leq 
\nonumber \\
&& \hspace{-0.8cm}
\frac{1}{2} 
\lb
||\vecu^{r} - \vecu^{*}||_{G}^{2}
- ||\vecu^{r+1} - \vecu^{*}||_{G}^{2}
- ||\vecu^{r} - \vecu^{r+1}||_{G}^{2} 
 \rb.
\end{eqnarray}
By discarding the non-positive terms in the LHS of (\ref{vgammaj_matnorm_ineq2}), we obtain the following upper bound on the primal optimality 
gap.
\begin{equation} \label{vgammaj_matnorm_ineq3} 
||\vgamma_{\J}^{r+1} - \vgamma_{\J}^{*}||_{2}^{2} 
 \; \leq \;
 \frac{1}{2 m_{f}} ||\vecu^{r} - \vecu^{*}||_{G}^{2}.
\end{equation}
In Appendix \ref{App:monotonic_convergence_proof}, we prove the monotonic convergence of $\vecu^{r}$ to $\vecu^{*}$. Thus, from the monotonic decay of the RHS in (\ref{vgammaj_matnorm_ineq3}), we have R-linear convergence of $\vgamma_{\J}^{r}$ to $\vgamma_{\J}^{*}$.

\subsection{Proof of monotonic convergence of $\vecu^{r}$ to $\vecu^{*}$}\label{App:monotonic_convergence_proof}
In order to prove monotonic convergence of $\vecu^{r}$ to $\vecu^{*}$, it is sufficient to show that there exists a $\delta > 0$ such that
\begin{equation} \label{vecu_monotonic_convergence}
 ||\vecu^{r+1} - \vecu^{*}||_{G}^{2} 
 \; \leq \;
 \frac{1}{1+ \delta} ||\vecu^{r} - \vecu^{*}||_{G}^{2}. 
\end{equation} 
By rearranging the terms in (\ref{vgammaj_matnorm_ineq2}), we have
\begin{eqnarray} \label{vgammaj_matnorm_ineq2a} 
||\vecu^{r+1} - \vecu^{*}||_{G}^{2} 
&\leq&
||\vecu^{r} - \vecu^{*}||_{G}^{2}
- ||\vecu^{r+1} - \vecu^{r}||_{G}^{2} 
\nonumber \\
&& 
- 2 m_{f} ||\vgamma_{\J}^{r+1} - \vgamma_{\J}^{*}||_{2}^{2}
.
\end{eqnarray}
By comparing the terms in (\ref{vecu_monotonic_convergence}) and (\ref{vgammaj_matnorm_ineq2a}), it is easy to see that if 
\begin{equation} \label{to_show_for_vecu_convergence_1}
2 m_{f} ||\vgamma_{\J}^{r+1} - \vgamma_{\J}^{*}||_{2}^{2} 
+ ||\vecu^{r+1} - \vecu^{r}||_{G}^{2} 
 \geq  
\delta ||\vecu^{r+1} - \vecu^{*}||_{G}^{2},	
\end{equation}
or equivalently,
\begin{eqnarray} \label{to_show_for_vecu_convergence_2}
&& \hspace{-0.5cm}
2 m_{f} ||\vgamma_{\J}^{r+1} - \vgamma_{\J}^{*}||_{2}^{2} 
+ \rho ||\matE_{2}(\vgamma_{\B}^{r+1} - \vgamma_{\B}^{r})||_{2}^{2}
\nonumber \\
&& \hspace{0.1cm}
+ \frac{1}{\rho} ||\vlambda^{r+1} - \vlambda^{r}||_{2}^{2}
\geq
\nonumber \\
&& \hspace{0.1cm}
\delta 
\lb
\rho ||\matE_{2}(\vgamma_{\B}^{r+1} - \vgamma_{\B}^{*})||_{2}^{2}
+ \frac{1}{\rho} ||\vlambda^{r+1} - \vlambda^{*}||_{2}^{2}
\rb,
\end{eqnarray}
then, we have monotonic convergence of $\vecu^{r}$ to $\vecu^{*}$. We now proceed to derive upper bounds for the RHS terms 
$||\matE_{2}(\vgamma_{\B}^{r+1} - \vgamma_{\B}^{*})||_{2}$ and $||\vlambda^{r+1} - \vlambda^{*}||_{2}$ in terms of the LHS terms. 
These upper bounds will be used in the sequel to establish the inequality in (\ref{to_show_for_vecu_convergence_2}).

\vspace{0.4cm}
\begin{itemize}
 \item \emph{An upper bound for $\rho||\matE_{2}(\vgamma_{\B}^{k+1} - \vgamma_{\B}^{*})||_{2}$}\\
 Note that for any two vectors $\veca$, $\vecb$ and a scalar $\mu > 1$
\begin{equation} \label{matrix_norm_ineq1} 
||\veca+\vecb||_{2}^{2} \; \geq \; (1-\mu)||\veca||_{2}^{2} + \left(1 - \frac{1}{\mu} \right)||\vecb||_{2}^{2}. 
\end{equation}
Applying inequality (\ref{matrix_norm_ineq1}) to (\ref{diff_term3}), we get the following upper bound.
\begin{eqnarray}
&& \rho||\matE_{2}(\vgamma_{\B}^{r+1} - \vgamma_{\B}^{*})||_{2}^{2} 
\leq 
\left(\frac{\mu}{\mu-1} \right) \frac{1}{\rho}||\vlambda^{r+1} - \vlambda^{r}||_{2}^{2} 
\nonumber \\
&& \hspace{1cm} 
+ 
\left(\mu \rho \sigma_{\text{max}}^{2}(\matE_{1}) \right) ||\vgamma_{\J}^{r+1} - \vgamma_{\J}^{*}||_{2}^{2}.    
\label{upper_bound_1}
\end{eqnarray}
Here, $\sigma_{\text{max}}(\matE_{1})$ is the largest singular value of $\matE_{1}$.

\vspace{0.4cm}
\item \emph{An upper bound for $\frac{1}{\rho}||\vlambda^{r+1} - \vlambda^{*}||_{2}$} \\
Similar application of inequality (\ref{matrix_norm_ineq1}) to (\ref{diff_term1}) results in an upper bound for $\frac{1}{\rho}||\vlambda^{r+1} - \vlambda^{*}||_{2}$
as shown below.
\begin{align}
& ||\matE_{1}^{T}(\vlambda^{r+1} - \vlambda^{*})||_{2}^{2} 
\; \leq \;
\nonumber \\
& \hspace{0.5cm}
\frac{\nu}{(\nu - 1)}||\nabla f(\vgamma_{\J}^{r+1})^{T} - \nabla f(\vgamma_{\J}^{*})^{T}||_{2}^{2}
\nonumber \\
& \hspace{0.5cm} 
\;+\; 
\nu||\rho \matE_{1}^{T}\matE_{2}(\vgamma_{\B}^{r} - \vgamma_{\B}^{r+1})||_{2}^{2}
\nonumber \\
\implies & \frac{1}{\rho}||\vlambda^{r+1} - \vlambda^{*}||_{2}^{2} 
\; \leq \;
\nonumber \\
& \hspace{0.5cm}
\frac{\nu}{\rho(\nu - 1)\sigma_{\text{min}}^{2}(\matE_{1})}||\nabla f(\vgamma_{\J}^{r+1})^{T} - \nabla f(\vgamma_{\J}^{*})^{T}||_{2}^{2} 
\nonumber \\
& \hspace{0.5cm} 
\; + \;  
\frac{\nu \rho\sigma_{\text{max}}^{2}(\matE_{1})}{\sigma_{\text{min}}^{2}(\matE_{1})}|| \matE_{2}(\vgamma_{\B}^{r} - \vgamma_{\B}^{r+1})||_{2}^{2}.  
\end{align} 
From Lipschitz continuity of $\nabla f$ (\ref{lipschitz_continuity_for_gradient}), we obtain the following modified upper bound.
\begin{align} \label{upper_bound_2}
& \frac{1}{\rho}||\vlambda^{r+1} - \vlambda^{*}||_{2}^{2} 
\; \leq \; 
\frac{\nu M_{f}^{2}}{\rho(\nu - 1)\sigma_{\text{min}}^{2}(\matE_{1})}
||\vgamma_{\J}^{r+1} - \vgamma_{\J}^{*}||_{2}^{2}  
\nonumber \\
& \hspace{1cm}
+ 
\frac{\nu \rho\sigma_{\text{max}}^{2}(\matE_{1})}{\sigma_{\text{min}}^{2}(\matE_{1})}|| \matE_{2}(\vgamma_{\B}^{r} - \vgamma_{\B}^{r+1})||_{2}^{2}.
\end{align}
Here, $\sigma_{\text{min}}(\matE_{1})$ denotes the smallest singular value of $\matE_{1}$ and $\nu$ is a positive scalar greater than unity.
\end{itemize}

By summing the upper bounds in \eqref{upper_bound_1} and \eqref{upper_bound_2}, we get
\begin{align}
&  
 \rho ||\matE_{2}(\vgamma_{\B}^{r+1} - \vgamma_{\B}^{*})||_{2}^{2}
+ \frac{1}{\rho}||\vlambda^{r+1} - \vlambda^{*}||_{2}^{2} 
\; \leq \;
\nonumber \\
& \hspace{1cm} 
\displaystyle
\frac{1}{\delta}
\left (
2 m_{f}||\vgamma_{\J}^{r+1} - \vgamma_{\J}^{*}||_{2}^{2}
+
\rho|| \matE_{2}(\vgamma_{\B}^{r} - \vgamma_{\B}^{r+1})||_{2}^{2} 
\right.
\nonumber \\
& \hspace{1cm} 
+ 
\left. 
\frac{1}{\rho}||\vlambda^{r+1} - \vlambda^{r}||_{2}^{2} \right )
\label{delta_ineq} 
\end{align} 
where
\begin{equation} \label{defn_delta1}
\delta 
\! \triangleq \!
\lb
\underset{\mu, \nu \geq 1}{\text{max}} \lb \displaystyle
\frac{\frac{\nu M_{f}^{2}}{\rho(\nu - 1)\sigma_{\text{min}}^{2}(\matE_{1})} \!+\!
\mu \rho \sigma_{\text{max}}^{2}(\matE_{1})}{2 m_{f}} 
,
\nu \kappa
,
\frac{\mu}{\mu -1}
\rb
\rb ^{-1}.
\end{equation}
Thus, for $\delta$ as defined above, the inequality (\ref{to_show_for_vecu_convergence_2}) holds and consequently the inequality
(\ref{vecu_monotonic_convergence}) also holds, thereby establishing the Q-linear convergence of the sequence $\vecu^{k}$ to $\vecu^{*}$.

\subsection{Proof of Theorem \ref{theorem_optimal_rho_n_delta}} \label{App:theorem_optimal_rho_n_delta_proof}

Let $\delta_{\text{opt}}$ denote the maximum value of $\delta$ for any $\rho > 0$. Then, we can write
\begin{align} \label{defn_delta_opt_compact} 
\delta_{\text{opt}} &=
\displaystyle
\underset{\rho > 0}{\text{max}} \lb
\underset{\mu, \nu \geq 1}{\text{max}} \lb
\text{min} \lb
f_{1}(\mu, \nu, \rho), f_{2}(\nu), f_{3}(\mu)
\rb \rb \rb 
\nonumber \\
&= \;
\underset{\mu, \nu \geq 1}{\text{max}} \lb
\displaystyle
\underset{\rho > 0}{\text{max}} \lb
\text{min} \lb
f_{1}(\mu, \nu, \rho), f_{2}(\nu), f_{3}(\mu)
\rb \rb \rb 
\end{align}
where the scalar functions $f_{1}$, $f_{2}$ and $f_{3}$ represent the three terms inside the minimum operator in  (\ref{defn_delta}). 
The following two Lemmas summarize the optimization of $\delta$ in (\ref{defn_delta_opt_compact}).


\begin{lemma} \label{lem_alternate_delta_opt_algo}
$\delta_{\text{opt}} = \underset{\mu, \nu \geq 1}{\text{max}} \lc \text{min} \lb \bar{f}_{1}(\mu, \nu), f_{2}(\nu), f_{3}(\mu) \rb \rc $
where, $\bar{f}_{1}(\mu, \nu) \triangleq \underset{\rho > 0}{\text{max}} \; f_{1}(\mu, \nu, \rho) $.
\end{lemma}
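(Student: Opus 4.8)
The plan is to exploit the fact that, of the three functions $f_1(\mu,\nu,\rho)$, $f_2(\nu)$ and $f_3(\mu)$ appearing inside the inner minimum, only $f_1$ depends on $\rho$; the other two are constant in $\rho$. Starting from the second line of (\ref{defn_delta_opt_compact}), in which the $\max$ over $\rho$ already sits inside the $\max$ over $(\mu,\nu)$, it therefore suffices to establish, for each fixed $(\mu,\nu)$ with $\mu,\nu \ge 1$, the scalar identity
\begin{equation*}
\underset{\rho > 0}{\max}\; \min\lb f_1(\mu,\nu,\rho),\, f_2(\nu),\, f_3(\mu) \rb
= \min\lb \bar{f}_{1}(\mu,\nu),\, f_2(\nu),\, f_3(\mu) \rb ,
\end{equation*}
after which the outer $\max_{\mu,\nu}$ may be applied to both sides. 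Abbreviating $c \triangleq \min(f_2(\nu), f_3(\mu))$, which does not depend on $\rho$, this is precisely the claim $\max_{\rho>0}\min(f_1, c) = \min(\bar{f}_{1}, c)$.

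The engine of the argument is the elementary fact that, for any real-valued $h(\rho)$ whose supremum $M = \sup_{\rho} h(\rho)$ is \emph{attained} at some $\rho^\star$, and any constant $c$, one has $\max_{\rho} \min(h(\rho), c) = \min(M, c)$. This follows from two one-line bounds: $\min(h(\rho), c) \le \min(M, c)$ for every $\rho$ gives the ``$\le$'' direction, while evaluating at $\rho^\star$ gives $\min(h(\rho^\star), c) = \min(M, c)$, hence the ``$\ge$'' direction. Applying this with $h = f_1(\mu,\nu,\cdot)$ and $M = \bar{f}_{1}(\mu,\nu)$ yields the displayed scalar identity, and taking $\max_{\mu,\nu}$ on both sides gives the Lemma. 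Note that no continuity or intermediate-value argument is required, since we only need to evaluate the minimum at the single maximizing point $\rho^\star$.

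The one substantive point, therefore, is to confirm that $f_1$ attains its supremum over $\rho > 0$. First I would observe that the denominator of $f_1$ has the form $A/\rho + B\rho$ with $A \triangleq \nu M_f^2/((\nu-1)\sigma_{\text{min}}^2) > 0$ and $B \triangleq \mu \sigma_{\text{max}}^2 > 0$; by the AM--GM inequality it is minimized at the finite, strictly positive value $\rho^\star = \sqrt{A/B}$. Hence $f_1(\mu,\nu,\cdot)$ attains its maximum $\bar{f}_{1}(\mu,\nu)$ at $\rho^\star$, so the elementary fact above applies verbatim. I do not anticipate any real difficulty here: the only thing to be careful about is that the maximizer $\rho^\star$ is interior (strictly positive and finite) rather than a boundary limit, which AM--GM guarantees; the explicit value of $\bar{f}_{1}$ is not needed for this Lemma and is in any case computed en route to Theorem \ref{theorem_optimal_rho_n_delta}.
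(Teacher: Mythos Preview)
Your proposal is correct and follows essentially the same route as the paper: the paper's proof also establishes the two inequalities by (i) restricting $\rho$ to the maximizer $\bar{\rho}$ of $f_1$ to obtain the lower bound, and (ii) using $\bar{f}_{1}\ge f_{1}$ pointwise to obtain the upper bound. Your version simply carries this out at the level of a fixed $(\mu,\nu)$ before taking the outer maximum, and is slightly more explicit in verifying (via AM--GM) that the maximum over $\rho$ is attained; the only caveat is that your expression for $A$ requires $\nu>1$, but at $\nu=1$ both sides of the scalar identity are zero and the claim holds trivially.
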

\begin{proof}
See Appendix \ref{App:lem_alternate_delta_opt_algo_proof}.
\end{proof}

\begin{lemma} \label{lemma_equal_is_optimal}
There exists a unique $(\mu, \nu) = \lb\mu^{*}, \nu^{*}\rb$ which simultaneously satisfies 
\begin{enumerate}
 \item $ \bar{f}_{1} = f_{2} = f_{3}$
 \item $\mu \ge 1, \nu \ge 1$.
\end{enumerate}
Further, such a $(\mu^*, \nu^*)$ maximizes $g(\mu, \nu) = \text{min}\; \lb \bar{f}_{1}(\mu, \nu), f_{2}(\nu), f_{3}(\mu) \rb$ over $\mu, \nu \ge 1$.   
\end{lemma}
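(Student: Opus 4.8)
The plan is to make the three competing functions fully explicit, record their monotonicities, then use a balancing/perturbation argument to show that any maximizer of $g$ must equalize all three terms, and finally pin down the equalizing pair by a one-dimensional monotonicity argument. First I would compute $\bar{f}_{1}$ in closed form as defined in Lemma~\ref{lem_alternate_delta_opt_algo}: the denominator $\frac{\nu M_{f}^{2}}{\rho(\nu-1)\sigma_{\text{min}}^{2}} + \mu\rho\sigma_{\text{max}}^{2}$ of $f_{1}$ has the form $A/\rho + B\rho$, whose minimum over $\rho>0$ is $2\sqrt{AB}$ attained at $\rho=\sqrt{A/B}$, so that
$$\bar{f}_{1}(\mu,\nu) = \frac{1}{\kappa_{f}\sqrt{\kappa}}\sqrt{\frac{\nu-1}{\mu\nu}}.$$
Together with $f_{2}(\nu)=\frac{1}{\nu\kappa}$ and $f_{3}(\mu)=1-\frac{1}{\mu}$, this records the key monotonicities: $\bar{f}_{1}$ is strictly decreasing in $\mu$ and strictly increasing in $\nu$; $f_{2}$ is strictly decreasing in $\nu$ and constant in $\mu$; and $f_{3}$ is strictly increasing in $\mu$ and constant in $\nu$. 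I would also note the boundary behaviour: $g=0$ whenever $\mu=1$ (forcing $f_{3}=0$) or $\nu=1$ (forcing $\bar{f}_{1}=0$), and $g\to 0$ as $\mu\to\infty$ or $\nu\to\infty$; since $g$ is continuous and strictly positive in the interior, its maximum is attained at some interior point with $\mu,\nu>1$.

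Next I would show that at any such maximizer all three functions coincide. Suppose not; then the minimum is achieved by at most two of them, and the monotonicities above permit a coordinated perturbation $(d\mu, d\nu)$ that strictly raises every binding term while keeping the non-binding (strictly larger) term above the minimum for a small enough step. The three two-tie configurations are handled symmetrically: if $\bar{f}_{1}=f_{2}<f_{3}$, decrease $\mu$ slightly (raising $\bar{f}_{1}$, using the slack in $f_{3}$) together with a small decrease in $\nu$ calibrated so that both $\bar{f}_{1}$ and $f_{2}$ increase; if $\bar{f}_{1}=f_{3}<f_{2}$, increase $\nu$ (raising $\bar{f}_{1}$, using the slack in $f_{2}$) together with an increase in $\mu$ to raise $f_{3}$; and if $f_{2}=f_{3}<\bar{f}_{1}$, simultaneously decrease $\nu$ and increase $\mu$, which raises $f_{2}$ and $f_{3}$ while $\bar{f}_{1}$ stays above for a small step. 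Likewise, if a single term is the strict minimum, one variable can be moved to raise it alone. Each case contradicts optimality, so $\bar{f}_{1}=f_{2}=f_{3}$ at the maximizer, establishing property (1).

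Finally I would establish existence and uniqueness of the equalizing pair. Solving $f_{2}(\nu)=f_{3}(\mu)$ gives the explicit relation $\nu=\frac{\mu}{\kappa(\mu-1)}$, and imposing $\nu\ge 1$ confines $\mu$ to the interval $\big(1,\tfrac{\kappa}{\kappa-1}\big]$, along which $\nu$ decreases monotonically from $+\infty$ to $1$. I would then study $h(\mu)=\bar{f}_{1}(\mu,\nu(\mu))-f_{3}(\mu)$ on this curve: as $\mu$ increases, $f_{3}$ strictly increases while $\bar{f}_{1}$ strictly decreases (both because $\mu$ grows and because $\nu$ shrinks, each lowering $\bar{f}_{1}$), so $h$ is strictly decreasing. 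At the endpoints, $h\to\frac{1}{\kappa_{f}\sqrt{\kappa}}>0$ as $\mu\to 1^{+}$ and $h\to-\frac{1}{\kappa}<0$ as $\mu\to\frac{\kappa}{\kappa-1}$, so by the intermediate value theorem $h$ has a unique zero $(\mu^{*},\nu^{*})$, the unique point with $\bar{f}_{1}=f_{2}=f_{3}$ and $\mu,\nu\ge 1$. Since a maximizer exists in the interior and every maximizer equalizes the three terms, $(\mu^{*},\nu^{*})$ is exactly that maximizer, which proves the lemma. I expect the main obstacle to be the perturbation argument of the second paragraph: because $\bar{f}_{1}$ depends on both variables with opposite monotonicities, ruling out the two-tie cases requires exhibiting an explicit feasible direction that strictly increases both binding terms at once, rather than a single-coordinate move; the one-dimensional reduction along $f_{2}=f_{3}$ together with the closed form for $\bar{f}_{1}$ then makes uniqueness routine.
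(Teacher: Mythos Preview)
Your proposal is correct and follows essentially the same strategy as the paper's proof in Appendix~\ref{App:lem_equal_is_optimal_proof}: restrict attention to a bounded region by exploiting the boundary/decay behaviour of $g$, invoke existence of a maximizer, and then rule out non-equalizing points by a case-by-case perturbation argument (the paper's Cases I--III correspond exactly to your single-term-minimum and two-tie configurations). Your treatment is in fact more complete than the paper's, since you explicitly establish uniqueness of the equalizing pair via the one-dimensional reduction along the curve $f_{2}=f_{3}$ and the intermediate value theorem, whereas the paper simply asserts ``uniqueness of $(\mu^{*},\nu^{*})$'' without supplying this step.
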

\begin{proof}
See Appendix \ref{App:lem_equal_is_optimal_proof}.
\end{proof}
The scalar function $f_{1}$ in Lemma \ref{lem_alternate_delta_opt_algo} is maximized 
at $\rho = \dfrac{M_f}{\sigma_{\text{max}} \sigma_{\text{min}}} \sqrt{\dfrac{\nu}{\mu(\nu-1)}}$ to 
give $\bar{f_{1}}= \displaystyle\frac{M_{f}}{\sigma_{\text{min}}\sigma_{\text{max}}}\sqrt{\frac{\nu}{\mu(\nu-1)}}$. 
Further, by solving for the unique tuple $(\mu^*, \nu^*)$ which satisfies the two optimality conditions specified 
in Lemma \ref{lemma_equal_is_optimal}, the optimal augmented Lagrangian parameter $\rho$ and corresponding 
optimal $\delta$ can be shown to be equal to the $\rho_{\text{opt}}$ and $\delta_{\text{opt}}$ as defined 
in Theorem \ref{theorem_optimal_rho_n_delta}.

\subsection{Proof of Lemma \ref{lem_alternate_delta_opt_algo}} \label{App:lem_alternate_delta_opt_algo_proof}
Let $\bar{\rho} \triangleq \underset{\rho > 0}{\text{arg max }} f_{1}$. Then, by restricting the feasible set in (\ref{defn_delta_opt_compact}), we have,
\begin{eqnarray}
 \delta_{\text{opt}} &\geq &  \underset{\mu, \nu \geq 1}{\text{max}} \ls \underset{\rho = \rho_{\mu, \nu}}{\text{max}} \lc \text{min} \lb f_{1}(\mu, \nu, \rho), f_{2}(\nu), f_{3}(\mu) \rb \rc \rs 
 \nonumber
 \\
 &=&  \underset{\mu, \nu \geq 1}{\text{max}} \lc \text{min} \lb \tilde{f}_{1}(\mu, \nu), f_{2}(\nu), f_{3}(\mu) \rb \rc. 
 \label{delta_opt_lb}
\end{eqnarray}
On the other hand, from (\ref{defn_delta_opt_compact}) and using $\tilde{f}_{1} \ge f_{1}$, 
we have,
\begin{eqnarray}
 \delta_{\text{opt}} &=& 
 \underset{\mu, \nu \geq 1}{\text{max}} \ls \underset{\rho > 0}{\text{max}} \lc \text{min} \lb f_{1}(\mu, \nu, \rho), f_{2}(\nu), f_{3}(\mu) \rb \rc \rs   
 \nonumber
 \\
 &\le&  \underset{\mu, \nu \geq 1}{\text{max}} \lc \text{min} \lb \tilde{f}_{1}(\mu, \nu), f_{2}(\nu), f_{3}(\mu) \rb \rc.
  \label{delta_opt_ub}
\end{eqnarray}
Combining (\ref{delta_opt_lb}) and (\ref{delta_opt_ub}) establishes Lemma \ref{lem_alternate_delta_opt_algo}.

\subsection{Proof of Lemma \ref{lemma_equal_is_optimal}} \label{App:lem_equal_is_optimal_proof}
In order to prove the Lemma, we claim the following.
\begin{enumerate}[label=\alph*)]
 \item For any $\epsilon > 0$, there exist positive constants $B_{\mu}$ and $B_{\nu}$ such that $g(\mu, \nu) \le \epsilon$ when either $\mu \ge B_{\mu}$ or $\nu \ge B_{\nu}$ holds.
 \item Any points $(\mu, \nu)$ which satisfies condition $2$ but does not satisfy condition $1$ cannot be a local maximum of $g$.
\end{enumerate}
Note that claim (a) holds trivially for $B_{\mu} = \frac{m_{f}^{2}}{\kappa \M_{f}^{2} \epsilon^{2}}$ and $B_{\nu} = \frac{1}{\kappa \epsilon}$. 
In order to verify claim (b), let us consider a point $(\mu_{0}, \nu_{0})$ which satisfies condition $2$, but not condition $1$. Then, we need to consider three cases.
\begin{itemize}
 \item \emph{Case-I:} $\tilde{f}_{1}$, $f_{2}$ and $f_{3}$ are distinct at $(\mu_{0}, \nu_{0})$. Without loss of generality, let $g = \tilde{f}_{1}$ at $(\mu_{0}, \nu_{0})$.
 Then, from the continuity of $\tilde{f_{1}}, f_{2}, f_{3}$, there exists an $\epsilon \; (> 0)$ ball $B_{\epsilon}$, centered at $(\mu_{0}, \nu_{0})$ and 
 with radius $\epsilon$ inside which $g = \tilde{f_{1}}$ holds. Since, inside $B_{\epsilon}$, $g$ is strictly monotonic with respect to $\mu$ and $\nu$,
 there exists $(\mu, \nu) \in \B_{\epsilon}$ such that $g(\mu, \nu) > g(\mu_{0}, \nu_{0})$. Hence, $(\mu_{0}, \nu_{0})$ is not a local maximum. 
 \item \emph{Case-II:} At $(\mu_{0}, \nu_{0})$, any two of $\tilde{f}_{1}$, $f_{2}$ and $f_{3}$ are equal and strictly greater than the remaining one. 
 The same arguments as Case-I apply here as well.
 \item \emph{Case-III:}At $(\mu_{0}, \nu_{0})$, any two of $\tilde{f}_{1}$, $f_{2}$ and $f_{3}$ are equal and strictly less than the remaining one.
 WLOG, let $\tilde{f}_{1} = f_{2} < f_{3}$. Let $\mathcal{C}(\mu, \nu)$ denote the continuous curve in $(\mu, \nu)$ plane whose each point satisfies $\tilde{f}_{1} = f_{2}$.
 Clearly, $(\mu_{0}, \nu_{0})$ also lies on the curve $\mathcal{C}$. Moreover, there are 
are an uncountably infinite number of points of $\mathcal{C}$ inside $B_{\epsilon}$, with $B_{\epsilon}$ defined as in Case-I. Due to the monotonicity of $g$ along $\mathcal{C}$, there exists $(\mu, \nu) \in \B_{\epsilon}$ such that $g(\mu, \nu) > g(\mu_{0}, \nu_{0})$. 
 Hence, $(\mu_{0}, \nu_{0})$ is not a local maximum. 
\end{itemize}
From claim (a) and the fact that at the boundary points $(\mu =1 \; \text{or} \; \nu =1)$, the objective $g$ evaluates to zero, 
we may restrict our search for the global maximizer of $g$ to set $\mathcal{D} = \lc (\mu, \nu) \; \middle| \; 1 \le  \mu \le B_{\mu}, 1 \le  \nu \le B_{\nu} \rc$.  
Then, from claim (b), uniqueness of $(\mu^{*}, \nu^{*}) \in D$ and Weierstrass theorem, it follows that $(\mu^{*}, \nu^{*})$ is indeed
the unique global maximizer of the continuous function $g$. Thus, the proof is complete.


\bibliographystyle{IEEEtran}
\bibliography{IEEEabrv,bibJournalList,CBDSBL_JLversion}

\end{document}